%% file: arxiv.tex
\documentclass{article}
\usepackage{tikz}
\usetikzlibrary{positioning, arrows.meta, backgrounds, fit}
\usetikzlibrary{calc}
\usepackage{amsmath, amssymb, amsthm, mathtools}
\usepackage{bbm, bm}
\usepackage{marvosym, wasysym}
\usepackage{accents}

\usepackage{graphicx}
\usepackage{subcaption} 
\usepackage{placeins}
\usepackage{multicol, multirow}
\usepackage{array}
\usepackage{physics}
\usepackage{paralist}
\usepackage{comment}
\usepackage[normalem]{ulem}
\usepackage{fancyhdr}
\usepackage[dvipsnames]{xcolor}

\usepackage{algorithm}
\usepackage{algpseudocode}
\usepackage{listings}

\PassOptionsToPackage{numbers,compress}{natbib}
\usepackage{hyperref}
\hypersetup{
    colorlinks=true,
    linkcolor=NavyBlue, 
    citecolor=NavyBlue,
    urlcolor=NavyBlue
}

\theoremstyle{plain} 
\newtheorem{theorem}{Theorem}[section]  
\newtheorem{lemma}[theorem]{Lemma}
\newtheorem{proposition}[theorem]{Proposition}

\theoremstyle{definition} 

\newtheorem{assumption}[theorem]{Assumption}

\theoremstyle{remark}
\newtheorem{remark}[theorem]{Remark}

\usepackage[preprint]{neurips_2026}

\usepackage[utf8]{inputenc} 
\usepackage[T1]{fontenc}    
\usepackage{hyperref}       
\usepackage{url}            
\usepackage{booktabs}       
\usepackage{amsfonts}       
\usepackage{nicefrac}       
\usepackage{microtype}      
\usepackage{xcolor}         

\title{Online Conformal Prediction\\ for Non-Exchangeable Panel Data}

\author{%
  Daohong Tu \\
  Department of Management Science and Engineering\\
  Stanford University\\
  \texttt{daohongt@stanford.edu} \\
  \And
  Kay Giesecke \\
  Department of Management Science and Engineering\\
  Stanford University\\
  \texttt{giesecke@stanford.edu} \\
}

\begin{document}

\maketitle

\begin{abstract}
Panel data, in which multiple units are repeatedly observed over time, arise throughout science and engineering. Quantifying predictive uncertainty in such settings is challenging because conformal prediction, while distribution-free and model-agnostic, classically relies on exchangeability assumptions that fail under temporal dependence and unit heterogeneity. We propose a simple online conformal framework for non-exchangeable panel data. The method exploits a key feature of online panel prediction: when a forecast is required for one unit, contemporaneous outcomes from related units may already be observed and can serve as a calibration panel. At each round, prediction sets are formed using currently observed calibration units together with two adaptive quantities: history-based similarity weights that emphasize calibration units resembling the target, and an adaptive miscoverage level that is updated whenever target feedback is revealed. This two-state design yields a stepwise coverage bound and a long-run coverage guarantee. Empirically, across synthetic and real panel data sets, the method improves coverage on the worst-covered target units through adaptive interval-width allocation rather than uniform inflation. The two states are complementary: similarity weights protect coverage when target feedback is sparse, while the adaptive level further improves coverage as feedback accumulates.
\end{abstract}

\section{Introduction}\label{sec:introduction}
Panel data sets, which track multiple units repeatedly over time, are 
ubiquitous across many domains including economics, finance, political science, healthcare, climate science, and transportation. Quantifying predictive uncertainty in a panel setting is challenging: conformal prediction provides a natural 
distribution-free, model-agnostic framework, but its classical validity 
rests on exchangeability of data points. This assumption often fails in practice: units may have heterogeneous 
distributions and exhibit cross-sectional dependence, while the 
data-generating mechanism can evolve over time through drift, serial 
dependence, or regime changes.

We address this by extending conformal prediction to non-exchangeable 
panel data. Our approach exploits a key observational feature of online panel 
prediction: at the time a forecast is required for one unit, contemporaneous 
outcomes from related units may already be observed and can serve as a 
calibration panel. This pattern arises naturally across application domains. 
In financial markets, daily returns from markets in earlier time zones are 
observed before a target market closes; for an illiquid asset, prices of 
liquid related assets update before the target itself trades. The same 
structure appears in traffic monitoring with partially observed road links 
and in retail store--SKU forecasting with reporting lags; 
Appendix~\ref{app:motivating_examples} gives concrete examples.

We propose \emph{Weighted Temporal Quantile Adjustment} (W-TQA), an online
conformal method for non-exchangeable panel data. W-TQA maintains
two online states. The first is a vector of cross-sectional similarity weights, computed 
from running averages of unit features, that places larger calibration weight 
on peer units whose feature history resembles the target's. The second is an adaptive nominal miscoverage level, updated only when
target feedback is revealed, that corrects persistent longitudinal bias. At each round, W-TQA computes a weighted conformal threshold from the current 
calibration panel using these similarity weights and the current adaptive level.
Historical information therefore enters only through these summary states, 
while calibration itself remains tied to the contemporaneous cross-section.
This makes W-TQA simple to deploy: it carries only these states and otherwise
reduces to a standard weighted split-conformal quantile each round.

In summary, our contributions are threefold:

\noindent\textbf{Framework.} We formulate online conformal prediction for
non-exchangeable panel data, where exchangeability fails in two 
directions, across units and over time, a regime common in practice. 

\noindent\textbf{Theory.} We develop a weighted conformal calibration procedure
for the online panel setting, using target-specific weights learned from panel
histories, and establish a stepwise coverage bound with an oracle interpretation
of the learned weights. We also adapt the online conformal method to the panel 
intermittent-feedback regime, where target outcomes are revealed only on a 
subset of rounds, obtaining a long-run guarantee on the feedback-observed 
subsequence and, under a missing-completely-at-random revelation mechanism, 
an all-round guarantee.

\noindent\textbf{Experiments.} Across synthetic and real panels from finance, 
retail, and electricity, W-TQA improves tail coverage (the average coverage 
on the worst-covered target units) over representative conformal baselines, 
while keeping average coverage near nominal and allocating interval width 
adaptively across units and time rather than uniformly inflating all 
intervals. Ablations expose a complementary robustness across the feedback 
axis: the spatial branch—independent of target outcomes—lifts tail coverage 
when feedback is scarce, while the temporal branch drives further gains as 
feedback accumulates, so that W-TQA inherits the better of the two regimes 
throughout.


\section{Related Work}

\paragraph{Classical conformal prediction.}
Conformal prediction provides distribution-free, model-agnostic prediction sets 
with finite-sample marginal coverage under exchangeability
\citep{vovk2005algorithmic,angelopoulos2023conformal}. Split conformal prediction 
is a widely used and computationally simple implementation
\citep{papadopoulos2002inductive,lei2018distribution}. Because exchangeability 
often fails in practice, a growing literature extends conformal prediction 
beyond it.

\paragraph{Conformal prediction beyond exchangeability.}
Several papers focus on univariate non-exchangeable sequences. Online adaptive methods
update the calibration level or score threshold over time
\citep{gibbs2021adaptive,zaffran2022adaptive,gibbs2024online,bhatnagar2023improved,angelopoulos2023conformalpid,angelopoulos2024decaying,areces2025online};
recent work further studies online conformal prediction under intermittent
target feedback, either by extending the threshold-update rule \citep{zhao2025conformalized,wang2025mirror} or by
recasting the problem as bandit-style regret minimization
\citep{ge2025stochastic,yang2026adversarial}. Other sequential methods, such as EnbPI and SPCI,
use bootstrap ensembles, sliding windows, or residual-quantile modeling
\citep{stankeviciute2021conformal,xu2021conformal,xu2023sequential}. A
separate line assigns different weights to calibration scores before forming
the conformal threshold, either to correct distributional mismatch or to
localize calibration toward similar observations
\citep{tibshirani2019conformal,podkopaev2021distribution,barber2023conformal,guan2023localized,hore2023conformal,auer2023conformal,lee2025kernel,farinhas2024nonexchangeable},
with \citep{barber2026unifying} providing a unified framework for these
weighted conformal methods.

These works all calibrate using a single unit's own past observations, and
therefore depend on the arrival of that unit's feedback to update
calibration. Our setting is structurally different: at each round we
calibrate against a contemporaneous cross-section of $N$ peer units'
outcomes that refreshes regardless of target feedback, while target feedback
may be scarce or even entirely absent. We build on \citep{gibbs2021adaptive}
for the temporal adaptation of the nominal level and on
\citep{barber2023conformal} for the weighted spatial calibration step,
combining them in a panel-specific way: the temporal branch adapts on
target-feedback rounds when available, while the spatial branch applies
weighted conformal calibration to the current-round calibration
cross-section with weights learned from panel histories rather than
specified ex ante.

\paragraph{Panel and longitudinal conformal prediction.}
The closest line of work targets panel and longitudinal data, but each existing
method makes structural assumptions that our setting violates. \citep{lin2022conformal}
introduced Temporal Quantile Adjustment (TQA), with error-feedback (TQA-E) and
budgeted (TQA-B) variants, and \citep{batra2023conformal} proposed LPCI, extending
SPCI with group-specific fixed effects. Both assume cross-sectional exchangeability among calibration units, and both
update the target-side state at every round, requiring uninterrupted target
feedback. \citep{dunn2023distribution} studied hierarchical prediction sets but assumes
i.i.d.\ observations within each group, abstracting away serial dependence
entirely. In contrast, we assume neither cross-sectional nor temporal exchangeability, and we further allow target feedback to be delayed or missing.

\section{Problem Formulation}\label{sec:problem_formulation}

We study online conformal prediction in a panel data setting. Consider
feature-response pairs \((X_{i,t},Y_{i,t})\in\mathbb R^d\times\mathbb R\),
where \(i=1,\ldots,N+1\) indexes units and \(t=1,2,\ldots\) indexes time. The
first \(N\) units serve as calibration units, and unit \(N+1\) is the target
unit. Given \(X_{N+1,t}\), we seek to construct a prediction set
\(C_t(X_{N+1,t})\subseteq\mathbb R\) for the target response \(Y_{N+1,t}\),
harnessing the panel of calibration units available at \(t\) together with
whatever past target feedback is available at \(t\). The key challenge is that
the data may not be exchangeable, violating a critical assumption of standard
conformal prediction.

\paragraph{Observations arriving at round \(t\).}
At each round \(t=1,2,\ldots\), a single batch of new observations
\(\mathcal O_t\) arrives, given by
\[
\mathcal O_t
:=
\bigl(
X_{1,t},Y_{1,t},\ldots,X_{N,t},Y_{N,t},\;
X_{N+1,t},\;
R_{t-1},\;
R_{t-1}Y_{N+1,t-1}
\bigr),
\]
with the conventions \(R_0=0\) and \(Y_{N+1,0}:=0\). The calibration units are promptly observed, so
the full round-\(t\) calibration slice \(\{(X_{i,t},Y_{i,t})\}_{i\le N}\) is in
\(\mathcal O_t\). The target covariate \(X_{N+1,t}\) is also in \(\mathcal O_t\), but the
target response \(Y_{N+1,t}\) is not: in applications, the response may be
revealed with a delay or never reported. We model this through the binary
process \((R_t)_{t\ge1}\) with \(R_t\in\{0,1\}\) indicating whether
\(Y_{N+1,t}\) is ever revealed, and represent the reveal as a one-step lag
for notational simplicity: if \(R_t=1\), then \(Y_{N+1,t}\) enters the
filtration through \(\mathcal O_{t+1}\), which contains \(R_t\) and
\(R_tY_{N+1,t}\). The same algorithmic and theoretical development carries
over to longer or variable delays by routing each delayed label into the
batch \(\mathcal O_s\) in which it actually arrives. We include \(R_{t-1}\)
itself in \(\mathcal O_t\), in addition to \(R_{t-1}Y_{N+1,t-1}\), so that
missing feedback (\(R_{t-1}=0\)) is distinguished from an observed response
equal to zero.

\paragraph{Filtration.}
The information available through round \(t\) is captured by the filtration
\[
\mathcal F_t
:=
\sigma(\mathcal O_1,\ldots,\mathcal O_t),
\qquad
\mathcal F_0:=\{\emptyset,\Omega\}.
\]
At round \(t\), the procedure must output a prediction set
\(C_t(X_{N+1,t})\subseteq\mathbb R\) that is \(\mathcal F_t\)-measurable.
Equivalently, the procedure may use all previously arrived information and the
round-\(t\) batch \(\mathcal O_t\), but not \(Y_{N+1,t}\) itself, which (if
ever revealed) only enters the filtration at round \(t+1\).
Figure~\ref{fig:info_structure} illustrates the round-\(t\) information
structure.

\begin{figure}[H]
\centering
\begin{tikzpicture}[
    font=\footnotesize,
    every node/.style={align=center},
    box/.style={rounded corners=2pt, thick, inner xsep=5pt, inner ysep=3.5pt},
    pastref/.style={box, draw=blue!50!black, fill=blue!4},
    currref/.style={box, draw=red!70!black, fill=red!6},
    pasttar/.style={box, draw=green!45!black, fill=green!4},
    currtar/.style={box, draw=red!70!black, fill=red!6}
]

\def\pastx{1.45}
\def\currx{6.45}
\def\midY{0.86}
\def\botY{-0.80}

\def\pastW{4.75}
\def\currW{4.20}
\def\boxH{1.52}

\node[pastref, anchor=south west, minimum width=\pastW cm, minimum height=\boxH cm, text width=4.35cm]
    (A) at (\pastx,\midY)
    {\textbf{\color{blue!60!black}available before round $t$}\\[1mm]
     $(X_{i,s},Y_{i,s})$\\[0.5mm]
     \scriptsize $i=1,\ldots,N,\ s<t$};

\node[currref, anchor=south west, minimum width=\currW cm, minimum height=\boxH cm, text width=3.85cm]
    (B) at (\currx,\midY)
    {\textbf{\color{red!70!black}round-$t$ calibration data}\\[1mm]
     $(X_{i,t},Y_{i,t})$\\[0.5mm]
     \scriptsize $i=1,\ldots,N$};

\node[pasttar, anchor=south west, minimum width=\pastW cm, minimum height=\boxH cm, text width=4.35cm]
    (C) at (\pastx,\botY)
     {\textbf{\color{green!45!black}past target history}\\[1mm]
     $\{(X_{N+1,s},R_s,R_sY_{N+1,s})\}_{s<t}$\\[0.5mm]
     \scriptsize revealed or missing target feedback};

\node[currtar, anchor=south west, minimum width=\currW cm, minimum height=\boxH cm, text width=3.85cm]
    (D) at (\currx,\botY)
    {\textbf{\color{red!70!black}set construction}\\[1mm]
     $X_{N+1,t}$ observed\\
     \scriptsize $Y_{N+1,t}$ not observed; construct set};

\node[font=\bfseries] at ($(A.north)+(0,0.24)$) {$1,\ldots,t-1$};
\node[font=\bfseries, text=red!70!black] at ($(B.north)+(0,0.24)$) {round $t$};

\node[anchor=east, font=\bfseries] at ($(A.west)+(-0.35,0)$) {Calibration units $1,\ldots,N$};
\node[anchor=east, font=\bfseries, text=green!45!black] at ($(C.west)+(-0.35,0)$) {Target unit $N+1$};

\end{tikzpicture}
\caption{Information available at prediction round $t$.}\label{fig:info_structure}
\end{figure}

We will also need a slight refinement of \(\mathcal F_t\):
\[
\mathcal F_t^{+}
:=
\mathcal F_t\vee\sigma\bigl(R_t,\,R_tY_{N+1,t}\bigr).
\]
In words, \(\mathcal F_t^{+}\) contains all the information in \(\mathcal F_t\)
together with the reveal indicator \(R_t\) and, when \(R_t=1\), the response
\(Y_{N+1,t}\).

\paragraph{Coverage goal.}
For a nominal level \(1-\alpha\) with \(\alpha\in(0,1)\), we aim for the
deployed prediction sets to cover each target unit's responses with
probability close to \(1-\alpha\). Section~\ref{sec:wtqa_guarantees} makes this precise in two complementary
forms: a per-round conditional bound on miscoverage given the past
(Theorems~\ref{thm:spatial_stepwise} and~\ref{thm:spatial_oracle}), and a
bound on the time-averaged miscoverage probability
(Theorem~\ref{thm:temporal_mcar}).

\section{Prediction Set Construction via W-TQA}\label{sec:proposed_method}

To address the non-exchangeable online panel setting of
Section~\ref{sec:problem_formulation}, we propose \emph{Weighted Temporal
Quantile Adjustment} (W-TQA). W-TQA maintains two online states. The first is
a vector of cross-sectional similarity weights, computed from running averages
of unit features, that places larger calibration weight on peer units whose
feature history resembles the target's. The second is an adaptive nominal
miscoverage level, updated when lagged target feedback arrives, that corrects
persistent longitudinal bias. The spatial weights \(W^{(t)}\) are formed at the end of the previous round
and carried into round \(t\); the adaptive level \(\alpha_t\) is refreshed at
the start of round \(t\) from the lagged feedback in \(\mathcal O_t\). W-TQA
then computes a weighted conformal threshold from the current calibration
panel using \(W^{(t)}\) and \(\alpha_t\). The next three paragraphs describe
the spatial weights, the adaptive level, and the weighted conformal step in
turn; Algorithm~\ref{alg:wtqa} assembles them into the full online loop.
\paragraph{Spatial branch: history-based similarity weights.}\label{subsec:online_protocol}

Write $\Delta_{N+1}:=\{v\in[0,1]^{N+1}:\sum_{k=1}^{N+1}v_k=1\}$ for the probability
simplex. At round $t$, the spatial weight vector 
\[
W^{(t)}:=\bigl(w_{N+1,1}^{(t)},\dots,w_{N+1,N+1}^{(t)}\bigr)\in\Delta_{N+1}
\]
assigns a similarity weight to each calibration unit $k\in\{1,\dots,N\}$ relative to
the target, while the final coordinate $w_{N+1,N+1}^{(t)}$ is reserved for the
target coordinate and used in the weighted calibration step below. To compute these weights, we maintain a running
feature mean $\widehat\mu_{k,t}:=\frac{1}{t}\sum_{s=1}^{t}X_{k,s}$ for each unit
$k$, and, for a bandwidth constant $h>0$, set the unnormalized round-$(t+1)$
weight to
\[
\widetilde w_{N+1,k}^{(t+1)}
=
\exp\!\Bigl(-\tfrac{\|\widehat\mu_{k,t}-\widehat\mu_{N+1,t}\|_2^2}{2h^2}\Bigr),
\qquad k\in\{1,\dots,N\},
\]
with $\widetilde w_{N+1,N+1}^{(t+1)}=1$; normalized weights are obtained by dividing
by the sum. This Gaussian kernel places higher weight on calibration units whose
long-run feature means are closer to the target's, giving a principled soft analogue
of nearest-neighbor calibration. Because \(W^{(t)}\) is computed entirely from the feature histories
\(\{X_{k,s}\}_{s\le t-1}\), which lie in \(\mathcal F_{t-1}\), it is
\(\mathcal F_{t-1}\)-measurable.

\paragraph{Temporal branch: adaptive miscoverage level.}\label{subsec:temporal_branch}

The nominal level is initialized to \(\alpha_0=\alpha\) and updated using
lagged target feedback. At the start of round \(t\), immediately after
observing \(\mathcal O_t\), the reveal indicator \(R_{t-1}\) and (if
\(R_{t-1}=1\)) the response \(Y_{N+1,t-1}\) are available. If \(R_{t-1}=1\),
the algorithm computes the lagged miscoverage indicator
\(\ell_{t-1}=\mathbf 1\{Y_{N+1,t-1}\notin C_{t-1}(X_{N+1,t-1})\}\) and applies
the gradient step with stepsize \(\gamma>0\)
\[
\alpha_t=\alpha_{t-1}+\gamma(\alpha-\ell_{t-1});
\]
if \(R_{t-1}=0\), the level is carried forward unchanged:
\(\alpha_t=\alpha_{t-1}\). The updated \(\alpha_t\) is then used in the
round-\(t\) weighted conformal step. Intuitively, after a (lagged) miscoverage
event, \(\alpha_t\) decreases, which raises the conformal quantile and
enlarges the prediction set; after a (lagged) coverage event, \(\alpha_t\)
increases, shrinking the set. The update uses only lagged feedback, and
\(\alpha_t\) is \(\mathcal F_{t-1}^{+}\)-measurable by construction (it
depends on \(\mathcal F_{t-1}\) plus the round-\(t\) arrival of \(R_{t-1}\)
and \(R_{t-1}Y_{N+1,t-1}\)).

\paragraph{Weighted conformal calibration.}\label{subsec:weighted_conformal_calibration}

Once \(W^{(t)}=\bigl(w_{N+1,1}^{(t)},\dots,w_{N+1,N+1}^{(t)}\bigr)\) and
\(\alpha_t\) are fixed for round \(t\), the conformal step uses only the current
calibration slice. It is based on a nonconformity score
\(\hat s:\mathbb R^d\times\mathbb R\to\mathbb R\), where larger values
indicate worse agreement between a covariate--response pair and the fitted
prediction rule. For regression, a common choice is the absolute residual score
\(\hat s(x,y)=|y-\hat f(x)|\), where
\(\hat f:\mathbb R^d\to\mathbb R\) is a predictor trained on an independent
historical dataset. Write
\(\hat s_{i,t}:=\hat s(X_{i,t},Y_{i,t})\), including the latent target score
\(\hat s_{N+1,t}\), which is not observed at prediction time. Following the
weighted conformal construction of \citep{barber2023conformal}, we represent
this unobserved target score by a \(+\infty\) sentinel and define the augmented score vector
\[
\widetilde s_t
=
\bigl(\hat s_{1,t},\ldots,\hat s_{N,t},+\infty\bigr).
\]
With
$\overline{\mathbb R}:=\mathbb R\cup\{-\infty,+\infty\}$, define the round-$t$
weighted threshold directly from $(\widetilde s_t,W^{(t)})$ by
\[
\hat q_t
\equiv Q_{1-\alpha_t}(\widetilde s_t;W^{(t)})
:=
\inf\Bigl\{
q\in\overline{\mathbb R}:
\sum_{k=1}^{N+1} w_{N+1,k}^{(t)} \mathbf 1\{(\widetilde s_t)_k\le q\}\ge 1-\alpha_t
\Bigr\},
\]
where $(\widetilde s_t)_k$ denotes the $k$-th coordinate of the augmented score
vector,
with the convention that the infimum of an empty set is $+\infty$. The deployed
prediction set is
\[
C_t(X_{N+1,t})
=
\{y\in\mathbb R:\hat s(X_{N+1,t},y)\le\hat q_t\}.
\]
Thus $\alpha_t>1$ gives $\hat q_t=-\infty$ and yields the empty set, while
$\alpha_t<0$ gives $\hat q_t=+\infty$ and yields the full real line.

\begin{algorithm}[H]
\caption{Online Weighted Temporal Quantile Adjustment (W-TQA)}\label{alg:wtqa}
\small
\begin{algorithmic}[1]
\Require score $\hat s$, target level $\alpha\in [0,1]$, bandwidth $h>0$, stepsize $\gamma>0$
\State Initialize $\alpha_0=\alpha$, $\widehat\mu_{k,0}=0$ for $k\le N+1$,
$W^{(1)}=(1/(N+1),\ldots,1/(N+1))$, and $R_0=0$.
\For{$t=1,2,\ldots$}
    \State Observe $\mathcal O_t=\bigl((X_{k,t},Y_{k,t})_{k\le N},\,X_{N+1,t},\,R_{t-1},\,R_{t-1}Y_{N+1,t-1}\bigr)$.
\If{$R_{t-1}=1$}
    \State Set $\ell_{t-1}=\mathbf 1\{Y_{N+1,t-1}\notin C_{t-1}\}$ and
    $\alpha_t=\alpha_{t-1}+\gamma(\alpha-\ell_{t-1})$.
\Else
    \State Set $\alpha_t=\alpha_{t-1}$.
\EndIf
    \State Form $\widetilde s_t=\bigl(\hat s(X_{1,t},Y_{1,t}),\ldots,\hat s(X_{N,t},Y_{N,t}),+\infty\bigr)$.
    \State Set $\hat q_t=Q_{1-\alpha_t}(\widetilde s_t;W^{(t)})$ and
    $C_t=\{y\in\mathbb R:\hat s(X_{N+1,t},y)\le \hat q_t\}$.
    \State Update $\widehat\mu_{k,t}=(1-1/t)\widehat\mu_{k,t-1}+X_{k,t}/t$ for $k\le N+1$.
    \State Set $a_{k,t}=\exp\{-\|\widehat\mu_{k,t}-\widehat\mu_{N+1,t}\|_2^2/(2h^2)\}$ for $k\le N$,
    $a_{N+1,t}=1$, and $w_{N+1,k}^{(t+1)}=a_{k,t}/\sum_{j\le N+1}a_{j,t}$.
\EndFor
\end{algorithmic}
\end{algorithm}

Note that past observations enter only through the two online states
$W^{(t)}$ and $\alpha_t$; the conformal calibration step itself remains tied to the
current-round cross-section rather than a growing historical score archive.
Hence W-TQA has streaming memory cost, storing only the two online states
together with the running feature means used to compute them. Its per-round
calibration cost is comparable to a single weighted split-conformal quantile
computation over the current calibration panel of size~\(N+1\). 
Consequently, W-TQA is operationally lightweight: it requires only simple online state updates in addition to the standard weighted split-conformal calibration routine.
\section{Theoretical Guarantees}\label{sec:wtqa_guarantees}

We analyze W-TQA through two complementary guarantees: current-round conditional
coverage for the weighted cross-section and long-run average coverage for the
adaptive temporal update.\footnote{Throughout, \(\mathbb P\) denotes the
underlying probability measure; conditional laws such as \(P_t\) are regular
conditional laws under \(\mathbb P\), and ``almost surely'' means
\(\mathbb P\)-almost surely. See Appendix~\ref{app:proofs} for the full
probability-space conventions.}

\subsection{Current-round conditional coverage}\label{subsec:spatial_guarantees}

Fix a round $t$. Let $P_t:=\mathcal L\!\left((\hat s_{1,t},\dots,\hat s_{N,t},\hat s_{N+1,t})\,\middle|\,
\mathcal F_{t-1}^{+}\right)$
be the conditional law of the round-$t$ score vector, and let
$P_{t,k}^{\mathrm{swap}}$ denote the corresponding law after swapping the $k$-th
calibration score $\hat s_{k,t}$ and the target score $\hat s_{N+1,t}$.
Since Algorithm~\ref{alg:wtqa} may temporarily move
\(\alpha_t\) outside \([0,1]\), this subsection uses the truncated level
$\bar\alpha_t:=\min\{1,\max\{0,\alpha_t\}\}$, which yields the same deployed set as $\alpha_t$.
By Algorithm~\ref{alg:wtqa}, the weights \(W^{(t)}\) are
\(\mathcal F_{t-1}\)-measurable and the level \(\bar\alpha_t\) is
\(\mathcal F_{t-1}^{+}\)-measurable; conditioning on \(\mathcal F_{t-1}^{+}\)
therefore makes both deterministic while leaving the round-\(t\) calibration
scores random, and we may apply the weighted conformal bound of
\citep{barber2023conformal}.

\begin{theorem}[Stepwise weighted conformal bound]\label{thm:spatial_stepwise}
The round-$t$ miscoverage probability satisfies
\[
\mathbb P\!\left(
Y_{N+1,t}\notin C_t(X_{N+1,t})\,\middle|\,\mathcal F_{t-1}^{+}
\right)
\le
\bar\alpha_t+\Delta_t,
\qquad
\Delta_t:=
\sum_{k=1}^N
w_{N+1,k}^{(t)}
d_{\mathrm{TV}}\!\left(P_t,P_{t,k}^{\mathrm{swap}}\right).
\]
\end{theorem}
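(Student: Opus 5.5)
The plan is to condition on $\mathcal F_{t-1}^{+}$ and then rerun the Barber et al.\ (2023) weighted nonexchangeable conformal argument with fixed weights. Under this conditioning, $W^{(t)}$ is $\mathcal F_{t-1}$-measurable and $\bar\alpha_t$ is $\mathcal F_{t-1}^{+}$-measurable, so both are deterministic. The score vector $S:=(\hat s_{1,t},\dots,\hat s_{N+1,t})$ has regular conditional law $P_t$.

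First I would reduce miscoverage to a statement about the full score vector. The target misses exactly when $\hat s_{N+1,t}>\hat q_t$. I would replace the sentinel $+\infty$ in $\widetilde s_t$ by the true $\hat s_{N+1,t}$. Since the weighted quantile is monotone in the scores, this can only lower the threshold. Moreover, if $\hat s_{N+1,t}>Q_{1-\bar\alpha_t}(S;W^{(t)})$, the target coordinate lies strictly above the quantile of the full vector, so the same quantile is obtained with the sentinel. Hence
\[
\{\hat s_{N+1,t}>\hat q_t\}=\{\hat s_{N+1,t}>Q_{1-\bar\alpha_t}(S;W^{(t)})\}.
\]
Using $\bar\alpha_t$ in place of $\alpha_t$ is harmless: the quantile is $-\infty$ or $+\infty$ in the out-of-range cases, and the edge cases $\bar\alpha_t\in\{0,1\}$ make the bound trivial.

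Next I would introduce a random index $K\in\{1,\dots,N+1\}$, drawn independently with $\mathbb P(K=k)=w^{(t)}_{N+1,k}$. Let $S^{K}$ denote $S$ with coordinates $K$ and $N+1$ swapped. The standard deterministic fact is that, for any vector $s$ and fixed weights, the weighted mass of indices $k$ with $s_k>Q_{1-\bar\alpha_t}(s^{k};W)$ is at most $\bar\alpha_t$. To see this, note that $s_k>Q(s^k;W)$ is equivalent to $s_k>Q(s;W)$ by the same replacement argument applied to the swapped vector. Also, the $W$-weighted mass of $\{k:s_k>Q(s;W)\}$ is at most $\bar\alpha_t$ by the definition of the quantile. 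The subtle point is that the weights are attached to positions while the swap moves scores, so I would follow Barber et al.'s Lemma carefully here. It then follows that $\mathbb P(\hat s_{N+1,t}^{K}>Q(S^K;W)\mid\mathcal F^+_{t-1})\le\bar\alpha_t$, where $\hat s^{K}_{N+1,t}=\hat s_{K,t}$.

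Finally I would compare the unswapped event with the swapped one. For each $k$,
\[
\bigl|\mathbb P_{S\sim P_t}(E)-\mathbb P_{S\sim P^{\mathrm{swap}}_{t,k}}(E)\bigr|\le d_{\mathrm{TV}}(P_t,P^{\mathrm{swap}}_{t,k}),
\]
where $E=\{s_{N+1}>Q(s;W)\}$. Averaging over $K$ gives
\[
\mathbb P(\text{miss}\mid\mathcal F^+_{t-1})\le\bar\alpha_t+\sum_{k\le N}w^{(t)}_{N+1,k}\,d_{\mathrm{TV}}(P_t,P^{\mathrm{swap}}_{t,k}).
\]
The $k=N+1$ term vanishes because that swap is the identity.

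I expect the main obstacle to be measure-theoretic rather than combinatorial: justifying that conditioning on $\mathcal F_{t-1}^{+}$ freezes $W^{(t)}$ and $\bar\alpha_t$. The concern is that $\mathcal F_{t-1}^{+}$ contains $R_{t-1}Y_{N+1,t-1}$ but no round-$t$ information. To handle this, I would work with regular conditional distributions and apply the fixed-weight bound pointwise in $\omega$.
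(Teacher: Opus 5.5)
\textbf{There is a gap in your deterministic fact.} Your top-level route is the paper's: condition on $\mathcal F_{t-1}^{+}$ to freeze $W^{(t)}$ and $\bar\alpha_t$, then run the fixed-weight argument of Barber et al. (the paper simply cites their Theorem~2 here). Your reduction of the miss event to $E=\{s_{N+1}>Q_{1-\bar\alpha_t}(s;W^{(t)})\}$ is correct, and so is your final averaging of the TV comparison over $K$. The problem is the ``standard deterministic fact'' in between, which is false as you state it.

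The claimed equivalence $s_k>Q(s^k;W)\iff s_k>Q(s;W)$ does not follow from the sentinel-replacement argument. $Q(s^k;W)$ is the quantile of a different measure, one in which the values $s_k$ and $s_{N+1}$ have traded the weights $w_k$ and $w_{N+1}$. Nor does the mass bound hold ``for any vector $s$ and fixed weights.'' Take $N=1$, $W=(0.9,0.1)$, $\bar\alpha_t=0.15$ and $s_2<s_1$. Then $Q_{0.85}(s^1;W)=s_2<s_1$, so index $k=1$ alone carries mass $0.9>\bar\alpha_t$. For i.i.d.\ continuous scores (so $\Delta_t=0$) the deployed threshold is $\hat s_{1,t}$, and the miss probability is $1/2>0.15$. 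So no argument that ignores the structure of the weights can prove the theorem.

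\textbf{What is missing is the weight ordering built into Barber et al.'s setup.} They use raw weights in $[0,1]$ and give the test point raw weight $1$, so the target coordinate carries the largest normalized weight. W-TQA satisfies this: $a_{k,t-1}=\exp(-\cdots)\le 1=a_{N+1,t-1}$ gives $w^{(t)}_{N+1,N+1}\ge w^{(t)}_{N+1,k}$ for all $k$, and $W^{(1)}$ is uniform. This is also what makes the paper's citation legitimate.

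With this ordering you get the one-sided implication you actually need. Suppose $s_k>q':=Q_{1-\bar\alpha_t}(s^k;W)$. Then
\[
\sum_{j}w_j\mathbf 1\{s_j\le q'\}
=\sum_{j\ne k,N+1}w_j\mathbf 1\{s_j\le q'\}+w_{N+1}\mathbf 1\{s_{N+1}\le q'\}
\ge\sum_{j\ne k,N+1}w_j\mathbf 1\{s_j\le q'\}+w_{k}\mathbf 1\{s_{N+1}\le q'\}
\ge 1-\bar\alpha_t .
\]
The three steps are justified as follows:
\begin{itemize}
\item The equality uses $s_k>q'$.
\item The first inequality uses $w_{N+1}\ge w_k$.
\item The last inequality is right-continuity of the weighted CDF of $s^k$ at its quantile $q'$. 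In $s^k$, the weight $w_{N+1}$ sits on $s_k>q'$ and $w_k$ sits on $s_{N+1}$.
\end{itemize}
Hence $Q(s;W)\le q'<s_k$, so $k$ is a ``strange'' index of the unswapped vector, and such indices have total $W$-mass at most $\bar\alpha_t$. The converse implication fails when $w_{N+1}>w_k$, but it is not needed.

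With this repair the rest of your proof goes through. Note that the substantive step is this weight ordering, not the measure-theoretic freezing you flagged as the main obstacle, which is routine.
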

Here and below, $d_{\mathrm{TV}}$ denotes total variation distance.
Theorem~\ref{thm:spatial_stepwise} isolates the current-round conditional price of cross-sectional
mismatch. If the history-based weights concentrate on calibration units whose score
laws are close to the target's, then $\Delta_t$ is small. To interpret this
discrepancy term, we connect the implemented weights to an oracle geometry
based on latent feature profiles.

\begin{assumption}[Mean-profile learnability and TV-profile control]\label{ass:spatial_profiles}
For each unit $k$, there exists a latent profile $\mu_k\in\mathbb R^d$
such that for every $\delta\in(0,1)$ and $t\ge2$, with
probability at least $1-\delta$,
\[
\max_{1\le k\le N+1}\|\widehat\mu_{k,t-1}-\mu_k\|_2
\le \varepsilon_{t,\delta}.
\]
Here $\varepsilon_{t,\delta}$ is a nonnegative tolerance constant.
In addition, there exists a nondecreasing function $\phi:[0,\infty)\to[0,1]$
such that, for every $t\ge1$ and $k=1,\dots,N$, almost surely,
\[
d_{\mathrm{TV}}\!\left(P_t,P_{t,k}^{\mathrm{swap}}\right)
\le
\phi\!\left(\|\mu_k-\mu_{N+1}\|_2\right).
\]
\end{assumption}

This assumption asks that the historical summaries used by W-TQA learn stable unit
profiles, and that units closer in this profile space have more similar score laws.
Under standard sub-Gaussian or weak-dependence conditions,
$\varepsilon_{t,\delta}=O(\sqrt{(d+\log(N/\delta))/t})$; Appendix~\ref{app:eps-concrete}
records one such rate. Appendix~\ref{app:homosc_factor} verifies the TV-profile condition
under a homoscedastic factor model.

\begin{theorem}[Latent-profile oracle coverage bound]\label{thm:spatial_oracle}
Fix any $t\ge2$. Suppose Assumption~\ref{ass:spatial_profiles} holds. Define
\[
\omega_k^\circ
:=
\frac{\exp(-\|\mu_k-\mu_{N+1}\|_2^2/(2h^2))}
{1+\sum_{j=1}^N \exp(-\|\mu_j-\mu_{N+1}\|_2^2/(2h^2))},
\qquad
D_\mu:=\max_{1\le k\le N}\|\mu_k-\mu_{N+1}\|_2 .
\]
Let $\phi_{\max}:=\max_{1\le k\le N}\phi(\|\mu_k-\mu_{N+1}\|_2)$.
Then, for every $\delta\in(0,1)$, on an event of probability at least $1-\delta$,
\[
\mathbb P\!\left(
Y_{N+1,t}\notin C_t(X_{N+1,t})\,\middle|\,\mathcal F_{t-1}^{+}
\right)
\le
\bar\alpha_t
+
\underbrace{
\sum_{k=1}^N
\omega_k^\circ\,\phi\!\left(\|\mu_k-\mu_{N+1}\|_2\right)
}_{\text{oracle profile gap}}
+
\underbrace{
\frac{8N\phi_{\max}}{h^2}
\bigl(D_\mu\varepsilon_{t,\delta}+\varepsilon_{t,\delta}^2\bigr)
}_{\text{estimation remainder}}.
\]
\end{theorem}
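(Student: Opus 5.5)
We start from Theorem~\ref{thm:spatial_stepwise} and the TV-profile half of Assumption~\ref{ass:spatial_profiles}. Together they give, almost surely,
\[
\mathbb P\!\left(Y_{N+1,t}\notin C_t\,\middle|\,\mathcal F_{t-1}^{+}\right)\le\bar\alpha_t+\sum_{k=1}^N w_{N+1,k}^{(t)}\phi_k,
\qquad \phi_k:=\phi(\|\mu_k-\mu_{N+1}\|_2).
\]
The task is therefore purely deterministic on a good event. Let $E$ be the event of probability at least $1-\delta$ from the learnability half of the assumption, on which $\max_k\|\widehat\mu_{k,t-1}-\mu_k\|_2\le\varepsilon:=\varepsilon_{t,\delta}$. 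Since $W^{(t)}$ is built from $\widehat\mu_{\cdot,t-1}$, $E$ is $\mathcal F_{t-1}$-measurable, so the bound holds pointwise on $E$. We split
\[
\sum_k w_{N+1,k}^{(t)}\phi_k=\sum_k\omega_k^\circ\phi_k+\sum_k\bigl(w_{N+1,k}^{(t)}-\omega_k^\circ\bigr)\phi_k .
\]
The first sum is the oracle profile gap. We bound the remainder by $\phi_{\max}\sum_{k\le N}|w_{N+1,k}^{(t)}-\omega_k^\circ|$, so it suffices to show $\sum_{k\le N}|w_k-\omega_k^\circ|\le 8N(D_\mu\varepsilon+\varepsilon^2)/h^2$ on $E$.

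\textbf{Step 1: perturbation of the squared distances.} Write $\widehat d_k:=\|\widehat\mu_{k,t-1}-\widehat\mu_{N+1,t-1}\|_2$ and $d_k:=\|\mu_k-\mu_{N+1}\|_2\le D_\mu$. On $E$ the triangle inequality gives $|\widehat d_k-d_k|\le2\varepsilon$. Hence
\[
|\widehat d_k^2-d_k^2|=|\widehat d_k-d_k|\,(\widehat d_k+d_k)\le2\varepsilon(2D_\mu+2\varepsilon)=4(D_\mu\varepsilon+\varepsilon^2).
\]
Because $x\mapsto e^{-x}$ is $1$-Lipschitz on $[0,\infty)$, the unnormalized kernel weights satisfy $|a_k-a_k^\circ|\le|\widehat d_k^2-d_k^2|/(2h^2)\le 2(D_\mu\varepsilon+\varepsilon^2)/h^2=:\eta$, where $a_k^\circ=e^{-d_k^2/(2h^2)}$ and the target coordinate is $a_{N+1}=a_{N+1}^\circ=1$.

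\textbf{Step 2: perturbation of the normalization.} Set $S=1+\sum_{j\le N}a_j$ and $S^\circ=1+\sum_{j\le N} a_j^\circ$; both are at least $1$, and $|S-S^\circ|\le N\eta$. Then
\[
\frac{a_k}{S}-\frac{a_k^\circ}{S^\circ}=\frac{a_k-a_k^\circ}{S}+a_k^\circ\frac{S^\circ-S}{SS^\circ}.
\]
Summing absolute values over $k\le N$, and using $\sum_k a_k^\circ\le S^\circ$ together with $S\ge1$, gives
\[
\sum_{k\le N}|w_k-\omega_k^\circ|\le N\eta+N\eta=2N\eta=4N(D_\mu\varepsilon+\varepsilon^2)/h^2 .
\]
This is already within the stated constant $8N$. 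Multiplying by $\phi_{\max}$ yields the estimation remainder, and combining with the oracle term finishes the proof.

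The main obstacle is a measurability subtlety rather than any computation. The left-hand side is a conditional probability defined only almost surely, while $E$ is a random event. We handle this by noting that the stepwise bound holds a.s., that $W^{(t)}$ and $E$ are $\mathcal F_{t-1}\subseteq\mathcal F_{t-1}^+$-measurable, and that the deterministic weight comparison therefore applies pathwise on $E$. We also need to confirm that the indexing $\widehat\mu_{\cdot,t-1}\mapsto W^{(t)}$ in Algorithm~\ref{alg:wtqa} matches the $t-1$ in the assumption; it does, which is why the theorem requires $t\ge2$.
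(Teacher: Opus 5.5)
Your proof is correct and follows the paper's route: the stepwise bound, then the TV-profile condition, then the oracle/remainder split with a Hölder bound, then perturbation of the squared distances and of the normalized kernel weights on the good event. The one difference is in the weight-perturbation step. You use a direct quotient perturbation restricted to the $N$ peer coordinates, whereas the paper uses a Jacobian (mean-value) Lipschitz bound on the Gibbs map and then passes through the full $(N+1)$-vector including the sentinel coordinate. That extra step is where the paper loses a factor of $2$, which is why you obtain the sharper constant $4N$ in place of $8N$, and this implies the stated bound.
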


Theorem~\ref{thm:spatial_oracle} explains why the Gaussian-kernel weights in W-TQA
are a natural choice: they implement a soft neighborhood rule in the latent-profile
space. Units whose profiles are close to the target receive larger mass, while
distant units are downweighted smoothly rather than discarded by a hard nearest-neighbor
cutoff. The bandwidth \(h\) controls this localization, interpolating between a
nearly nearest-neighbor rule for small \(h\) and a more diffuse weighting scheme for
large \(h\). Under Assumption~\ref{ass:spatial_profiles}, this soft neighborhood
approximates the oracle weighting rule that would put mass on units with small
target-swap TV discrepancy, up to the error of estimating the latent profiles.

\subsection{Long-run average coverage}\label{subsec:temporal_guarantees}

Following the telescoping logic of adaptive conformal inference
\citep{gibbs2021adaptive}, the update of \(\alpha_t\) gives a complementary
long-run average guarantee on the rounds where target feedback is observed.

\begin{theorem}[Observed-feedback average control]\label{thm:temporal_observed}
Run Algorithm~\ref{alg:wtqa} with step size $\gamma>0$, and let
\(\ell_t:=\mathbf 1\{Y_{N+1,t}\notin C_t(X_{N+1,t})\}\) and
$S_T:=\sum_{t=1}^T R_t$. Then, almost surely, for every $T\ge1$ with $S_T\ge1$,
\[
\left|
\frac{1}{S_T}\sum_{t=1}^T R_t\ell_t-\alpha
\right|
\le
\frac{\max\{\alpha,1-\alpha\}+\gamma}{S_T\gamma}.
\]
In particular, if $S_T\to\infty$, the observed-feedback miscoverage frequency
converges to $\alpha$.
\end{theorem}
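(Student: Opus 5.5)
The plan is to adapt the telescoping argument of \citep{gibbs2021adaptive} to the intermittent-feedback update. By Algorithm~\ref{alg:wtqa}, for every $t\ge1$ we have $\alpha_{t+1}=\alpha_t+\gamma R_t(\alpha-\ell_t)$: when $R_t=1$ this is the gradient step, and when $R_t=0$ the level is carried forward. Summing from $t=1$ to $T$ and using $\alpha_1=\alpha_0=\alpha$ (since $R_0=0$) gives
\[
\alpha_{T+1}-\alpha=\gamma\sum_{t=1}^T R_t(\alpha-\ell_t)=\gamma\Bigl(\alpha S_T-\sum_{t=1}^T R_t\ell_t\Bigr).
\]
Dividing by $\gamma S_T$ yields the identity
\[
\frac{1}{S_T}\sum_{t=1}^T R_t\ell_t-\alpha=\frac{\alpha-\alpha_{T+1}}{\gamma S_T}.
\]
The theorem therefore reduces to the deterministic bound $|\alpha_{T+1}-\alpha|\le\max\{\alpha,1-\alpha\}+\gamma$, or more concretely to $\alpha_t\in[-\gamma,1+\gamma]$ for all $t$.

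The key step is an invariant, proved by induction on $t$, in the spirit of Lemma~4.1 of \citep{gibbs2021adaptive}. The base case is $\alpha_0=\alpha\in[0,1]$. For the inductive step there are three cases.
\begin{itemize}
\item If $R_{t-1}=0$, the level is unchanged, so the invariant is inherited.
\item If $\alpha_{t-1}>1$, then $\hat q_{t-1}=-\infty$. Since $\hat s$ is real-valued, the set $C_{t-1}$ is empty, so $\ell_{t-1}=1$ and $\alpha_t=\alpha_{t-1}-\gamma(1-\alpha)\le\alpha_{t-1}$. Also $\alpha_t>1-\gamma(1-\alpha)\ge-\gamma$, so $\alpha_t$ stays in range.
\item If $\alpha_{t-1}<0$, then $\hat q_{t-1}=+\infty$ and $C_{t-1}=\mathbb R$, so $\ell_{t-1}=0$ and $\alpha_t=\alpha_{t-1}+\gamma\alpha$, which increases the level but keeps $\alpha_t<\gamma\alpha\le 1+\gamma$.
\item Otherwise $\alpha_{t-1}\in[0,1]$, and a single step moves it by at most $\gamma\max\{\alpha,1-\alpha\}\le\gamma$, so $\alpha_t\in[-\gamma,1+\gamma]$.
\end{itemize}
In the first two nontrivial cases I must still check that the level cannot leave the band from outside $[0,1]$; the displayed inequalities handle this.

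Given $\alpha_{T+1}\in[-\gamma,1+\gamma]$, we get $\alpha-\alpha_{T+1}\le\alpha+\gamma$ and $\alpha_{T+1}-\alpha\le1-\alpha+\gamma$. Hence $|\alpha-\alpha_{T+1}|\le\max\{\alpha,1-\alpha\}+\gamma$, and dividing by $\gamma S_T$ gives the stated inequality. The bound holds surely, hence almost surely, for every $T$ with $S_T\ge1$. The limiting statement follows because the right-hand side is $O(1/S_T)$.

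The main obstacle is the boundary step. It relies on the convention that $\alpha_t>1$ yields $\hat q_t=-\infty$ and hence an empty set. This is true because the weights sum to $1<1-\alpha_t$ fails only when $\alpha_t\le0$, so the infimum is over an empty set only when $\alpha_t<0$; I will double-check the edge cases $\alpha_t\in\{0,1\}$, where the set need not be trivial but the invariant is already satisfied. I also need the point that, since $\hat s$ is finite, $\hat q_t=-\infty$ forces $\ell_t=1$. Everything else is bookkeeping.
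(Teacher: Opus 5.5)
Your proposal is correct and follows the paper's proof essentially step for step. It uses the same telescoping identity $\sum_{t=1}^T R_t(\ell_t-\alpha)=(\alpha-\alpha_{T+1})/\gamma$ (via $R_0=0$) and the same three-case induction showing $\alpha_t\in[-\gamma,1+\gamma]$ from the empty-set and full-line conventions.

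One small correction to your closing remark: $\alpha_t>1$ gives $\hat q_t=-\infty$ because, when $1-\alpha_t<0$, every $q$ including $q=-\infty$ satisfies the weighted constraint. The empty-infimum convention is instead what yields $\hat q_t=+\infty$ when $\alpha_t<0$. Your conclusion is right; only that justification is muddled.
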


\paragraph{From observed rounds to all rounds.}
Theorem~\ref{thm:temporal_observed} is pathwise and controls the subsequence of
rounds on which target feedback is observed, without imposing any stochastic model
on \(R_t\). To obtain an average guarantee over all deployment rounds, the observed
feedback rounds must be representative of all rounds. The following
missing-completely-at-random assumption is a simple sufficient condition.

\begin{assumption}[MCAR target feedback]\label{ass:mcar}
There exists $\pi\in(0,1]$ such that
\[
\mathbb P(R_t=1\mid\mathcal H_t)=\pi
\qquad\text{a.s.\ for all }t,
\]
where
\[
\mathcal H_t:=\mathcal F_t\vee\sigma(Y_{N+1,t}).
\]
\end{assumption}

Assumption~\ref{ass:mcar} says that target labels are revealed independently of the
current prediction difficulty after conditioning on the full current-round information.

\begin{theorem}[All-round average control under MCAR]\label{thm:temporal_mcar}
Under the conditions of Theorem~\ref{thm:temporal_observed} and
Assumption~\ref{ass:mcar}, for every $T\ge1$,
\[
\left|
\frac1T\sum_{t=1}^T
\mathbb P\!\left(Y_{N+1,t}\notin C_t(X_{N+1,t})\right)
-\alpha
\right|
\le
\frac{\max\{\alpha,1-\alpha\}+\gamma}{\pi T\gamma}.
\]
In particular, $\lim_{T\to\infty}\frac1T\sum_{t=1}^T
\mathbb P\!\left(Y_{N+1,t}\notin C_t(X_{N+1,t})\right)=\alpha$.
\end{theorem}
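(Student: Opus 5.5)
Theorem~\ref{thm:temporal_mcar} follows from Theorem~\ref{thm:temporal_observed} by taking expectations and then replacing \(R_t\ell_t\) with \(\pi\ell_t\), which Assumption~\ref{ass:mcar} permits. The key fact is that \(\ell_t\) is \(\mathcal H_t\)-measurable. The set \(C_t(X_{N+1,t})\) is \(\mathcal F_t\)-measurable, and \(Y_{N+1,t}\) is \(\sigma(Y_{N+1,t})\)-measurable by definition. The tower property and MCAR then give
\[
\mathbb E[R_t\ell_t]=\mathbb E\bigl[\ell_t\,\mathbb P(R_t=1\mid\mathcal H_t)\bigr]=\pi\,\mathbb E[\ell_t]=\pi\,\mathbb P\bigl(Y_{N+1,t}\notin C_t(X_{N+1,t})\bigr).
\]
Similarly \(\mathbb E[R_t]=\pi\).

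Next I need a pathwise bound that avoids dividing by \(S_T\), since \(S_T\) is random and may be zero. I would go back to the telescoping identity behind Theorem~\ref{thm:temporal_observed}. The update gives \(\alpha_{t+1}-\alpha_t=\gamma R_t(\alpha-\ell_t)\) for all \(t\ge0\), with \(R_0=0\). Summing from \(t=1\) to \(T\) yields
\[
\sum_{t=1}^T R_t(\alpha-\ell_t)=\frac{\alpha_{T+1}-\alpha_1}{\gamma},
\]
where \(\alpha_1=\alpha_0=\alpha\).

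The standard ACI argument then shows \(\alpha_t\in[-\gamma,1+\gamma]\) for all \(t\). If \(\alpha_t<0\), the set is the whole line, so \(\ell_t=0\) and \(\alpha\) can only increase. If \(\alpha_t>1\), the set is empty, so \(\ell_t=1\) and \(\alpha\) can only decrease. Hence \(|\alpha_{T+1}-\alpha|\le\max\{\alpha,1-\alpha\}+\gamma\). This gives, almost surely and for every \(T\) (including the case \(S_T=0\)),
\[
\Bigl|\sum_{t=1}^T R_t(\ell_t-\alpha)\Bigr|\le\frac{\max\{\alpha,1-\alpha\}+\gamma}{\gamma}.
\]
This is the same quantity that, divided by \(S_T\), gives Theorem~\ref{thm:temporal_observed}, so I can cite that proof's intermediate bound. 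It also holds trivially when \(S_T=0\), since the left-hand side is then zero.

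Taking expectations, using Jensen for the absolute value, and applying the identities from the first paragraph gives
\[
\Bigl|\pi\sum_{t=1}^T\bigl(\mathbb P(Y_{N+1,t}\notin C_t(X_{N+1,t}))-\alpha\bigr)\Bigr|\le\frac{\max\{\alpha,1-\alpha\}+\gamma}{\gamma}.
\]
Dividing by \(\pi T\) gives the claim, and letting \(T\to\infty\) gives the limit.

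The main obstacle is measurability. One must check that \(\ell_t\) is \(\mathcal H_t\)-measurable, which holds because \(C_t\) is \(\mathcal F_t\)-measurable. One must also check that conditioning on \(\mathcal H_t\) is legitimate even though \(\alpha_{t+1}\) depends on \(R_t\). This is harmless because only \(\ell_t\), and not \(R_t\), needs to be measurable with respect to the conditioning field. Everything else is bookkeeping.
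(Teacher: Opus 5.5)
Your proposal is correct and is essentially the paper's proof. Both use the pathwise telescoping bound $\bigl|\sum_{t=1}^T R_t(\ell_t-\alpha)\bigr|\le(\max\{\alpha,1-\alpha\}+\gamma)/\gamma$, valid for every $T$ including $S_T=0$, then take expectations with Jensen and apply the MCAR identities $\mathbb E[R_t\ell_t]=\pi\,\mathbb E[\ell_t]$ and $\mathbb E[R_t]=\pi$, which rely on $\ell_t$ being $\mathcal H_t$-measurable; the only cosmetic difference is that the paper divides by $T$ before taking expectations, while you divide by $\pi T$ at the end.
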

The average in Theorem~\ref{thm:temporal_mcar} differs from the observed-feedback
average in Theorem~\ref{thm:temporal_observed}: it averages marginal
miscoverage probabilities over all deployment rounds, rather than realized losses on
the revealed subsequence. The bound also makes the role of feedback frequency
explicit. Since \(\pi\) appears in the denominator, the all-round transfer becomes
weak when target labels are rarely revealed; more frequent feedback tightens the
long-run all-round guarantee.

The two guarantees address different axes of miscalibration. 
Theorem~\ref{thm:spatial_oracle} is a one-step result: conditional on the past, the current-round miscoverage is controlled at the deployed level \(\bar\alpha_t\), up to the oracle profile gap from cross-sectional non-exchangeability and a 
profile-estimation remainder. 
Theorem~\ref{thm:temporal_mcar} is an all-round average result: under MCAR feedback, the adaptive update makes the average over all deployment rounds of marginal miscoverage probabilities track the nominal level \(\alpha\). 
Together, they justify the intended division of labor in W-TQA: weights reduce peer mismatch within each round, while feedback adaptation removes long-run miscoverage bias.

\section{Experiments}\label{sec:experiments}

\subsection{Experimental setup}\label{subsec:exp-setup}

\paragraph{Datasets.}
We evaluate W-TQA on one synthetic panel with three difficulty levels and three real-world panels, summarized in
Table~\ref{tab:experiment-panel-splits}. 

\begin{table}[H]
  \centering
  \footnotesize
  \captionsetup{skip=5pt}
  \setlength{\tabcolsep}{3pt}
  \begin{tabular}{@{}l>{\raggedright\arraybackslash}p{0.4\linewidth}ccc@{}}
    \toprule
    Dataset & Panel / target &
    \begin{tabular}[c]{@{}c@{}}Calibration\\units\end{tabular} &
    \begin{tabular}[c]{@{}c@{}}Test\\units\end{tabular} &
    \begin{tabular}[c]{@{}c@{}}Conformal\\period length\end{tabular} \\
    \midrule
    Synthetic & Factor-model stock panel with Easy/Medium/Hard heterogeneity and dependence settings & 470 units & 30 units & 60 \\
    HF hourly & High-frequency U.S. equity panel; target is next-hour stock return & 395 stocks & 100 stocks & 500 \\
    M5 & Retail-demand panel; target is item-level next-day log sales & 454 items & 194 items & 600 \\
    SGSC & Residential electricity-load panel; target is next-half-hour log household consumption & 280 households & 120 households & 600 \\
    \bottomrule
  \end{tabular}
  \caption{Dataset information used in the experiments.}\label{tab:experiment-panel-splits}
\end{table}
\paragraph{Experimental pipeline.}
Each replication follows the same online evaluation template. After constructing a
feature-ready panel, we randomly split units into a calibration panel and a
held-out test panel. Each held-out test unit is treated as a separate target
instance, with its own adaptive level \(\alpha_t\) and spatial weights \(W^{(t)}\).
We then fit a single standardized ridge predictor using only
burn-in observations from the calibration units, and keep this predictor fixed
throughout the conformal period. All conformal methods use the absolute residual
nonconformity score \(\hat s_{i,t}=|Y_{i,t}-\hat f(X_{i,t})|\). With this score,
each prediction set is an interval centered at \(\hat f(X_{i,t})\). During the
conformal period, each round first uses the contemporaneous calibration-unit
outcomes to form the round-$t$ calibration score set. The methods then issue
intervals for the test units before
their outcomes are revealed. In the full-feedback experiments, test outcomes are
revealed after prediction. In the intermittent-feedback experiments, at each
conformal timestamp we draw a single reveal indicator
\(R_t\sim\mathrm{Bernoulli}(p)\), shared by all held-out test units: if
\(R_t=1\), all test-unit outcomes at that timestamp are released after
prediction; if \(R_t=0\), they are withheld from online updates. Hidden test
outcomes are still used for offline evaluation, but adaptive methods update their
target-side states only on revealed-feedback rounds. The nominal coverage target
is \(1-\alpha=0.90\). W-TQA uses kernel bandwidth \(h=0.6\) and temporal
stepsize \(\gamma=0.01\) throughout;
Appendix~\ref{app:parameter-robustness} verifies stability around these defaults.

\paragraph{Methods.}
We compare six methods: \textbf{Split CP} (non-adaptive split conformal); two
ablations of W-TQA, \textbf{W-only} (spatial weighting with fixed $\alpha$) and
\textbf{TQA-only} (the TQA-E update of \citep{lin2022conformal}, using temporal
$\alpha$-adaptation with uniform cross-sectional weights); the proposed
\textbf{W-TQA}; and two adaptive baselines, \textbf{TQA-B} (the budgeted TQA
variant of \citep{lin2022conformal}) and \textbf{LPCI} (the lagged-residual
quantile-regression method of \citep{batra2023conformal}). We use the full LPCI implementation on the synthetic panel and a lighter
LPCI variant on the longer real-data panels for tractability; see
Appendices~\ref{app:simulation_experiments}
and~\ref{app:real-data-panels}.
\paragraph{Metrics.}\label{sec:metrics}
Following \citep{batra2023conformal}, we report average coverage, tail
coverage, average width, and the coefficient of variation of interval widths
(Width CoV). Average coverage is a useful calibration sanity check, but it
does not capture the central difficulty of our setting: because we randomly
split units into calibration and test, the test-unit and calibration-unit
score distributions agree in expectation over the split, so near-nominal
average coverage is attainable even without addressing cross-sectional
heterogeneity---averaging smooths over precisely the harder units, those
whose score laws differ most from the calibration majority. We therefore
emphasize \emph{tail coverage}, defined as the mean per-unit coverage over
the worst-covered \(10\%\) of test units in each replication. For interval
size, average width measures the overall scale, while Width CoV---the
standard deviation divided by the mean of realized interval widths in a
replication---captures heterogeneity across units and time, with larger
values reflecting more adaptive width allocation rather than uniform
interval inflation. To keep the main table compact, we display Width CoV
as the headline width metric and report raw average widths in
Appendix~\ref{app:experiments}.

\subsection{Main results under full feedback}\label{subsec:exp-p1}

We first report the full-feedback setting $p=1$, where every target outcome is
revealed after prediction. This setting gives all feedback-based baselines their
most favorable operating condition: TQA-only, TQA-B, and LPCI can update their
target-side states at every round, so any advantage of W-TQA in this regime cannot
be attributed to starving the baselines of feedback.

\begin{table}[t]
  \centering
  \scriptsize
  \captionsetup{skip=5pt}
  \setlength{\tabcolsep}{2pt}
  \renewcommand{\arraystretch}{0.92}
  \begin{tabular}{@{}llcccccc@{}}
    \toprule
    & & \multicolumn{3}{c}{Synthetic} & \multicolumn{3}{c}{Real-world} \\
    \cmidrule(lr){3-5}\cmidrule(lr){6-8}
    Metric & Method & Easy & Medium & Hard & HF Hourly & M5 & SGSC \\
    \midrule
    \multirow{6}{*}{\begin{tabular}[c]{@{}l@{}}\textbf{Average}\\\textbf{coverage}\end{tabular}}
    & Split CP       & $0.874 \pm 0.011$ & $0.848 \pm 0.030$ & $0.746 \pm 0.044$ & $0.905 \pm 0.013$ & $\mathbf{0.903 \pm 0.005}$ & $\mathbf{0.900 \pm 0.015}$ \\
    & TQA-B          & $\mathbf{0.883 \pm 0.009}$ & $0.867 \pm 0.019$ & $0.819 \pm 0.021$ & $0.916 \pm 0.010$ & $0.912 \pm 0.004$ & $0.916 \pm 0.011$ \\
    & LPCI           & $0.788 \pm 0.012$ & $0.730 \pm 0.021$ & $0.709 \pm 0.031$ & $\mathbf{0.900 \pm 0.013}$ & $0.868 \pm 0.005$ & $0.899 \pm 0.016$ \\
    & TQA-only       & $0.882 \pm 0.008$ & $0.864 \pm 0.021$ & $0.810 \pm 0.024$ & $0.909 \pm 0.003$ & $0.906 \pm 0.001$ & $0.912 \pm 0.003$ \\
    & W-only         & $0.936 \pm 0.009$ & $0.937 \pm 0.010$ & $0.929 \pm 0.021$ & $0.911 \pm 0.008$ & $0.910 \pm 0.004$ & $0.925 \pm 0.010$ \\
    & W-TQA          & $0.928 \pm 0.008$ & $\mathbf{0.929 \pm 0.008}$ & $\mathbf{0.922 \pm 0.015}$ & $0.907 \pm 0.002$ & $0.907 \pm 0.001$ & $0.913 \pm 0.002$ \\
    \midrule
    \multirow{6}{*}{\begin{tabular}[c]{@{}l@{}}\textbf{Tail}\\\textbf{coverage}\end{tabular}}
    & Split CP       & $0.800 \pm 0.025$ & $0.752 \pm 0.053$ & $0.502 \pm 0.105$ & $0.640 \pm 0.039$ & $0.754 \pm 0.019$ & $0.672 \pm 0.038$ \\
    & TQA-B          & $0.825 \pm 0.016$ & $0.803 \pm 0.026$ & $0.713 \pm 0.048$ & $0.726 \pm 0.018$ & $0.817 \pm 0.009$ & $0.768 \pm 0.014$ \\
    & LPCI           & $0.711 \pm 0.021$ & $0.648 \pm 0.030$ & $0.616 \pm 0.037$ & $0.691 \pm 0.033$ & $0.700 \pm 0.019$ & $0.625 \pm 0.048$ \\
    & TQA-only       & $0.832 \pm 0.011$ & $0.802 \pm 0.032$ & $0.687 \pm 0.066$ & $0.860 \pm 0.012$ & $0.881 \pm 0.004$ & $0.880 \pm 0.004$ \\
    & W-only         & $0.880 \pm 0.018$ & $0.872 \pm 0.026$ & $0.781 \pm 0.089$ & $0.768 \pm 0.022$ & $0.792 \pm 0.014$ & $0.767 \pm 0.022$ \\
    & W-TQA          & $\mathbf{0.884 \pm 0.014}$ & $\mathbf{0.880 \pm 0.017}$ & $\mathbf{0.828 \pm 0.042}$ & $\mathbf{0.888 \pm 0.002}$ & $\mathbf{0.889 \pm 0.001}$ & $\mathbf{0.891 \pm 0.001}$ \\
    \midrule
    \multirow{6}{*}{\textbf{Width CoV}}
    & Split CP       & $0.108 \pm 0.035$ & $0.228 \pm 0.044$ & $0.217 \pm 0.053$ & $0.599 \pm 0.006$ & $0.068 \pm 0.002$ & $0.365 \pm 0.009$ \\
    & TQA-B          & $0.172 \pm 0.026$ & $0.263 \pm 0.031$ & $\mathbf{0.279 \pm 0.038}$ & $0.669 \pm 0.018$ & $0.127 \pm 0.008$ & $0.445 \pm 0.024$ \\
    & LPCI           & $0.174 \pm 0.011$ & $0.239 \pm 0.021$ & $0.269 \pm 0.026$ & $0.285 \pm 0.044$ & $0.033 \pm 0.002$ & $0.262 \pm 0.027$ \\
    & TQA-only       & $0.118 \pm 0.031$ & $0.243 \pm 0.038$ & $0.262 \pm 0.045$ & $0.840 \pm 0.029$ & $0.233 \pm 0.008$ & $0.689 \pm 0.020$ \\
    & W-only         & $0.214 \pm 0.029$ & $0.275 \pm 0.032$ & $0.263 \pm 0.047$ & $1.263 \pm 0.088$ & $0.271 \pm 0.033$ & $0.668 \pm 0.039$ \\
    & W-TQA          & $\mathbf{0.225 \pm 0.031}$ & $\mathbf{0.275 \pm 0.031}$ & $0.273 \pm 0.044$ & $\mathbf{1.321 \pm 0.063}$ & $\mathbf{0.297 \pm 0.018}$ & $\mathbf{0.810 \pm 0.036}$ \\
    \bottomrule
  \end{tabular}
  \caption{\textbf{Full-feedback results} ($p=1$). The three stacked panels report
  average coverage, tail coverage, and Width CoV. Bold entries mark the average
  and tail coverage closest to the nominal $0.90$ target. For Width CoV, bold
  highlights the largest value as a diagnostic of width heterogeneity. All
  numbers are reported as mean~\(\pm\)~sd over \(30\) random replications.}\label{tab:p1-results}
\end{table}

Table~\ref{tab:p1-results} first confirms that W-TQA keeps average coverage near
the nominal \(0.90\) target across all panels. Several baselines also pass this
average-coverage sanity check, however, so average coverage alone does not reveal
whether the difficult units are protected. The tail-coverage panel therefore gives
the main stress test: W-TQA attains the highest tail coverage on every panel,
with especially small variability across random splits on the real panels. These
tail-coverage gains are not explained by uniform interval inflation: relative to
Split~CP, W-TQA changes average width only modestly on the real panels
(\(+6.7\%\) on HF hourly, \(-2.2\%\) on M5, \(+4.4\%\) on SGSC under full
feedback; see Appendix~\ref{app:experiments}), while attaining the largest Width
CoV in almost every scenario. Together, these width diagnostics are consistent
with adaptive uncertainty allocation across units and time, rather than a
uniform widening of all intervals. 

\subsection{Complementary spatial and temporal robustness}\label{subsec:exp-complementary}

W-TQA's two branches address complementary failure modes: the spatial branch
provides an immediate robustness floor against cross-sectional heterogeneity,
while the temporal branch corrects target-specific drift as feedback arrives.
Table~\ref{tab:p1-results} and Figure~\ref{fig:hf_mechanisms} together suggest
this complementarity is more than a design intent: W-TQA is closer to W-only
when target history is short, closer to TQA-only when target feedback is rich,
and improves over both in tail coverage.

\textbf{Across panels.}
The synthetic conformal period spans only \(60\) rounds, while the real panels
run for \(500\)--\(600\) rounds. In the shorter synthetic panels, W-TQA is
closer to W-only in tail coverage; for example, in Synthetic Hard its gap to
W-only is \(0.05\), compared with \(0.14\) to TQA-only. In the longer real
panels, W-TQA is closer to TQA-only while still improving over it; on HF
hourly, its gap to TQA-only is \(0.03\), compared with \(0.12\) to W-only. This pattern is consistent with the
intended design: spatial weighting provides early protection against
cross-sectional heterogeneity, while temporal adaptation becomes more useful
once sufficient target feedback has accumulated.

\textbf{Within one panel along time.}
Figure~\ref{fig:hf_mechanisms} (left) shows that the same mechanism is visible
on a much finer time scale: within HF hourly under full feedback, W-TQA's
cumulative tail-coverage curve is closer to W-only's during the early rounds,
when little target feedback has accumulated, and moves closer to TQA-only as
labels accrue, while ending above both ablations. Thus, the same complementary
pattern appears not only across datasets but also within a fixed panel.

\begin{figure}[t]
  \centering
  \begin{minipage}[t]{0.48\linewidth}
    \centering
    \includegraphics[width=\linewidth]{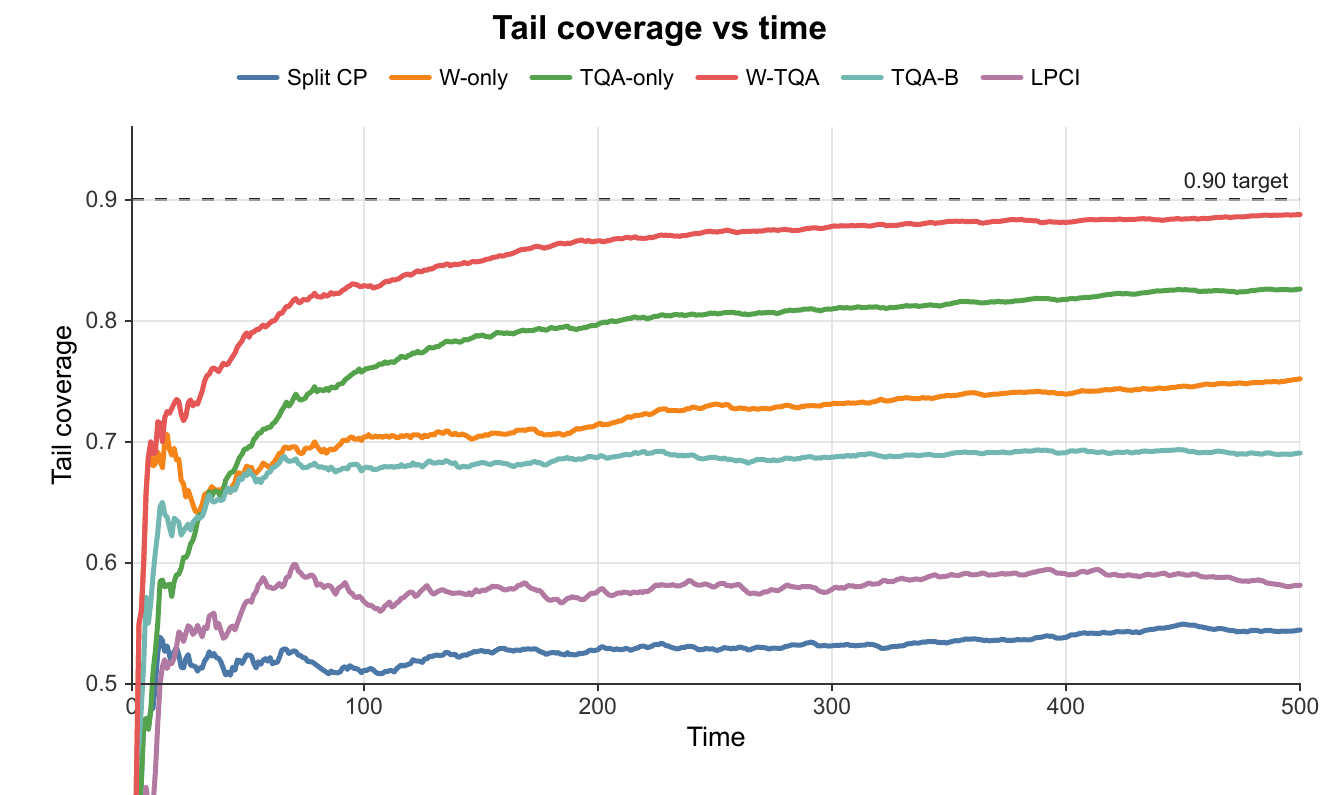}
  \end{minipage}\hfill
  \begin{minipage}[t]{0.48\linewidth}
    \centering
    \includegraphics[width=\linewidth]{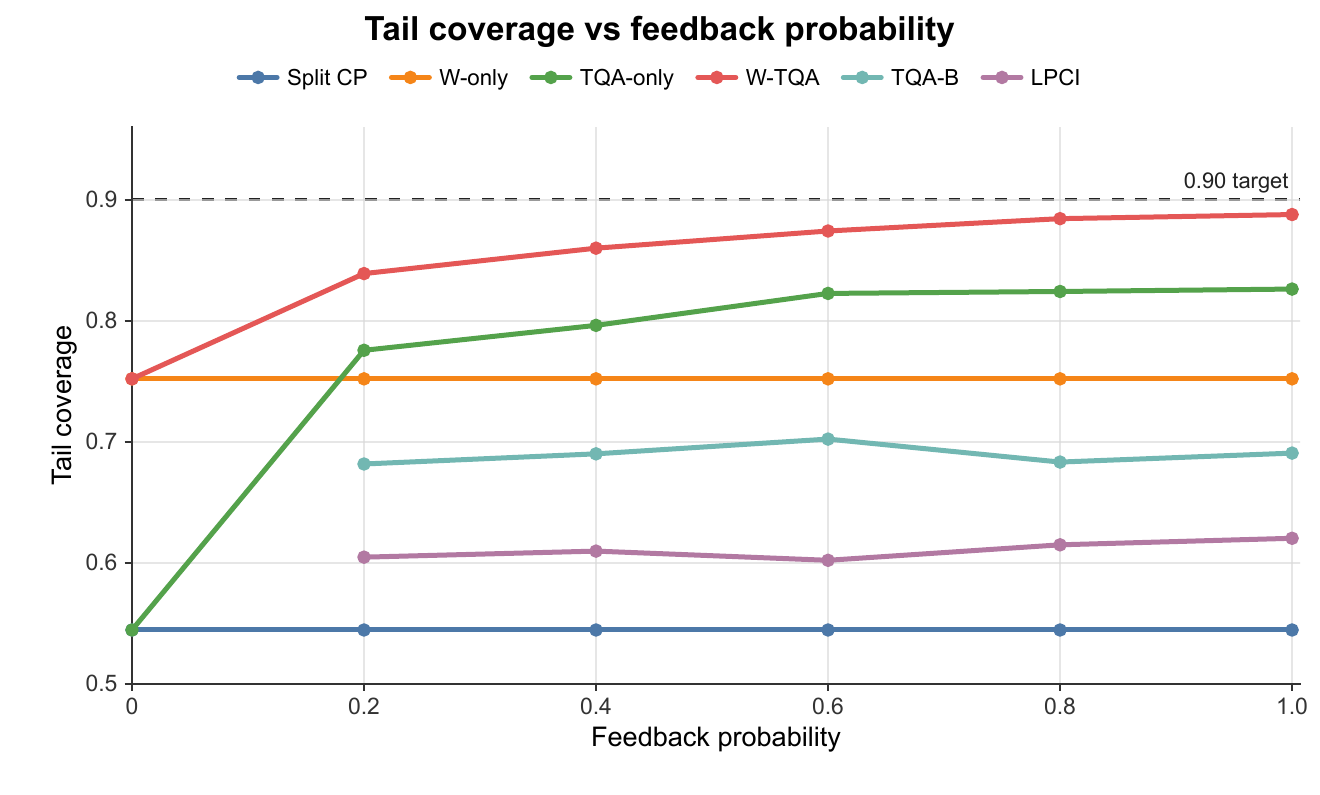}
  \end{minipage}
  \caption{Mechanism plots on the HF hourly panel. Left: cumulative tail coverage
  under full feedback. Right: tail coverage versus target-feedback observation
  probability \(p\).}\label{fig:hf_mechanisms}
\end{figure}

\textbf{Across feedback probabilities.}
Figure~\ref{fig:hf_mechanisms} (right) sweeps the target-feedback probability
\(p\in\{0,0.2,0.4,0.6,0.8,1.0\}\) on HF hourly. All adaptive methods update
their target-side state only on the revealed subset of rounds, but tail
coverage is evaluated over \emph{all} target rounds---so the plot reports
performance on the full deployment horizon, including rounds whose outcomes
are never used for updating. Split~CP and W-only are flat in \(p\) (they
ignore target feedback). TQA-only rises with \(p\) but is weak at small
\(p\), where it lacks the localized spatial weighting that anchors tail
coverage in the no-feedback regime. W-TQA exhibits a property neither
ablation achieves alone: at \(p=0\) it inherits W-only's strong no-feedback
floor, and as \(p\) grows it quickly absorbs the temporal signal, exceeding
both ablations for every \(p>0\). Repeated-split summaries for HF and complete sweeps for M5 and SGSC are
reported in Appendix~\ref{app:experiments}; W-TQA achieves the highest tail
coverage at every $p$ across all three real-data panels. Appendix~\ref{app:experiments}
also reproduces the two mechanism plots of Figure~\ref{fig:hf_mechanisms} on
M5 and SGSC, where the same complementary pattern holds.

\section{Limitations and Future Work}\label{sec:limitations}
We close by clarifying three scope choices and natural extensions. First, the
current-round conditional bounds use a profile term to control target--peer
TV discrepancies. The algorithm uses first-moment feature profiles
for online simplicity, but the framework is not tied to this choice: richer
profiles can be substituted whenever they better capture score-law similarity.
Appendix~\ref{app:second-order-extension} gives a covariance-aware second-order
extension, and score-history profiles are a natural direction when enough
target feedback has accumulated; resampling may help stabilize such histories
under sparse feedback.
Second, we use the MCAR condition (Assumption~\ref{ass:mcar}) to extend the
observed-feedback guarantee in Theorem~\ref{thm:temporal_observed} to the
all-round average in Theorem~\ref{thm:temporal_mcar}, which may not hold in
all deployments. When MCAR fails, selection bias may arise;
Appendix~\ref{app:mcar-proof} provides an explicit selection-bias correction,
and Appendix~\ref{app:real} stress-tests W-TQA under highly
outcome-informative reveal mechanisms, where it still attains the highest
tail coverage on every real-data panel.
Third, our experiments keep the point predictor fixed after burn-in to isolate
the conformal calibration mechanism. Online refitting could reduce interval
widths, but is orthogonal to the calibration problem studied here.

\newpage
\bibliography{bibliography}

\newpage
\appendix

\section{Additional Problem Context}\label{app:additional_context}

\subsection{Motivating examples}\label{app:motivating_examples}

\paragraph{Financial panels.}
Our primary motivation is financial prediction with asynchronous observation times.
One clean example is cross-market prediction across exchanges with different closing
times: at a given decision point, the realized returns of markets that have already
closed are observed, while the target market's return has not yet been realized and
will be revealed before the next trading round \citep{eun1989international,hamao1990correlations}.
A second example is prediction for an illiquid target asset using a panel of more liquid
related assets. Liquid assets incorporate information quickly, so their returns are
available promptly; the illiquid target's return, by contrast, may arrive with a delay
or may be missing altogether when no trade occurs during the decision window
\citep{chordia2000trading,hou2005market}.
In both cases, \(Y_{i,t}\) may be a realized or excess return over the decision window,
and \(X_{i,t}\) may include available lagged returns, trading volume, volatility, liquidity
proxies, sector characteristics, market factors, and calendar indicators.

\paragraph{Transportation monitoring.}
Another setting is short-horizon traffic prediction on a road network. Units are road
links, and \(Y_{i,t}\) may be the travel time or average speed on link \(i\) during time
window \(t\). Some links are instrumented with loop detectors, cameras, Bluetooth
readers, or dense probe-vehicle data, so their current outcomes are available. Other
links may be unobserved, sparsely observed, or observed only after a delay. The target
unit is such a link, while the observed units form a contemporaneous traffic panel.
Features \(X_{i,t}\) may include available lagged speeds and travel times, upstream and downstream
sensor readings, link length, road class, number of lanes, time-of-day and day-of-week
indicators, weather, incidents, and congestion summaries. This setting matches the
problem of predicting travel times on observed and unobserved road links and the use of
sparse multi-sensor traffic data \citep{kessler2021multi,li2022prediction}.

\paragraph{Retail store--SKU panels.}
A third setting is retail demand prediction at the store--SKU (stock-keeping unit)
or channel--product level. Units are products, stores, or product-store pairs, and
\(Y_{i,t}\) may be daily or weekly sales. In practice, some units report outcomes
promptly while others are delayed or unavailable, for instance because of reporting
latency or absent transactions for a niche product. The features \(X_{i,t}\) can
include available lagged sales, prices, promotions, inventory signals, calendar and holiday
indicators, and store and product attributes. This is naturally a large heterogeneous
panel prediction problem, as in modern retail demand forecasting and the M5 store--SKU
sales panel \citep{fildes2022retail,makridakis2022m5}.

\subsection{Broader impacts}\label{app:broader_impact}

Across the examples above, calibrated prediction intervals, rather than point
forecasts alone, can help downstream decision makers represent uncertainty
explicitly.  In finance, energy, transportation, and retail, uncertainty-aware
forecasts can support allocation, monitoring, and planning decisions that are
more transparent about risk than decisions based only on point estimates.  By
targeting coverage in realistic non-exchangeable panel settings, methods such as
W-TQA are designed to make uncertainty quantification more usable in deployments
where units are heterogeneous, feedback is intermittent, and contemporaneous
information is partially observed.

Real deployments, however, may be more complex than the panels studied here, and
even high-confidence prediction intervals can fail under severe shift, feedback
selection, data-quality problems, or misspecified operational assumptions.  In
high-risk settings, practitioners should pair the method with
conditional-coverage diagnostics rather than treat nominal coverage as exact
risk.

\section{Proofs}\label{app:proofs}

\paragraph{Probability conventions.}
All random quantities are defined on a common probability space
$(\Omega,\mathcal A,\mathbb P)$. The predictor $\hat f$ is trained on an
independent burn-in dataset and treated as fixed throughout the analysis; all
probabilities and expectations are taken over the panel stream and the
target-feedback process. Conditional probabilities such as
$\mathbb P(\cdot\mid\mathcal F_{t-1}^{+})$ and
$\mathbb P(\cdot\mid\mathcal H_t)$ are understood as regular conditional
probabilities. The prediction set $C_t(X_{N+1,t})$ is $\mathcal F_t$-measurable
by construction. The weight vector $W^{(t)}\in\Delta_{N+1}$ is
$\mathcal F_{t-1}$-measurable and the adaptive level $\alpha_t$ is
$\mathcal F_{t-1}^{+}$-measurable; in particular, the weights are not a model
for unit-sampling probabilities.

The informal coverage goal in Section~\ref{sec:problem_formulation} is
instantiated in two complementary ways: Theorem~\ref{thm:spatial_stepwise}
controls current-round conditional miscoverage given $\mathcal F_{t-1}^{+}$,
while Theorem~\ref{thm:temporal_mcar} controls the time average of marginal
miscoverage probabilities under MCAR feedback. In the theoretical analysis the
target index $N+1$ is fixed; the random unit splits used in
Section~\ref{sec:experiments} are an additional outer evaluation layer and do
not change the per-target guarantees.

\subsection{Proofs and supporting lemmas for the spatial guarantee}\label{app:oracle-bound}

We give the proofs of Theorems~\ref{thm:spatial_stepwise}
and~\ref{thm:spatial_oracle}, together with the approximation lemmas used in the
second theorem. The argument has two steps: (i) apply the weighted non-exchangeable conformal
inequality of \citep{barber2023conformal} conditionally on $\mathcal F_{t-1}^{+}$; and
(ii) replace the empirical weights by their oracle counterparts through a controlled
perturbation argument that chains a Lipschitz bound on the softmax (Gibbs) map with a
concentration bound on the historical feature means.

Throughout this subsection, we denote the empirical
and oracle squared kernel distances, respectively, by
\[
\widehat d_{k,t-1}:=\|\widehat\mu_{k,t-1}-\widehat\mu_{N+1,t-1}\|_2^2,
\qquad
d_k^\circ:=\|\mu_k-\mu_{N+1}\|_2^2,
\qquad k=1,\dots,N.
\]
The peer coordinates of the deployed round-$t$ weights can therefore be written as
\[
\widehat\omega_{k,t-1}:=w^{(t)}_{N+1,k}
=
\frac{\exp(-\widehat d_{k,t-1}/(2h^2))}
{1+\sum_{j=1}^N\exp(-\widehat d_{j,t-1}/(2h^2))},
\qquad k=1,\dots,N,
\]
and the sentinel coordinate is
\[
\widehat\omega_{N+1,t-1}:=1-\sum_{k=1}^N\widehat\omega_{k,t-1}.
\]
Similarly, define the oracle peer and sentinel coordinates by
\[
\omega_k^\circ
:=
\frac{\exp(-d_k^\circ/(2h^2))}
{1+\sum_{j=1}^N\exp(-d_j^\circ/(2h^2))},
\qquad k=1,\dots,N,
\]
and
\[
\omega_{N+1}^\circ:=1-\sum_{k=1}^N\omega_k^\circ
=
\frac{1}{1+\sum_{j=1}^N\exp(-d_j^\circ/(2h^2))}.
\]
Recall that $D_\mu:=\max_{1\le k\le N}\|\mu_k-\mu_{N+1}\|_2$.

\subsubsection{Proof of Theorem~\ref{thm:spatial_stepwise}}\label{app:proof-tv-bound}

\begin{proof}[Proof of Theorem~\ref{thm:spatial_stepwise}]
Fix a round $t$ and condition on $\mathcal F_{t-1}^{+}$. For W-TQA, both the weight vector
$W^{(t)}=(w_{N+1,1}^{(t)},\dots,w_{N+1,N+1}^{(t)})$ and the nominal level
$\alpha_t$ are $\mathcal F_{t-1}^{+}$-measurable, and are therefore deterministic once
$\mathcal F_{t-1}^{+}$ is fixed. Let
\[
\bar\alpha_t:=\min\{1,\max\{0,\alpha_t\}\}.
\]
By the out-of-range conventions for the weighted quantile, the deployed set formed
with $\alpha_t$ is the same as the set formed with $\bar\alpha_t$. Consequently,
conditional on $\mathcal F_{t-1}^{+}$, the round-$t$
step of Algorithm~\ref{alg:wtqa} is exactly a weighted split conformal procedure with
calibration scores $\hat s_{1,t},\dots,\hat s_{N,t}$,
deterministic normalized weights $\{w_{N+1,k}^{(t)}\}_{k=1}^{N+1}$ (the final coordinate
assigning mass to the $+\infty$ sentinel score), and nominal noncoverage level $\bar\alpha_t$.

Let $Z^{(t)}:=(\hat s_{1,t},\dots,\hat s_{N,t},\hat s_{N+1,t})$, and for each
$k\in\{1,\dots,N\}$ let $Z^{(t,k)}$ denote the swapped score vector obtained by exchanging
the $k$-th calibration score and the test score. By definition,
\[
\mathcal L(Z^{(t)}\mid\mathcal F_{t-1}^{+})=P_t,
\qquad
\mathcal L(Z^{(t,k)}\mid\mathcal F_{t-1}^{+})=P_{t,k}^{\mathrm{swap}}.
\]
For every realized history in a full-probability set, the regular conditional law
of $Z^{(t)}$ is $P_t$, the swapped law is $P_{t,k}^{\mathrm{swap}}$, and the
weights and level are fixed. Applying Theorem~2 of \citep{barber2023conformal}
to this conditional experiment yields
\[
\mathbb P\bigl(Y_{N+1,t}\in C_t(X_{N+1,t})\,\big|\,\mathcal F_{t-1}^{+}\bigr)
\;\ge\;
1-\bar\alpha_t
-\sum_{k=1}^N w_{N+1,k}^{(t)}\,d_{\mathrm{TV}}\bigl(P_t,P_{t,k}^{\mathrm{swap}}\bigr).
\]
Rearranging gives the claim.
\end{proof}

\subsubsection{Uniform approximation of kernel distances}

\begin{lemma}[Uniform approximation of kernel distances]\label{lem:distance-approx-abstract}
Under Assumption~\ref{ass:spatial_profiles}, for every $\delta\in(0,1)$ and every
$t\ge 2$, with probability at least $1-\delta$,
\[
\max_{1\le k\le N}|\widehat d_{k,t-1}-d_k^\circ|
\;\le\;4D_\mu\,\varepsilon_{t,\delta}+4\varepsilon_{t,\delta}^2.
\]
\end{lemma}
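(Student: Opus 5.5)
We work on the event $E$ from Assumption~\ref{ass:spatial_profiles} on which $\max_{1\le k\le N+1}\|\widehat\mu_{k,t-1}-\mu_k\|_2\le\varepsilon:=\varepsilon_{t,\delta}$. This event has probability at least $1-\delta$, and the whole argument is deterministic on $E$.

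First we control the perturbation of the difference vectors. Write $u_k:=\mu_k-\mu_{N+1}$ and $\widehat u_k:=\widehat\mu_{k,t-1}-\widehat\mu_{N+1,t-1}$. Then
\[
e_k:=\widehat u_k-u_k=(\widehat\mu_{k,t-1}-\mu_k)-(\widehat\mu_{N+1,t-1}-\mu_{N+1}).
\]
Because the event in Assumption~\ref{ass:spatial_profiles} controls the target index $N+1$ and all peer indices simultaneously, the triangle inequality gives $\|e_k\|_2\le 2\varepsilon$ for every $k\le N$.

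Next we expand the squared norms exactly:
\[
\widehat d_{k,t-1}-d_k^\circ=\|u_k+e_k\|_2^2-\|u_k\|_2^2=2\langle u_k,e_k\rangle+\|e_k\|_2^2 .
\]
Cauchy--Schwarz together with $\|u_k\|_2\le D_\mu$ bounds the absolute value by $2D_\mu\cdot 2\varepsilon+(2\varepsilon)^2=4D_\mu\varepsilon+4\varepsilon^2$. This bound is uniform in $k$, so taking the maximum over $k\le N$ gives the claim on $E$.

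The only real subtlety is bookkeeping. The assumption is indexed by $t$ but concerns $\widehat\mu_{\cdot,t-1}$, which are exactly the means entering $\widehat d_{k,t-1}$, so the indices match. Also, the high-probability event must cover unit $N+1$ jointly with all the peers, and the assumption's maximum over $1\le k\le N+1$ guarantees this. No union bound is needed, and the constants come out exactly as stated.
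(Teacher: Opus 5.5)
Your proposal is correct and follows essentially the same route as the paper. On the same event you bound the perturbation of the difference vector by $2\varepsilon_{t,\delta}$ via the triangle inequality, then control the difference of squared norms; the only cosmetic difference is that you expand $\|u_k+e_k\|_2^2-\|u_k\|_2^2=2\langle u_k,e_k\rangle+\|e_k\|_2^2$ with Cauchy--Schwarz, whereas the paper factors $\bigl|\|a\|_2^2-\|b\|_2^2\bigr|\le\|a-b\|_2(\|a\|_2+\|b\|_2)$ with $\|a\|_2\le D_\mu+2\varepsilon_{t,\delta}$, and both give exactly $4D_\mu\varepsilon_{t,\delta}+4\varepsilon_{t,\delta}^2$.
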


\begin{proof}
Let
\[
\mathcal E_{t,\delta}
:=
\Bigl\{\max_{1\le k\le N+1}\|\widehat\mu_{k,t-1}-\mu_k\|_2\le\varepsilon_{t,\delta}\Bigr\},
\]
so that, by Assumption~\ref{ass:spatial_profiles},
$\mathbb P(\mathcal E_{t,\delta})\ge 1-\delta$. Fix $k\in\{1,\dots,N\}$, and on
$\mathcal E_{t,\delta}$ set
\[
a:=\widehat\mu_{k,t-1}-\widehat\mu_{N+1,t-1},
\qquad
b:=\mu_k-\mu_{N+1}.
\]
The reverse triangle inequality gives $\bigl|\|a\|_2-\|b\|_2\bigr|\le\|a-b\|_2$, hence
\begin{equation}
\label{eq:square-diff}
|\widehat d_{k,t-1}-d_k^\circ|
=\bigl|\|a\|_2^2-\|b\|_2^2\bigr|
\le\|a-b\|_2\bigl(\|a\|_2+\|b\|_2\bigr).
\end{equation}
By the triangle inequality,
\[
\|a-b\|_2
=\|(\widehat\mu_{k,t-1}-\mu_k)-(\widehat\mu_{N+1,t-1}-\mu_{N+1})\|_2
\le 2\varepsilon_{t,\delta}.
\]
By the definition of $D_\mu$, $\|b\|_2\le D_\mu$, and therefore
$\|a\|_2\le\|b\|_2+\|a-b\|_2\le D_\mu+2\varepsilon_{t,\delta}$. Substituting these into~\eqref{eq:square-diff} gives
\[
|\widehat d_{k,t-1}-d_k^\circ|
\le 2\varepsilon_{t,\delta}\bigl(2D_\mu+2\varepsilon_{t,\delta}\bigr)
=4D_\mu\,\varepsilon_{t,\delta}+4\varepsilon_{t,\delta}^2.
\]
Taking the maximum over $k\in\{1,\dots,N\}$ proves the lemma.
\end{proof}

\subsubsection{Lipschitz continuity of the Gibbs map}

\begin{lemma}[Lipschitz continuity of the Gibbs map]\label{lem:gibbs-lipschitz-abstract}
Define $\Psi_h:\mathbb R^N\to\mathbb R^N$ by
\[
\Psi_h(a)_k
:=\frac{\exp(-a_k/(2h^2))}{1+\sum_{j=1}^N\exp(-a_j/(2h^2))},
\qquad k=1,\dots,N.
\]
Then for any $a,b\in\mathbb R^N$,
\[
\|\Psi_h(a)-\Psi_h(b)\|_1
\le\frac{1}{h^2}\|a-b\|_1
\le\frac{N}{h^2}\|a-b\|_\infty.
\]
Moreover, writing
$\widehat{\boldsymbol\omega}_{t-1}^{\,\mathrm{full}}:=(\widehat\omega_{1,t-1},\dots,\widehat\omega_{N,t-1},\widehat\omega_{N+1,t-1})$
and $\boldsymbol\omega^{\circ,\mathrm{full}}:=(\omega_1^\circ,\dots,\omega_N^\circ,\omega_{N+1}^\circ)$,
\[
\|\widehat{\boldsymbol\omega}_{t-1}^{\,\mathrm{full}}-\boldsymbol\omega^{\circ,\mathrm{full}}\|_1
\;\le\;
\frac{2N}{h^2}\max_{1\le k\le N}|\widehat d_{k,t-1}-d_k^\circ|.
\]
\end{lemma}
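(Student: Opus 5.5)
The plan is to treat $\Psi_h$ as a softmax with an extra reference coordinate and bound its Jacobian in the induced $\ell_1$ operator norm. Set $\beta:=1/(2h^2)$ and $u_k:=e^{-\beta a_k}$, $Z:=1+\sum_j u_j$, so $\Psi_h(a)_k=u_k/Z$. Writing $p_k:=\Psi_h(a)_k$ and $p_0:=1/Z=1-\sum_k p_k$, a direct differentiation gives
\[
\frac{\partial \Psi_h(a)_k}{\partial a_j}=-\beta\bigl(p_k\mathbf 1\{k=j\}-p_kp_j\bigr).
\]
The $\ell_1\to\ell_1$ operator norm is the maximum column $\ell_1$-sum. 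For column $j$ this sum is $\beta\bigl(p_j(1-p_j)+p_j\sum_{k\ne j}p_k\bigr)\le \beta\cdot 2p_j(1-p_j)\le \beta/2$, since $\sum_{k\ne j}p_k\le 1-p_j$. In particular it is at most $2\beta=1/h^2$, which is all the claim needs.

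Next I integrate along the segment $a+\tau(b-a)$, $\tau\in[0,1]$. The mean value inequality in integral form gives $\|\Psi_h(a)-\Psi_h(b)\|_1\le \sup_\tau\|J(\cdot)\|_{1\to1}\,\|a-b\|_1\le h^{-2}\|a-b\|_1$. The second inequality follows from $\|v\|_1\le N\|v\|_\infty$.

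For the full-vector statement, apply the first part with $a=(\widehat d_{k,t-1})_k$ and $b=(d_k^\circ)_k$, which bounds the peer-coordinate $\ell_1$ distance by $(N/h^2)\max_k|\widehat d_{k,t-1}-d_k^\circ|$. The sentinel coordinates satisfy $\widehat\omega_{N+1,t-1}-\omega^\circ_{N+1}=-\sum_{k\le N}(\widehat\omega_{k,t-1}-\omega_k^\circ)$. Hence by the triangle inequality their absolute difference is at most the same peer $\ell_1$ distance. Adding the two contributions gives the factor $2N/h^2$.

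The main obstacle is the Jacobian column-sum bound. One must keep the extra "$1$" in the denominator, which makes $\sum_k p_k<1$ rather than $=1$, and check that the bound still holds. I would verify this by bounding $p_j\sum_{k\ne j}p_k\le p_j(1-p_j)$ as above. Everything else is routine.
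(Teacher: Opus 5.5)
Your proposal is correct and follows the same route as the paper. You bound the $\ell_1\to\ell_1$ operator norm of the Jacobian by its column sums, integrate along the segment, and then control the sentinel coordinate by the peer $\ell_1$ deviation to obtain the factor $2N/h^2$. Your column-sum estimate $2\beta p_j(1-p_j)\le 1/(4h^2)$ is sharper than the paper's $\Psi_h(a)_\ell/h^2\le 1/h^2$, which bounds the diagonal and off-diagonal contributions separately, but this only improves the constant and is not needed for the lemma.
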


\begin{proof}
Write $Z(a):=1+\sum_{j=1}^N\exp(-a_j/(2h^2))$, so that $\Psi_h(a)_k=\exp(-a_k/(2h^2))/Z(a)$.
Direct differentiation gives, for any $k,\ell\in\{1,\dots,N\}$,
\[
\frac{\partial \Psi_h(a)_k}{\partial a_\ell}
=
-\frac{1}{2h^2}\Psi_h(a)_k\,\mathbf 1\{k=\ell\}
+\frac{1}{2h^2}\Psi_h(a)_k\Psi_h(a)_\ell.
\]
Using the triangle inequality and $\sum_{k=1}^N\Psi_h(a)_k\le 1$,
\[
\sum_{k=1}^N\Bigl|\frac{\partial \Psi_h(a)_k}{\partial a_\ell}\Bigr|
\le\frac{1}{2h^2}\bigl(\Psi_h(a)_\ell+\Psi_h(a)_\ell\bigr)
=\frac{\Psi_h(a)_\ell}{h^2}
\le\frac{1}{h^2}.
\]
Hence the Jacobian $D\Psi_h(a)$ has $\ell_1\to\ell_1$ operator norm at most $1/h^2$
uniformly in $a$. The mean-value theorem then yields
$\|\Psi_h(a)-\Psi_h(b)\|_1\le\frac{1}{h^2}\|a-b\|_1$, and the bound
$\|a-b\|_1\le N\|a-b\|_\infty$ gives the second inequality.

For the full-vector claim, observe that the first $N$ coordinates satisfy
$\widehat\omega_{t-1}=\Psi_h(\widehat d_{t-1})$ and $\omega^\circ=\Psi_h(d^\circ)$,
where $\widehat d_{t-1}=(\widehat d_{1,t-1},\dots,\widehat d_{N,t-1})$ and
$d^\circ=(d_1^\circ,\dots,d_N^\circ)$. The bound above gives
$\|\widehat\omega_{t-1}-\omega^\circ\|_1\le(N/h^2)\max_k|\widehat d_{k,t-1}-d_k^\circ|$.
The sentinel coordinate satisfies
$\widehat\omega_{N+1,t-1}=1-\sum_{k=1}^N\widehat\omega_{k,t-1}$ and
$\omega_{N+1}^\circ=1-\sum_{k=1}^N\omega_k^\circ$, hence
\[
|\widehat\omega_{N+1,t-1}-\omega_{N+1}^\circ|
=\Bigl|\sum_{k=1}^N(\omega_k^\circ-\widehat\omega_{k,t-1})\Bigr|
\le\|\widehat\omega_{t-1}-\omega^\circ\|_1.
\]
Adding this to the bound on the first $N$ coordinates gives
\[
\|\widehat{\boldsymbol\omega}_{t-1}^{\,\mathrm{full}}-\boldsymbol\omega^{\circ,\mathrm{full}}\|_1
\le 2\|\widehat\omega_{t-1}-\omega^\circ\|_1
\le\frac{2N}{h^2}\max_{1\le k\le N}|\widehat d_{k,t-1}-d_k^\circ|,
\]
as claimed.
\end{proof}

\subsubsection{Oracle approximation of the spatial weights}

\begin{lemma}[Oracle approximation of empirical spatial weights]\label{lem:oracle-weight-approx-abstract}
Under Assumption~\ref{ass:spatial_profiles}, for every $\delta\in(0,1)$ and
every $t\ge 2$, with probability at least $1-\delta$,
\[
\|\widehat{\boldsymbol\omega}_{t-1}^{\,\mathrm{full}}-\boldsymbol\omega^{\circ,\mathrm{full}}\|_1
\;\le\;
\frac{8ND_\mu}{h^2}\varepsilon_{t,\delta}
+\frac{8N}{h^2}\varepsilon_{t,\delta}^2.
\]
\end{lemma}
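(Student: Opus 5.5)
The plan is to chain Lemma~\ref{lem:distance-approx-abstract} and Lemma~\ref{lem:gibbs-lipschitz-abstract} on a single high-probability event. Fix $\delta\in(0,1)$ and $t\ge2$. Work on the event
\[
\mathcal E_{t,\delta}:=\Bigl\{\max_{1\le k\le N+1}\|\widehat\mu_{k,t-1}-\mu_k\|_2\le\varepsilon_{t,\delta}\Bigr\}.
\]
By Assumption~\ref{ass:spatial_profiles}, this event has probability at least $1-\delta$. The proof of Lemma~\ref{lem:distance-approx-abstract} shows that the distance bound holds deterministically on this same event. So no union bound is needed, and the final statement inherits the probability $1-\delta$ directly.

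On $\mathcal E_{t,\delta}$, Lemma~\ref{lem:distance-approx-abstract} gives
\[
\max_{1\le k\le N}|\widehat d_{k,t-1}-d_k^\circ|\le 4D_\mu\varepsilon_{t,\delta}+4\varepsilon_{t,\delta}^2 .
\]
The second part of Lemma~\ref{lem:gibbs-lipschitz-abstract} is a deterministic inequality valid for every realization. It bounds the $\ell_1$ distance between the full weight vectors, sentinel coordinate included, by $(2N/h^2)\max_k|\widehat d_{k,t-1}-d_k^\circ|$. Substituting the distance bound yields
\[
\frac{2N}{h^2}\bigl(4D_\mu\varepsilon_{t,\delta}+4\varepsilon_{t,\delta}^2\bigr)=\frac{8ND_\mu}{h^2}\varepsilon_{t,\delta}+\frac{8N}{h^2}\varepsilon_{t,\delta}^2,
\]
which is exactly the claimed bound.

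There is no real obstacle. The only points to check are two identifications. First, the deployed weights $W^{(t)}$ coincide with $\Psi_h(\widehat d_{t-1})$ on the peer coordinates, using the means $\widehat\mu_{\cdot,t-1}$ computed at the end of round $t-1$; this is why $t\ge2$ is required. Second, the event in the assumption is the same event on which Lemma~\ref{lem:distance-approx-abstract} operates.
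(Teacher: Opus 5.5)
Your proposal is correct and matches the paper's proof: on the single event $\mathcal E_{t,\delta}$ from Assumption~\ref{ass:spatial_profiles}, you feed the distance bound of Lemma~\ref{lem:distance-approx-abstract} into the deterministic full-vector Lipschitz bound of Lemma~\ref{lem:gibbs-lipschitz-abstract}, which gives $\frac{2N}{h^2}\bigl(4D_\mu\varepsilon_{t,\delta}+4\varepsilon_{t,\delta}^2\bigr)$. Your observation that both steps hold on that same event, so no union bound is needed, is exactly why the statement keeps probability $1-\delta$.
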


\begin{proof}
Combine Lemma~\ref{lem:distance-approx-abstract} and
Lemma~\ref{lem:gibbs-lipschitz-abstract}: on the event $\mathcal E_{t,\delta}$ of
probability at least $1-\delta$,
\[
\|\widehat{\boldsymbol\omega}_{t-1}^{\,\mathrm{full}}-\boldsymbol\omega^{\circ,\mathrm{full}}\|_1
\le\frac{2N}{h^2}\bigl(4D_\mu\,\varepsilon_{t,\delta}+4\varepsilon_{t,\delta}^2\bigr)
=\frac{8ND_\mu}{h^2}\varepsilon_{t,\delta}+\frac{8N}{h^2}\varepsilon_{t,\delta}^2.
\]
\end{proof}

\subsubsection{Proof of Theorem~\ref{thm:spatial_oracle}}

\begin{proof}[Proof of Theorem~\ref{thm:spatial_oracle}]
Fix $t\ge2$ and $\delta\in(0,1)$.
Let $\mathcal E_{t,\delta}$ be the event defined in the proof of
Lemma~\ref{lem:distance-approx-abstract}; Assumption~\ref{ass:spatial_profiles}
gives $\mathbb P(\mathcal E_{t,\delta})\ge 1-\delta$. Throughout the proof, all inequalities
between random quantities hold $\mathbb P$-almost surely on $\mathcal E_{t,\delta}$.

\smallskip\noindent\textit{Step 1: Apply the stepwise TV bound.}
By Theorem~\ref{thm:spatial_stepwise},
\begin{equation}
\label{eq:step1}
\mathbb P\bigl(Y_{N+1,t}\notin C_t(X_{N+1,t})\,\big|\,\mathcal F_{t-1}^{+}\bigr)
\le\bar\alpha_t+\sum_{k=1}^N\widehat\omega_{k,t-1}\,d_{\mathrm{TV}}(P_t,P_{t,k}^{\mathrm{swap}}).
\end{equation}
Applying the TV-profile condition in Assumption~\ref{ass:spatial_profiles} coordinate-wise yields
\begin{equation}
\label{eq:step1b}
\sum_{k=1}^N\widehat\omega_{k,t-1}\,d_{\mathrm{TV}}(P_t,P_{t,k}^{\mathrm{swap}})
\le\sum_{k=1}^N\widehat\omega_{k,t-1}\phi\!\left(\|\mu_k-\mu_{N+1}\|_2\right).
\end{equation}

\smallskip\noindent\textit{Step 2: Empirical-to-oracle decomposition.}
Write
\[
\sum_{k=1}^N\widehat\omega_{k,t-1}\phi\!\left(\|\mu_k-\mu_{N+1}\|_2\right)
=\sum_{k=1}^N\omega_k^\circ \phi\!\left(\|\mu_k-\mu_{N+1}\|_2\right)
+\sum_{k=1}^N(\widehat\omega_{k,t-1}-\omega_k^\circ)\phi\!\left(\|\mu_k-\mu_{N+1}\|_2\right).
\]
By H\"older's inequality with exponents $(1,\infty)$,
\[
\Bigl|\sum_{k=1}^N(\widehat\omega_{k,t-1}-\omega_k^\circ)\phi\!\left(\|\mu_k-\mu_{N+1}\|_2\right)\Bigr|
\le \phi_{\max}\sum_{k=1}^N|\widehat\omega_{k,t-1}-\omega_k^\circ|
\le \phi_{\max}\,\|\widehat{\boldsymbol\omega}_{t-1}^{\,\mathrm{full}}-\boldsymbol\omega^{\circ,\mathrm{full}}\|_1,
\]
where the last inequality uses the fact that the sum of peer-coordinate deviations is dominated
by the full $\ell_1$ norm. Hence
\begin{equation}
\label{eq:step2}
\sum_{k=1}^N\widehat\omega_{k,t-1}\phi\!\left(\|\mu_k-\mu_{N+1}\|_2\right)
\le\sum_{k=1}^N\omega_k^\circ \phi\!\left(\|\mu_k-\mu_{N+1}\|_2\right)
+\phi_{\max}\|\widehat{\boldsymbol\omega}_{t-1}^{\,\mathrm{full}}-\boldsymbol\omega^{\circ,\mathrm{full}}\|_1.
\end{equation}

\smallskip\noindent\textit{Step 3: Invoke the oracle approximation bound.}
On $\mathcal E_{t,\delta}$, Lemma~\ref{lem:oracle-weight-approx-abstract} gives
\begin{equation}
\label{eq:step3}
\|\widehat{\boldsymbol\omega}_{t-1}^{\,\mathrm{full}}-\boldsymbol\omega^{\circ,\mathrm{full}}\|_1
\le\frac{8ND_\mu}{h^2}\varepsilon_{t,\delta}+\frac{8N}{h^2}\varepsilon_{t,\delta}^2.
\end{equation}

\smallskip\noindent\textit{Step 4: Combine.}
Chaining~\eqref{eq:step1}--\eqref{eq:step3} yields, on $\mathcal E_{t,\delta}$,
\[
\mathbb P\bigl(Y_{N+1,t}\notin C_t(X_{N+1,t})\,\big|\,\mathcal F_{t-1}^{+}\bigr)
\le\bar\alpha_t
+\sum_{k=1}^N\omega_k^\circ \phi\!\left(\|\mu_k-\mu_{N+1}\|_2\right)
+\frac{8N\phi_{\max}}{h^2}\bigl(D_\mu\,\varepsilon_{t,\delta}+\varepsilon_{t,\delta}^2\bigr).
\]
Since $\mathbb P(\mathcal E_{t,\delta})\ge 1-\delta$, this is the first claim of
the theorem.
\end{proof}

\subsubsection{Remark: a concrete choice of \texorpdfstring{$\varepsilon_{t,\delta}$}{eps}}
\label{app:eps-concrete}

We spell out one concrete way in which the mean-profile condition in
Assumption~\ref{ass:spatial_profiles} can be verified.  Write
\[
\widehat\mu_{k,t-1}
:=
\frac{1}{t-1}\sum_{s=1}^{t-1}X_{k,s},
\qquad t\ge2,
\]
and let \(n_t:=t-1\).  The requirement in
Assumption~\ref{ass:spatial_profiles} is the fixed-round, uniform-over-units
event
\[
\mathbb P\left(
\max_{1\le k\le N+1}
\|\widehat\mu_{k,t-1}-\mu_k\|_2
\le \varepsilon_{t,\delta}
\right)
\ge 1-\delta .
\]

First suppose that, for each unit \(k\in\{1,\dots,N+1\}\), the centered process
\(\{X_{k,s}-\mu_k\}_{s\ge1}\subset\mathbb R^d\) is independent across time and
vector sub-Gaussian with proxy variance \(\sigma_X^2\), in the sense that for
every \(u\in\mathbb S^{d-1}\) and every \(\lambda\in\mathbb R\),
\[
\mathbb E\exp\{\lambda u^\top(X_{k,s}-\mu_k)\}
\le
\exp\{\lambda^2\sigma_X^2/2\}.
\]
No independence across the unit index \(k\) is needed for the argument below.

For a fixed unit \(k\) and direction \(u\in\mathbb S^{d-1}\), the scalar average
\[
u^\top(\widehat\mu_{k,t-1}-\mu_k)
=
\frac{1}{n_t}\sum_{s=1}^{n_t}u^\top(X_{k,s}-\mu_k)
\]
is sub-Gaussian with proxy variance \(\sigma_X^2/n_t\).  Hence, for every
\(a>0\),
\[
\mathbb P\!\left(
\left|u^\top(\widehat\mu_{k,t-1}-\mu_k)\right|\ge a
\right)
\le
2\exp\!\left(-\frac{n_ta^2}{2\sigma_X^2}\right).
\]
Let \(\mathcal N_{1/2}\) be a \(1/2\)-net of the unit sphere
\(\mathbb S^{d-1}\) with cardinality \(|\mathcal N_{1/2}|\le 5^d\).  For every
\(z\in\mathbb R^d\),
\[
\|z\|_2
\le
2\max_{u\in\mathcal N_{1/2}}u^\top z .
\]
Applying the preceding scalar tail bound over the net gives
\[
\begin{aligned}
\mathbb P\bigl(\|\widehat\mu_{k,t-1}-\mu_k\|_2\ge r\bigr)
&\le
\mathbb P\!\left(
\max_{u\in\mathcal N_{1/2}}
u^\top(\widehat\mu_{k,t-1}-\mu_k)
\ge r/2
\right)  \\
&\le
2\times 5^d
\exp\!\left(-\frac{n_t r^2}{8\sigma_X^2}\right).
\end{aligned}
\]
Equivalently, there exist universal constants \(c_1,c_2,c_3>0\) such that,
for every \(r>0\),
\[
\mathbb P\bigl(\|\widehat\mu_{k,t-1}-\mu_k\|_2\ge r\bigr)
\le
c_1
\exp\!\left(
-c_2\frac{(t-1)r^2}{\sigma_X^2}
+c_3d
\right),
\]
which is the standard vector-valued concentration bound obtained by an
\(\varepsilon\)-net argument; see, for example,
\citep[Thm.~3.1.1]{vershynin2018high}.

A union bound over the \(N+1\) units yields
\[
\mathbb P\!\left(
\max_{1\le k\le N+1}
\|\widehat\mu_{k,t-1}-\mu_k\|_2
\ge r
\right)
\le
2(N+1)5^d
\exp\!\left(-\frac{(t-1)r^2}{8\sigma_X^2}\right).
\]
Therefore, taking
\[
r
=
\sigma_X
\sqrt{
\frac{8\{d\log 5+\log(2(N+1)/\delta)\}}{t-1}
}
\]
makes the last display at most \(\delta\).  Absorbing numerical constants into
a universal constant \(C_1>0\), Assumption~\ref{ass:spatial_profiles} holds
with
\[
\varepsilon_{t,\delta}
=
C_1\sigma_X
\sqrt{
\frac{d+\log((N+1)/\delta)}{t-1}
}.
\]
The factor \(d\) comes from uniform control over directions in
\(\mathbb R^d\), the logarithmic \(\log(N+1)\) term comes from uniformity over
the \(N+1\) unit profiles, and the factor \((t-1)^{-1/2}\) is the usual
sample-mean rate.

The same formulation also accommodates weak temporal dependence.  Suppose,
for example, that for each unit \(k\), the process
\(\{X_{k,s}\}_{s\ge1}\) is stationary with mean
\(\mu_k=\mathbb E X_{k,s}\), is uniformly sub-Gaussian, and is geometrically
\(\beta\)-mixing:
\[
\beta_k(m)\le B\exp(-bm),
\qquad m\ge1,
\]
with constants \(B,b>0\) uniform over \(k\).  A conservative way to read this
condition is through a blocking argument.  Split the time indices
\(\{1,\dots,n_t\}\), \(n_t=t-1\), into alternating blocks of length \(m\).  The
kept blocks are separated by gaps of length \(m\), and the \(\beta\)-mixing
coefficient controls the error incurred by coupling these separated blocks to
independent copies.  Since observations inside each kept block are not treated
as independent, each kept block contributes one effective observation.  Thus the
effective number of independent blocks is
\[
n_{\mathrm{eff}}(t,m)
:=
\left\lfloor \frac{n_t}{2m}\right\rfloor .
\]

This standard blocking-and-coupling calculation gives, for universal constants
\(c_1,c_2,c_3,c_4>0\),
\[
\mathbb P\bigl(\|\widehat\mu_{k,t-1}-\mu_k\|_2\ge r\bigr)
\le
c_1
\exp\!\left(
-c_2\frac{n_{\mathrm{eff}}(t,m)r^2}{\sigma_X^2}
+c_3d
\right)
+
c_4\frac{n_t}{m}\beta_k(m).
\]
The first term is the same vector-valued concentration bound as in the
independent case, with \(t-1\) replaced by the number of effectively independent
blocks.  The second term is the coupling error, proportional to the number of
blocks times the mixing coefficient at separation \(m\).

Choosing
\[
m_{t,\delta}
=
\left\lceil
\frac{1}{b}
\log\!\left(
\frac{2c_4B(N+1)n_t}{\delta}
\right)
\right\rceil
\]
makes the coupling error at most \(\delta/2\) after a union bound over
\(k\in\{1,\dots,N+1\}\).  Applying the same union-bound argument as above to
the concentration term then gives, whenever
\(n_{\mathrm{eff}}(t,m_{t,\delta})\ge1\), a valid choice
\[
\varepsilon^{(\beta)}_{t,\delta}
=
C_2\sigma_X
\sqrt{
\frac{d+\log(2(N+1)/\delta)}
{n_{\mathrm{eff}}(t,m_{t,\delta})}
}
\]
for a universal constant \(C_2>0\).  Since
\[
n_{\mathrm{eff}}(t,m_{t,\delta})
\asymp
\frac{t-1}{
\log((N+1)(t-1)/\delta)
},
\]
up to constants depending on the mixing parameters \(B\) and \(b\), this
conservative blocking argument yields
\[
\varepsilon^{(\beta)}_{t,\delta}
=
O\!\left(
\sigma_X
\sqrt{
\frac{
\{d+\log((N+1)/\delta)\}
\log((N+1)(t-1)/\delta)
}{t-1}
}
\right).
\]
Sharper concentration inequalities for geometrically mixing sub-Gaussian
processes can sometimes improve the logarithmic factor or the constants, but
the conformal oracle bound only uses the resulting tolerance
\(\varepsilon_{t,\delta}\).  Thus independence, geometric mixing, and
model-specific concentration results all enter Theorem~\ref{thm:spatial_oracle}
through the same quantity \(\varepsilon_{t,\delta}\).

Finally, substituting the independent sub-Gaussian choice of
\(\varepsilon_{t,\delta}\) into the linear remainder in
Theorem~\ref{thm:spatial_oracle} gives the explicit profile-estimation
contribution
\[
\frac{8N\phi_{\max}(D_\mu+1)}{h^2}\varepsilon_{t,\delta}
=
O\!\left(
\frac{N\phi_{\max}(D_\mu+1)\sigma_X}{h^2}
\sqrt{
\frac{d+\log((N+1)/\delta)}{t-1}
}
\right).
\]
This display shows the usual localization--stability role of the bandwidth.
Smaller \(h\) places more mass on the nearest profiles and can reduce the oracle
spatial gap when profiles are well estimated; larger \(h\) yields more diffuse
weights and is less affected by profile-estimation noise. Thus \(h\) controls the
degree of localization of the spatial branch, rather than changing the weighted
conformal calibration step itself.

\subsection{A homoscedastic factor-model verification of the TV-profile condition}\label{app:homosc_factor}

Here is one model-based verification of the TV-profile part of
Assumption~\ref{ass:spatial_profiles}. The convention is that the common
factor is observed before prediction and is part of the covariate given to the score.
To avoid overloading the main-text notation, write $Z_{j,t}$ for the unit-specific
features and write the actually observed covariate as
\[
\bar X_{j,t}:=(Z_{j,t},F_t).
\]
Thus, in the notation of the main text, $X_{j,t}$ should be read here as the
augmented covariate $\bar X_{j,t}$. The unit-specific population mean is
$\mu_j:=\mathbb E[Z_{j,t}\mid\mathcal F_{t-1}^{+}]$. For the augmented covariate one
may use the profile
\[
\bar\mu_j:=(\mu_j,\nu),
\]
where $\nu\in\mathbb R^{d_2}$ is any common factor component shared by all units
(for instance the limit of the running factor mean, when it exists). Since this
second component is identical across units,
$\|\bar\mu_k-\bar\mu_{N+1}\|_2=\|\mu_k-\mu_{N+1}\|_2$.

\begin{assumption}[Observed-factor homoscedastic model]\label{ass:factor_model_features}
Fix positive integers $d_1$ and $d_2$. The unit-specific feature and observed
common factor satisfy
\[
Z_{j,t}\in\mathbb R^{d_1},\qquad F_t\in\mathbb R^{d_2},
\qquad
\bar X_{j,t}=(Z_{j,t},F_t)\in\mathbb R^{d_1+d_2}.
\]
Let
\[
g_0,\tilde g_0:\mathbb R^{d_1}\to\mathbb R,
\qquad
g_F,\tilde g_F:\mathbb R^{d_1}\to\mathbb R^{d_2}.
\]
For each unit $j=1,\dots,N+1$ and round $t$, the response satisfies
\[
Y_{j,t}
=
g_0(Z_{j,t})
+
g_F(Z_{j,t})^\top F_t
+
\varepsilon_{j,t}.
\]
The fitted predictor used inside the score has the factor form
\[
\hat f(\bar X_{j,t})
=
\tilde g_0(Z_{j,t})
+
\tilde g_F(Z_{j,t})^\top F_t,
\]
and the nonconformity score is the absolute residual
$\hat s_{j,t}=|Y_{j,t}-\hat f(\bar X_{j,t})|$. Assume further that:
\begin{enumerate}
    \item $F_t$ is observed before constructing the round-$t$ prediction set, and
    there is a finite constant $M_F$ such that
    \[
    \mathbb E[\|F_t\|_2^2\mid\mathcal F_{t-1}^{+}]\le M_F^2
    \qquad\text{a.s.\ for all }t;
    \]
    \item conditional on $(F_t,\mathcal F_{t-1}^{+})$, the unit-specific features
    $Z_{1,t},\dots,Z_{N+1,t}$ are independent across units and satisfy
    \[
    Z_{j,t}\sim\mathcal N(\mu_j,\Sigma_Z),
    \]
    where $\mu_j\in\mathbb R^{d_1}$ and
    $\Sigma_Z\in\mathbb R^{d_1\times d_1}$ is a common positive semidefinite
    covariance matrix;
    \item the idiosyncratic noises satisfy
    \[
    \varepsilon_{j,t}\sim\mathcal N(0,\sigma_\varepsilon^2),
    \]
    independently across $j$, and independently of
    $(Z_{1,t},\dots,Z_{N+1,t},F_t)$ conditional on $\mathcal F_{t-1}^{+}$;
    \item the baseline component is correctly specified,
    $\tilde g_0=g_0$, and the loading error is linear: there exists a matrix
    $B\in\mathbb R^{d_2\times d_1}$ such that, for every $z\in\mathbb R^{d_1}$,
    \[
    g_F(z)-\tilde g_F(z)=Bz.
    \]
\end{enumerate}
\end{assumption}

Under Assumption~\ref{ass:factor_model_features}, the fitted model is explicit and the
pre-absolute residual is
\[
U_{j,t}:=Y_{j,t}-\hat f(\bar X_{j,t})
=
F_t^\top BZ_{j,t}+\varepsilon_{j,t},
\qquad
\hat s_{j,t}=|U_{j,t}|.
\]
The common factor can induce cross-sectional dependence in the unconditional score
vector, but conditional on $(F_t,\mathcal F_{t-1}^{+})$ the scores are independent across
units.

\begin{lemma}[TV reduction under factor conditioning]\label{lem:conditional_score_tv}
Under Assumption~\ref{ass:factor_model_features}, for every $k\in\{1,\dots,N\}$,
\[
d_{\mathrm{TV}}\!\left(P_t,P_{t,k}^{\mathrm{swap}}\right)
\le
2\,\mathbb E\!\left[
d_{\mathrm{TV}}\!\left(
P_{\hat s_{k,t}\mid F_t,\mathcal F_{t-1}^{+}},
P_{\hat s_{N+1,t}\mid F_t,\mathcal F_{t-1}^{+}}
\right)
\,\middle|\,\mathcal F_{t-1}^{+}
\right].
\]
\end{lemma}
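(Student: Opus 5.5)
Conditioning on $(F_t,\mathcal F_{t-1}^{+})$ is the key step. Both $P_t$ and $P_{t,k}^{\mathrm{swap}}$ are mixtures over the conditional law of $F_t$ given $\mathcal F_{t-1}^{+}$. Write $Q:=\mathcal L(F_t\mid\mathcal F_{t-1}^{+})$, and for each realized factor value $f$ let $P_t^f:=\mathcal L(Z^{(t)}\mid F_t=f,\mathcal F_{t-1}^{+})$ and $P_{t,k}^{\mathrm{swap},f}$ its swapped version. Then $P_t=\int P_t^f\,Q(df)$ and $P_{t,k}^{\mathrm{swap}}=\int P_{t,k}^{\mathrm{swap},f}\,Q(df)$, because swapping two coordinates is a measurable map applied pointwise and commutes with mixing. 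Joint convexity of total variation over a common mixing measure then gives
\[
d_{\mathrm{TV}}(P_t,P_{t,k}^{\mathrm{swap}})\le\int d_{\mathrm{TV}}\bigl(P_t^f,P_{t,k}^{\mathrm{swap},f}\bigr)\,Q(df)
=\mathbb E\bigl[d_{\mathrm{TV}}(P_t^{F_t},P_{t,k}^{\mathrm{swap},F_t})\,\big|\,\mathcal F_{t-1}^{+}\bigr].
\]
This holds because for any event $A$, $|P_t(A)-P_{t,k}^{\mathrm{swap}}(A)|\le\int|P_t^f(A)-P_{t,k}^{\mathrm{swap},f}(A)|\,Q(df)$, and the integrand is bounded by the pointwise TV distance. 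Measurability of $f\mapsto d_{\mathrm{TV}}$ follows from regular conditional laws on Polish space, which I take as routine.

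Next I reduce the conditional swap distance to a marginal comparison. By items 2 and 3 of Assumption~\ref{ass:factor_model_features}, conditional on $(F_t,\mathcal F_{t-1}^{+})$ the residuals $U_{j,t}=F_t^\top BZ_{j,t}+\varepsilon_{j,t}$ are independent across $j$, since $Z_{j,t}$ and $\varepsilon_{j,t}$ are independent across units and of each other. Hence $P_t^f=\bigotimes_{j}\nu_j^f$, where $\nu_j^f$ is the conditional law of $\hat s_{j,t}$. The swapped law is the same product with $\nu_k^f$ and $\nu_{N+1}^f$ interchanged. Applying the standard product subadditivity $d_{\mathrm{TV}}(\bigotimes\nu_j,\bigotimes\nu_j')\le\sum_j d_{\mathrm{TV}}(\nu_j,\nu_j')$ (proved by a one-coordinate-at-a-time telescoping hybrid argument), only the two swapped coordinates differ, and each contributes $d_{\mathrm{TV}}(\nu_k^f,\nu_{N+1}^f)$. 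This gives the factor $2$.

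Combining the two displays yields the lemma. The main point needing care is the first step: one must make sure the swap operation and the disintegration over $F_t$ interact correctly, which is where conditional independence given $F_t$ is essential. Without conditioning on $F_t$, the scores are dependent through the common factor and the product bound would not apply. I expect no genuine obstacle beyond stating the mixture convexity cleanly with regular conditional probabilities. The Gaussian structure is not needed for this lemma; it is presumably used afterwards to bound the marginal TV by $\|\mu_k-\mu_{N+1}\|_2$.
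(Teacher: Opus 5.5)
Your proposal is correct and follows essentially the same route as the paper's proof. Both arguments first bound the swap TV by the conditional expectation of the $F_t$-conditional swap TV, using the mixture/tower-property representation. Both then use conditional independence given $(F_t,\mathcal F_{t-1}^{+})$ to write the two laws as products that differ only in the $k$-th and target coordinates. Your one-coordinate-at-a-time subadditivity argument is the same as the paper's ``tensoring with a common law cannot increase TV, plus the triangle inequality'' step, and it yields the factor $2$.
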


\begin{proof}
Fix $t$ and $k$, and condition on $\mathcal F_{t-1}^{+}$. For a realization $f$ of
$F_t$, let
\[
Q_t^{f}
:=
\mathcal L\!\left((\hat s_{1,t},\dots,\hat s_{N+1,t})\,\middle|\,
F_t=f,\mathcal F_{t-1}^{+}\right),
\]
and let $Q_{t,k}^{\mathrm{swap},f}$ be the corresponding law after swapping the
$k$-th and target score coordinates. By the tower property,
\[
P_t(A)=\mathbb E[Q_t^{F_t}(A)\mid\mathcal F_{t-1}^{+}],
\qquad
P_{t,k}^{\mathrm{swap}}(A)
=
\mathbb E[Q_{t,k}^{\mathrm{swap},F_t}(A)\mid\mathcal F_{t-1}^{+}]
\]
for every measurable $A\subseteq\mathbb R^{N+1}$. Hence, 
\[
d_{\mathrm{TV}}(P_t,P_{t,k}^{\mathrm{swap}})
\le
\mathbb E\!\left[
d_{\mathrm{TV}}(Q_t^{F_t},Q_{t,k}^{\mathrm{swap},F_t})
\mid\mathcal F_{t-1}^{+}
\right].
\]
Conditional on $(F_t=f,\mathcal F_{t-1}^{+})$, the score coordinates are independent.
Writing
\[
\nu_{j,f}:=\mathcal L(\hat s_{j,t}\mid F_t=f,\mathcal F_{t-1}^{+}),
\qquad
R_f:=\bigotimes_{j\neq k,N+1}\nu_{j,f},
\]
we have
\[
Q_t^f=R_f\otimes\nu_{k,f}\otimes\nu_{N+1,f},
\qquad
Q_{t,k}^{\mathrm{swap},f}=R_f\otimes\nu_{N+1,f}\otimes\nu_{k,f}.
\]
Tensoring with a common probability law cannot increase total variation distance.
Using this contraction and the triangle inequality gives
\[
d_{\mathrm{TV}}(Q_t^f,Q_{t,k}^{\mathrm{swap},f})
\le
2\,d_{\mathrm{TV}}(\nu_{k,f},\nu_{N+1,f}).
\]
Substituting this into the previous display proves the claim.
\end{proof}

\begin{lemma}[Conditional score-TV bound]\label{lem:homosc_score_tv}
Under Assumption~\ref{ass:factor_model_features}, for every $k\in\{1,\dots,N\}$,
\[
\mathbb E\!\left[
d_{\mathrm{TV}}\!\left(
P_{\hat s_{k,t}\mid F_t,\mathcal F_{t-1}^{+}},
P_{\hat s_{N+1,t}\mid F_t,\mathcal F_{t-1}^{+}}
\right)
\,\middle|\,\mathcal F_{t-1}^{+}
\right]
\le
C_{\mathrm{mean}}\|\mu_k-\mu_{N+1}\|_2,
\]
where
\[
C_{\mathrm{mean}}:=\frac{\|B\|_{\mathrm{op}}M_F}{2\sigma_\varepsilon}.
\]
\end{lemma}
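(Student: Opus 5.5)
Fix $t$, $k$, and a realization $f$ of $F_t$, and condition on $(F_t=f,\mathcal F_{t-1}^{+})$. Under Assumption~\ref{ass:factor_model_features}, the pre-absolute residual $U_{j,t}=f^\top BZ_{j,t}+\varepsilon_{j,t}$ is an affine image of independent Gaussians. Write $v_f:=B^\top f\in\mathbb R^{d_1}$. Then
\[
U_{j,t}\mid F_t=f,\mathcal F_{t-1}^{+}\;\sim\;\mathcal N\bigl(v_f^\top\mu_j,\;\tau_f^2\bigr),
\qquad
\tau_f^2:=v_f^\top\Sigma_Z v_f+\sigma_\varepsilon^2\ge\sigma_\varepsilon^2 .
\]
The variance $\tau_f^2$ does not depend on $j$, because $\Sigma_Z$ and $\sigma_\varepsilon^2$ are common to all units; this is the homoscedasticity being used. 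So for units $k$ and $N+1$ the conditional residual laws are two Gaussians with equal variance that differ only in their means.

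The plan has three steps.
\begin{enumerate}
\item Since $\hat s_{j,t}=|U_{j,t}|$ is a measurable function of $U_{j,t}$, the data-processing inequality gives
\[
d_{\mathrm{TV}}\bigl(P_{\hat s_{k,t}\mid F_t,\mathcal F_{t-1}^{+}},P_{\hat s_{N+1,t}\mid F_t,\mathcal F_{t-1}^{+}}\bigr)
\le d_{\mathrm{TV}}\bigl(\mathcal N(m_k,\tau_f^2),\mathcal N(m_{N+1},\tau_f^2)\bigr),
\qquad m_j:=v_f^\top\mu_j .
\]
\item For equal-variance Gaussians,
\[
d_{\mathrm{TV}}=2\Phi\bigl(|m_k-m_{N+1}|/(2\tau_f)\bigr)-1\le \frac{|m_k-m_{N+1}|}{\tau_f\sqrt{2\pi}} .
\]
Alternatively, Pinsker's inequality with $\mathrm{KL}=(m_k-m_{N+1})^2/(2\tau_f^2)$ gives the bound $|m_k-m_{N+1}|/(2\tau_f)$. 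Both constants are at most $1/2$, so either route gives $|m_k-m_{N+1}|/(2\tau_f)$.
\item Bound the mean gap by Cauchy--Schwarz, $|m_k-m_{N+1}|=|f^\top B(\mu_k-\mu_{N+1})|\le\|f\|_2\|B\|_{\mathrm{op}}\|\mu_k-\mu_{N+1}\|_2$, and use $\tau_f\ge\sigma_\varepsilon$. The pointwise conditional TV is then at most $\|B\|_{\mathrm{op}}\|F_t\|_2\|\mu_k-\mu_{N+1}\|_2/(2\sigma_\varepsilon)$.
\end{enumerate}

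Taking $\mathbb E[\cdot\mid\mathcal F_{t-1}^{+}]$ and applying Jensen's inequality, $\mathbb E[\|F_t\|_2\mid\mathcal F_{t-1}^{+}]\le(\mathbb E[\|F_t\|_2^2\mid\mathcal F_{t-1}^{+}])^{1/2}\le M_F$. This yields the claimed constant $C_{\mathrm{mean}}=\|B\|_{\mathrm{op}}M_F/(2\sigma_\varepsilon)$.

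The main obstacle is bookkeeping rather than analysis. First, the conditional Gaussian law of $Z_{j,t}$ given $(F_t,\mathcal F_{t-1}^{+})$ must be combined with the independence of $\varepsilon_{j,t}$ from $(Z,F_t)$ given $\mathcal F_{t-1}^{+}$, which makes $\varepsilon_{j,t}$ still $\mathcal N(0,\sigma_\varepsilon^2)$ and independent of $Z_{j,t}$ after conditioning additionally on $F_t$. Second, the map $f\mapsto d_{\mathrm{TV}}(\cdot)$ must be measurable so that the conditional expectation is well defined; this follows from the regular-conditional-probability conventions of Appendix~\ref{app:proofs}. If $\Sigma_Z$ is singular, the Gaussian for $U_{j,t}$ is still nondegenerate because $\sigma_\varepsilon>0$, so the argument is unaffected.
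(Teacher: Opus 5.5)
Your proposal is correct and follows the paper's proof essentially step for step. Conditional on $F_t=f$ the two residuals are equal-variance Gaussians, and data processing under $|\cdot|$ reduces the claim to the one-dimensional bound $|m_k-m_{N+1}|/(2\tau_f)$; the operator-norm bound, $\tau_f\ge\sigma_\varepsilon$, and conditional Jensen then finish the argument. The only cosmetic difference is that you get the Gaussian bound from the closed form $2\Phi(\cdot)-1$ or from Pinsker, whereas the paper cites Theorem~1.3 of \citep{devroye2018total} and notes that its variance-mismatch term vanishes.
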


\begin{proof}
Fix $k$ and condition on $(F_t=f,\mathcal F_{t-1}^{+})$. From the residual expression above,
\[
U_{j,t}\mid(F_t=f,\mathcal F_{t-1}^{+})
\sim
\mathcal N\!\left(f^\top B\mu_j,\,
f^\top B\Sigma_ZB^\top f+\sigma_\varepsilon^2\right).
\]
The two conditional variances are equal for units $k$ and $N+1$. We use the
one-dimensional Gaussian total-variation bound of
\citep[Theorem~1.3]{devroye2018total}: for any
$\mu_1,\mu_2\in\mathbb R$ and $\sigma_1,\sigma_2>0$,
\[
d_{\mathrm{TV}}\!\left(
\mathcal N(\mu_1,\sigma_1^2),
\mathcal N(\mu_2,\sigma_2^2)
\right)
\le
\frac{3|\sigma_1^2-\sigma_2^2|}{2\sigma_1^2}
+
\frac{|\mu_1-\mu_2|}{2\sigma_1}.
\]
By data processing under the absolute-value map and the displayed bound, whose
variance-mismatch term vanishes here,
\begin{align*}
&d_{\mathrm{TV}}\!\left(
P_{\hat s_{k,t}\mid F_t=f,\mathcal F_{t-1}^{+}},
P_{\hat s_{N+1,t}\mid F_t=f,\mathcal F_{t-1}^{+}}
\right) \\
&\le
\frac{|f^\top B(\mu_k-\mu_{N+1})|}
{2\sqrt{f^\top B\Sigma_ZB^\top f+\sigma_\varepsilon^2}}
\le
\frac{\|B\|_{\mathrm{op}}\|f\|_2}{2\sigma_\varepsilon}
\|\mu_k-\mu_{N+1}\|_2 .
\end{align*}
Taking conditional expectation and using
\[
\mathbb E[\|F_t\|_2\mid\mathcal F_{t-1}^{+}]
\le
\bigl(\mathbb E[\|F_t\|_2^2\mid\mathcal F_{t-1}^{+}]\bigr)^{1/2}
\le M_F
\]
gives the result.
\end{proof}

\begin{proposition}[Homoscedastic factor model implies the TV-profile condition]\label{prop:homosc-factor-implies-a44}
Under Assumption~\ref{ass:factor_model_features}, the TV-profile condition in
Assumption~\ref{ass:spatial_profiles} holds with
\[
\phi(r)=\min\{1,\,2C_{\mathrm{mean}}r\}.
\]
\end{proposition}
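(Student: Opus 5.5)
The plan is to chain the two preceding lemmas and then truncate at one. Fix $t\ge1$ and $k\in\{1,\dots,N\}$, and work on the full-probability set where the regular conditional laws given $\mathcal F_{t-1}^{+}$ are well defined.

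\emph{Step 1.} Lemma~\ref{lem:conditional_score_tv} bounds $d_{\mathrm{TV}}(P_t,P_{t,k}^{\mathrm{swap}})$ by twice the conditional expectation of the factor-conditional score TV. Lemma~\ref{lem:homosc_score_tv} bounds that conditional expectation by $C_{\mathrm{mean}}\|\mu_k-\mu_{N+1}\|_2$. Together they give, almost surely,
\[
d_{\mathrm{TV}}\!\left(P_t,P_{t,k}^{\mathrm{swap}}\right)\le 2C_{\mathrm{mean}}\|\mu_k-\mu_{N+1}\|_2 .
\]

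\emph{Step 2.} Total variation distance is always at most $1$. Combining this with Step 1 yields
\[
d_{\mathrm{TV}}\!\left(P_t,P_{t,k}^{\mathrm{swap}}\right)\le\min\{1,2C_{\mathrm{mean}}\|\mu_k-\mu_{N+1}\|_2\}=\phi(\|\mu_k-\mu_{N+1}\|_2).
\]

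\emph{Step 3.} Check that $\phi$ is admissible for Assumption~\ref{ass:spatial_profiles}. It maps $[0,\infty)$ into $[0,1]$, since $C_{\mathrm{mean}}\ge0$ and the minimum with $1$ caps the value. It is nondecreasing, being the minimum of a constant and a nondecreasing linear function.

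\emph{Step 4 (profile consistency).} In this section the covariate is the augmented $\bar X_{j,t}$ with profile $\bar\mu_j=(\mu_j,\nu)$. Because the factor component is shared across units, $\|\bar\mu_k-\bar\mu_{N+1}\|_2=\|\mu_k-\mu_{N+1}\|_2$. So the bound holds in the profile geometry the assumption refers to. We also need $\mu_j$ not to depend on $t$; this is implicit in Assumption~\ref{ass:factor_model_features}(2), which posits fixed $\mu_j$.

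\emph{Main obstacle.} The argument is essentially bookkeeping, and the delicate point is measure-theoretic. The almost-sure statements of the two lemmas must hold simultaneously for all $k$ and $t$. Since these are countably many indices, a countable union of null sets handles this.

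A degenerate case also needs a brief comment. The bound in Lemma~\ref{lem:homosc_score_tv} requires $\sigma_\varepsilon>0$, as implicitly assumed there. If $\sigma_\varepsilon=0$, one would interpret $C_{\mathrm{mean}}=\infty$, and then $\phi\equiv1$ on $(0,\infty)$ with $\phi(0)=0$ under the convention $\infty\cdot0=0$, so the bound holds trivially.
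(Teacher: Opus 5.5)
Your proposal is correct and matches the paper's proof: you chain Lemmas~\ref{lem:conditional_score_tv} and~\ref{lem:homosc_score_tv} to get $d_{\mathrm{TV}}(P_t,P_{t,k}^{\mathrm{swap}})\le 2C_{\mathrm{mean}}\|\mu_k-\mu_{N+1}\|_2$, truncate at $1$, and use $\|\bar\mu_k-\bar\mu_{N+1}\|_2=\|\mu_k-\mu_{N+1}\|_2$ for the augmented profile. Your checks that $\phi$ is nondecreasing with values in $[0,1]$ are harmless additions the paper leaves implicit, and the $\sigma_\varepsilon=0$ aside is unnecessary since the model presumes $\sigma_\varepsilon>0$ (besides, at $r=0$ that case is not trivial: it would rest on the two conditional score laws coinciding).
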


\begin{proof}
Combining Lemmas~\ref{lem:conditional_score_tv} and~\ref{lem:homosc_score_tv} gives
\[
d_{\mathrm{TV}}\!\left(P_t,P_{t,k}^{\mathrm{swap}}\right)
\le
2C_{\mathrm{mean}}\|\mu_k-\mu_{N+1}\|_2.
\]
Since total variation is bounded by $1$, the same bound holds after truncation at
$1$. For the augmented main-text profile, take
$\bar\mu_j=(\mu_j,\nu)$ with the same $\nu$ for all units. Then
$\|\bar\mu_k-\bar\mu_{N+1}\|_2=\|\mu_k-\mu_{N+1}\|_2$, so this is exactly the
TV-profile condition with the displayed $\phi$.
\end{proof}

\begin{theorem}[Explicit stepwise coverage bound under the factor model]\label{thm:wtqa_stepwise_coverage}
Under Assumption~\ref{ass:factor_model_features}, the round-$t$ predictive coverage satisfies
\[
\mathbb P\!\left(
Y_{N+1,t}\in C_t(\bar X_{N+1,t})
\mid \mathcal F_{t-1}^{+}
\right)
\ge
1-\bar\alpha_t
-
\sum_{k=1}^N
w_{N+1,k}^{(t)}
\min\{1,\,2C_{\mathrm{mean}}\|\mu_k-\mu_{N+1}\|_2\}.
\]
\end{theorem}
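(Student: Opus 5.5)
The plan is to combine Theorem~\ref{thm:spatial_stepwise} with Proposition~\ref{prop:homosc-factor-implies-a44}, working in the augmented-covariate notation $\bar X_{j,t}=(Z_{j,t},F_t)$.

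\textbf{Step 1 (stepwise bound).} The algorithm is run on the augmented covariates, so the deployed set is $C_t(\bar X_{N+1,t})$. The weights $W^{(t)}$ remain $\mathcal F_{t-1}$-measurable, because they are built from running means of the observed covariates. The level $\bar\alpha_t$ remains $\mathcal F_{t-1}^{+}$-measurable. Hence Theorem~\ref{thm:spatial_stepwise} applies verbatim. Taking complements gives
\[
\mathbb P\bigl(Y_{N+1,t}\in C_t(\bar X_{N+1,t})\mid\mathcal F_{t-1}^{+}\bigr)\ge 1-\bar\alpha_t-\sum_{k=1}^N w_{N+1,k}^{(t)}\,d_{\mathrm{TV}}\bigl(P_t,P_{t,k}^{\mathrm{swap}}\bigr).
\]

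\textbf{Step 2 (TV terms).} For each $k\le N$, I bound $d_{\mathrm{TV}}(P_t,P_{t,k}^{\mathrm{swap}})$ by $\min\{1,2C_{\mathrm{mean}}\|\mu_k-\mu_{N+1}\|_2\}$. This is the chain Lemma~\ref{lem:conditional_score_tv} $\Rightarrow$ Lemma~\ref{lem:homosc_score_tv}, packaged as Proposition~\ref{prop:homosc-factor-implies-a44}, together with the trivial bound $d_{\mathrm{TV}}\le 1$.

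These inequalities hold almost surely as statements about regular conditional laws given $\mathcal F_{t-1}^{+}$. Since the weights are nonnegative, substituting them term by term into the sum from Step 1 preserves the inequality and yields the claim.

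\textbf{Main obstacle.} The only delicate point is a consistency check of the conditioning in Step 2. The laws $P_t$ and $P_{t,k}^{\mathrm{swap}}$ are conditional on $\mathcal F_{t-1}^{+}$ only. The factor $F_t$ is integrated out inside Lemma~\ref{lem:conditional_score_tv}, not conditioned on, so the bound matches exactly the quantity appearing in Theorem~\ref{thm:spatial_stepwise}.

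I would also confirm that Lemma~\ref{lem:homosc_score_tv} requires $\sigma_\varepsilon>0$; otherwise $C_{\mathrm{mean}}$ is infinite and the $\min\{1,\cdot\}$ makes the bound trivially true. Likewise, $\mu_j$ is defined as a conditional mean given $\mathcal F_{t-1}^{+}$, which the model treats as fixed, so the same $\mu_j$ appears on both sides.

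No profile-estimation step (Theorem~\ref{thm:spatial_oracle}) is needed. The statement is in terms of the deployed weights $w_{N+1,k}^{(t)}$, so it holds almost surely rather than on a high-probability event.
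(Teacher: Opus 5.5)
Your proposal is correct and matches the paper's proof. Like the paper, you apply Theorem~\ref{thm:spatial_stepwise} conditionally on $\mathcal F_{t-1}^{+}$, substitute the bound $d_{\mathrm{TV}}(P_t,P_{t,k}^{\mathrm{swap}})\le\min\{1,2C_{\mathrm{mean}}\|\mu_k-\mu_{N+1}\|_2\}$ from Proposition~\ref{prop:homosc-factor-implies-a44} term by term (using nonnegativity of the weights), and rearrange, with your extra checks on the conditioning and the absence of a profile-estimation event being accurate but not needed beyond that two-line argument.
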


\begin{proof}
By Theorem~\ref{thm:spatial_stepwise},
\[
\mathbb P\!\left(
Y_{N+1,t}\notin C_t(\bar X_{N+1,t})
\mid \mathcal F_{t-1}^{+}
\right)
\le
\bar\alpha_t+
\sum_{k=1}^N
w_{N+1,k}^{(t)}
d_{\mathrm{TV}}\!\left(P_t,P_{t,k}^{\mathrm{swap}}\right).
\]
Apply Proposition~\ref{prop:homosc-factor-implies-a44} term by term and rearrange.
\end{proof}

Theorem~\ref{thm:wtqa_stepwise_coverage} clarifies the role of the spatial weighting
step. The coverage gap is small when the weight vector $W^{(t)}$ concentrates on
calibration units whose raw-feature mean profiles are close to that of the target
unit. The common observed factor affects the conditional score laws, but it does not
create a cross-unit mean-profile discrepancy because the same factor coordinate is
included for every unit.

\begin{remark}[Oracle parameters and implemented weights]\label{rem:oracle_plugin}
Algorithm~\ref{alg:wtqa} is written with generic covariates $X_{j,t}$. In the present
factor model these covariates are $\bar X_{j,t}=(Z_{j,t},F_t)$. The running mean of the
augmented covariate has the form
\[
\widehat{\bar\mu}_{j,t}
=
\left(
\frac1t\sum_{s=1}^t Z_{j,s},\,
\frac1t\sum_{s=1}^t F_s
\right).
\]
The factor component is identical across units at each time and hence has the same
running mean for every unit, so
\[
\|\widehat{\bar\mu}_{k,t}-\widehat{\bar\mu}_{N+1,t}\|_2
=
\left\|
\frac1t\sum_{s=1}^t Z_{k,s}
-
\frac1t\sum_{s=1}^t Z_{N+1,s}
\right\|_2.
\]
Thus the implemented mean-based weights are still driven by the raw-feature
geometry $\|\mu_k-\mu_{N+1}\|_2$, even though the covariate supplied to the score is
the higher-dimensional augmented vector. If feature covariances vary across units,
then a second-order extension would require covariance estimation in addition to
these running means; see Appendix~\ref{app:second-order-extension}.
\end{remark}

\subsubsection{Second-order extension beyond homoscedastic features}\label{app:second-order-extension}

If the homoscedastic condition is replaced by
$Z_{j,t}\mid(F_t,\mathcal F_{t-1}^{+})\sim\mathcal N(\mu_j,\Sigma_{Z,j})$, the same
argument gives a profile in both first and second moments. Specifically, with
\[
C_{\mathrm{cov}}:=\frac{3\|B\|_{\mathrm{op}}^2 M_F^2}{2\sigma_\varepsilon^2},
\qquad
C_{\mathrm{mean}}:=\frac{\|B\|_{\mathrm{op}}M_F}{2\sigma_\varepsilon},
\]
data processing and \citep[Theorem~1.3]{devroye2018total} imply
\[
\mathbb E\!\left[
d_{\mathrm{TV}}\!\left(
P_{\hat s_{k,t}\mid F_t,\mathcal F_{t-1}^{+}},
P_{\hat s_{N+1,t}\mid F_t,\mathcal F_{t-1}^{+}}
\right)
\,\middle|\,\mathcal F_{t-1}^{+}
\right]
\le
C_{\mathrm{cov}}\|\Sigma_{Z,k}-\Sigma_{Z,N+1}\|_{\mathrm{op}}
+C_{\mathrm{mean}}\|\mu_k-\mu_{N+1}\|_2 .
\]
Consequently, Lemma~\ref{lem:conditional_score_tv} yields the stepwise bound
\begin{align*}
&\mathbb P\!\left(
Y_{N+1,t}\notin C_t(\bar X_{N+1,t})\,\middle|\,\mathcal F_{t-1}^{+}
\right) \\
&\le
\bar\alpha_t
+2\sum_{k=1}^N w_{N+1,k}^{(t)}
\left[
C_{\mathrm{cov}}\|\Sigma_{Z,k}-\Sigma_{Z,N+1}\|_{\mathrm{op}}
+C_{\mathrm{mean}}\|\mu_k-\mu_{N+1}\|_2
\right].
\end{align*}
Thus a covariance-aware version of W-TQA would need to estimate second-moment
profiles in addition to the running means; the main algorithm keeps the
first-moment version to avoid this extra online estimation burden.

\subsection{Proof of Theorem~\ref{thm:temporal_observed}}

\begin{proof}
The proof is the same telescoping argument as Proposition~4.1 of
\citep{gibbs2021adaptive}, now applied only on rounds where target feedback is
observed. Under Algorithm~\ref{alg:wtqa}'s ordering, the start-of-round update
reads, for all $t\ge 1$,
\[
\alpha_t
=
\alpha_{t-1}+\gamma R_{t-1}(\alpha-\ell_{t-1}),
\qquad
\ell_t=\mathbf 1\{Y_{N+1,t}\notin C_t(X_{N+1,t})\},
\]
with $\alpha_0=\alpha$ and $R_0=0$. Reindexing $s=t-1$ (so the update that
uses feedback $\ell_s$ produces $\alpha_{s+1}$),
\[
\alpha_{s+1}=\alpha_s+\gamma R_s(\alpha-\ell_s),\qquad s\ge 0,
\]
where $\ell_s$ is the miscoverage of $C_s$, which is built using $\alpha_s$
(already updated at the start of round $s$ from the lagged feedback
$\ell_{s-1}$ when revealed). Define $S_T:=\sum_{t=1}^T R_t$.

First, summing the recursion gives
\[
\alpha_{T+1}
=
\alpha+\gamma\sum_{s=0}^{T} R_s(\alpha-\ell_s)
=
\alpha+\gamma\sum_{t=1}^T R_t(\alpha-\ell_t),
\]
since $\alpha_0=\alpha$ and $R_0=0$. Rearranging yields
\[
\sum_{t=1}^T R_t(\ell_t-\alpha)
=
\frac{\alpha-\alpha_{T+1}}{\gamma}.
\]

It remains to control $|\alpha_{T+1}-\alpha|$. The sequence $(\alpha_s)$
changes only on indices with $R_s=1$, so the subsequence of observed-feedback
rounds recovers the standard unprojected ACI recursion of
\citep{gibbs2021adaptive}. We claim
\[
\alpha_s\in[-\gamma,1+\gamma]
\qquad\text{for all } s\ge 0,
\]
which we verify by induction. The base case $\alpha_0=\alpha\in[0,1]$ is
immediate. For the inductive step, suppose $\alpha_s\in[-\gamma,1+\gamma]$;
if $R_s=0$ there is nothing to prove. For $R_s=1$, consider three cases.
\begin{itemize}
\item $\alpha_s\in(1,1+\gamma]$: then $1-\alpha_s<0$, so
$Q_{1-\alpha_s}(\widetilde s_s;W^{(s)})=-\infty$, Algorithm~\ref{alg:wtqa} sets
$C_s=\emptyset$, and $\ell_s=1$.
Hence $\alpha_{s+1}=\alpha_s-\gamma(1-\alpha)\le 1+\gamma$ and
$\alpha_{s+1}\ge 1-\gamma$.
\item $\alpha_s\in[-\gamma,0)$: then $1-\alpha_s>1$, so
$Q_{1-\alpha_s}(\widetilde s_s;W^{(s)})=+\infty$, Algorithm~\ref{alg:wtqa} sets
$C_s=\mathbb R$, and $\ell_s=0$.
Hence $\alpha_{s+1}=\alpha_s+\gamma\alpha\ge -\gamma$ and
$\alpha_{s+1}<\gamma\le 1+\gamma$.
\item $\alpha_s\in[0,1]$: then $\alpha_{s+1}=\alpha_s+\gamma(\alpha-\ell_s)
\in[\alpha_s-\gamma(1-\alpha),\alpha_s+\gamma\alpha]\subset[-\gamma,1+\gamma]$.
\end{itemize}
In all cases $\alpha_{s+1}\in[-\gamma,1+\gamma]$. Consequently,
\[
|\alpha_{T+1}-\alpha|
\le
\max\{\alpha,1-\alpha\}+\gamma,
\]
since $\alpha_{T+1}\le 1+\gamma$ gives $\alpha_{T+1}-\alpha\le(1-\alpha)+\gamma$ and
$\alpha_{T+1}\ge-\gamma$ gives $\alpha-\alpha_{T+1}\le\alpha+\gamma$.
Dividing the telescoping identity by $T$ gives
\begin{equation}
\label{eq:t-normalized}
\left|
\frac1T\sum_{t=1}^T R_t\ell_t-\alpha\,\frac{S_T}{T}
\right|
=
\frac{|\alpha_{T+1}-\alpha|}{T\gamma}
\le
\frac{\max\{\alpha,1-\alpha\}+\gamma}{T\gamma},
\qquad\forall T\ge 1.
\end{equation}
If $S_T\ge 1$, dividing instead by $S_T$ and using
$\frac{1}{S_T}\sum_t R_t(\ell_t-\alpha)=\frac{\alpha-\alpha_{T+1}}{S_T\gamma}$ gives
\[
\left|
\frac1{S_T}\sum_{t=1}^T R_t\ell_t-\alpha
\right|
\le
\frac{\max\{\alpha,1-\alpha\}+\gamma}{S_T\gamma}.
\]
If $S_T\to\infty$ almost surely, the right-hand side converges to zero, which
proves
\[
\lim_{T\to\infty}\frac1{S_T}\sum_{t=1}^T R_t\ell_t=\alpha
\qquad\text{almost surely}.
\]
\end{proof}

\subsection{Proof of Theorem~\ref{thm:temporal_mcar}}

\begin{proof}
Write $C_\alpha:=\max\{\alpha,1-\alpha\}+\gamma$.
By~\eqref{eq:t-normalized} in the proof of
Theorem~\ref{thm:temporal_observed}, almost surely,
\[
\left|
\frac1T\sum_{t=1}^T R_t\ell_t
-
\alpha\,\frac1T\sum_{t=1}^T R_t
\right|
\le
\frac{C_\alpha}{T\gamma}.
\]
Taking expectations and using Jensen's inequality yields
\[
\left|
\frac1T\sum_{t=1}^T \mathbb E[R_t\ell_t]
-
\alpha\,\frac1T\sum_{t=1}^T \mathbb E[R_t]
\right|
\le
\frac{C_\alpha}{T\gamma}.
\]

The set $C_t$ is $\mathcal F_t$-measurable by construction, and
$Y_{N+1,t}\in\sigma(Y_{N+1,t})$, so $\ell_t=\mathbf 1\{Y_{N+1,t}\notin C_t\}$
is $\mathcal H_t$-measurable. Assumption~\ref{ass:mcar} then gives
\[
\mathbb E[R_t\mid \mathcal H_t]=\pi
\qquad\text{a.s.}
\]
Therefore,
\[
\mathbb E[R_t\ell_t]
=
\mathbb E\!\left[\ell_t\,\mathbb E(R_t\mid \mathcal H_t)\right]
=
\pi\,\mathbb E[\ell_t],
\qquad
\mathbb E[R_t]=\pi.
\]
Substituting these identities into the previous display gives
\[
\pi\left|
\frac1T\sum_{t=1}^T \mathbb E[\ell_t]-\alpha
\right|
\le
\frac{C_\alpha}{T\gamma}.
\]
Since $\mathbb E[\ell_t]=\mathbb P\!\left(Y_{N+1,t}\notin C_t(X_{N+1,t})\right)$,
dividing by $\pi$ proves the finite-$T$ bound. Letting $T\to\infty$ gives the
stated convergence of the average over all deployment rounds of marginal
miscoverage probabilities to $\alpha$.
\end{proof}
\paragraph{Selection bias under non-MCAR feedback.}
\label{app:selection_bias}
\label{app:mcar-proof}
More generally, define the conditional reveal propensity
\[
p_t:=\mathbb P(R_t=1\mid \mathcal H_t).
\]
This is an $\mathcal H_t$-measurable random variable, not a deterministic
time-only parameter: under informative feedback it may depend on the current
covariates, the target outcome, and the prediction difficulty. Let
$C_\alpha:=\max\{\alpha,1-\alpha\}+\gamma$. Since $\ell_t$ is
$\mathcal H_t$-measurable, the normalized telescoping bound
\eqref{eq:t-normalized} implies
\[
\left|
\frac1T\sum_{t=1}^T
\mathbb E\!\left[p_t(\ell_t-\alpha)\right]
\right|
\le
\frac{C_\alpha}{T\gamma}.
\]
Now fix any benchmark reveal rate $\pi\in(0,1]$. The calendar-time average
miscoverage residual satisfies the exact identity
\[
\pi\,\frac1T\sum_{t=1}^T\mathbb E[\ell_t-\alpha]
=
\frac1T\sum_{t=1}^T
\mathbb E\!\left[p_t(\ell_t-\alpha)\right]
-
\frac1T\sum_{t=1}^T
\mathbb E\!\left[(p_t-\pi)(\ell_t-\alpha)\right].
\]
Define the second term's magnitude by
\[
B_T:=
\left|
\frac1T\sum_{t=1}^T
\mathbb E\!\left[(p_t-\pi)(\ell_t-\alpha)\right]
\right|,
\]
which measures how deviations of the reveal propensity from the benchmark $\pi$
align with the centered miscoverage residual. Taking absolute values in this
identity gives
\[
\left|
\frac1T\sum_{t=1}^T
\mathbb P\{Y_{N+1,t}\notin C_t(X_{N+1,t})\}
-\alpha
\right|
\le
\frac{C_\alpha}{\pi T\gamma}+\frac{B_T}{\pi}.
\]
If the marginal reveal rate is stable, $\mathbb E[p_t]=\Pi$ for all $t$, then
taking $\pi=\Pi$ gives
\[
B_T=
\left|
\frac1T\sum_{t=1}^T \mathrm{Cov}(p_t,\ell_t)
\right|.
\]
Thus the extra term is exactly the average covariance between reveal propensity
and miscoverage. MCAR is the stronger special case where $p_t=\Pi$ almost surely
for every $t$, so $B_T=0$.

\section{Experiment Details and Additional Results}\label{app:experiments}

The remaining experimental details are grouped here: preprocessing, feature
definitions, panel splits, hyperparameters, missing-feedback sweeps, and the
parameter-sensitivity checks.

\paragraph{Finite-width diagnostics.}
All width diagnostics use the finite interval actually deployed by the numerical
implementation. This is a finite-width reporting convention for W-TQA. Without
it, the out-of-range TQA conventions and the weighted \(+\infty\) sentinel,
i.e., the artificial \(+\infty\) score used when the weighted quantile falls
beyond all finite calibration scores, can produce all-real-line intervals. Such
intervals would mechanically inflate
coverage to \(1\) at the affected timestamps, and make average width and Width
CoV undefined or incomparable with the finite-width baselines. We therefore
project the deployed adaptive levels to \([0.01,0.99]\); when the weighted
sentinel would be selected, we deploy the largest contemporaneous calibration
score, \(\max_{i\in\mathcal C}\hat s_{i,t}\), and compute width diagnostics from
that finite interval. Since this convention only replaces all-real-line intervals
by finite ones, it can only weakly decrease the reported coverage relative to the
unmodified construction.

These edge cases are infrequent in our runs. In the synthetic
panels, the deployed W-TQA level reaches the lower boundary \((0.01)\) in
\(0.000\%\), \(0.000\%\), and \(0.100\%\) of updates for the easy, medium, and
hard settings, respectively; the upper boundary \((0.99)\) is never reached,
while finite-sentinel fallback occurs in \(0.019\%\), \(0.046\%\), and
\(0.696\%\) of timestamps. In the real-data \(p=1\) W-TQA diagnostics,
lower-boundary rates are \(0.022\%\) (HF), \(0.028\%\) (M5), and \(0.002\%\)
(SGSC); the upper boundary is again never reached; the repeated-split sentinel
fallback rates are \(3.90\%\), \(0.75\%\), and \(2.12\%\), respectively.

\input{simulation_experiment_appendix.tex}

\input{real_data_experiment_appendix.tex}
\input{parameter_robustness_appendix.tex}

\end{document}

%% file: simulation_experiment_appendix.tex
\subsection{Synthetic-Panel Experiments}\label{app:simulation_experiments}

\input{simulation_scenario_appendix.tex}

\paragraph{TQA-B baseline in the simulation study.}
We implement the budgeted TQA baseline of \citet{lin2022conformal} using the
same ridge point predictor, calibration split, and symmetric interval format as
the other conformal methods. TQA-B maintains exponentially decayed mean absolute
residuals with decay \(0.8\), predicts the target's residual rank from this
history, and applies the budgeting map to choose the queried miscoverage level
\(\alpha_t\). We then clip this TQA-B level to the numerical range
\([0.01,0.999]\) used by the simulation code. To keep the comparison within the same
cross-sectional panel protocol, TQA-B then runs split conformal with this clipped
\(\alpha_t\) and the current calibration residuals. Target
residual histories are updated only when the target outcome is revealed; in the
full-feedback synthetic study this happens immediately after every prediction.

\paragraph{LPCI baseline in the simulation study.}
We implement LPCI following \citet{batra2023conformal}, with the same ridge
point predictor used by the other conformal methods, fit once on
calibration-unit burn-in rows. The LPCI calibration layer is then built from
signed residual histories. Its feature vector contains a one-hot unit identifier
over all \(N=500\) units and six lagged-residual features, each formed by taking
a lag-\(k\) signed residual sequence and applying an exponentially weighted
moving average with smoothing parameter \(\alpha_{\mathrm{ewm}}=0.2\), equivalently
a per-step residual decay of \(0.8\). The residual quantile model is a
multi-quantile random forest with 100 trees, minimum leaf size one, and one
worker per fit. The initial residual design is formed from calibration units
only. During the conformal period, calibration residual rows are appended at
every time point, while target residual rows enter the residual training buffer
only after the corresponding target outcome is revealed and can affect
subsequent rolling refits; in the full-feedback synthetic study, every target
residual is therefore appended immediately after prediction. The quantile layer
is refit every 10 test steps using a rolling window of 30 time points. Following
the LPCI adaptive asymmetry rule, six values
of \(\beta\) are evaluated on an equally spaced grid between \(0\) and
\(\alpha\). For each \(\beta\), the lower quantile is fit at \(\beta\) and the
upper quantile at \(1-\alpha+\beta\); LPCI uses the \(\beta\) that gives the
smallest predicted interval width for each sample.

\paragraph{Results.}
Table~\ref{tab:simulation-scenarios-summary} gives the repeated-run summary for the Easy,
Medium, and Hard scenarios. W-TQA maintains near-nominal average coverage across
the difficulty ladder, with average coverage \(0.928\), \(0.929\), and \(0.922\)
from Easy to Hard, and attains the highest tail coverage in all three scenarios:
\(0.884\), \(0.880\), and \(0.828\). The tail-coverage gap is largest in the Hard
scenario, where W-TQA reaches \(0.828\), compared with \(0.781\) for W-only,
\(0.713\) for TQA-B, \(0.687\) for TQA-only, \(0.616\) for LPCI, and \(0.502\)
for Split CP\@. Split CP exhibits the sharpest degradation as difficulty
increases, with tail coverage decreasing from \(0.800\) in Easy to \(0.502\) in
Hard.

The width diagnostics show that the tail-coverage improvement is accompanied by
a modest increase in interval width. W-TQA's average widths are \(2.229\),
\(3.011\), and \(4.018\) from Easy to Hard, while its Width CoV values are
\(0.225\), \(0.275\), and \(0.273\). These values indicate that W-TQA allocates
width more heterogeneously across units and time than Split CP, whose
corresponding Width CoV values are \(0.108\), \(0.228\), and \(0.217\).

\begin{table}[H]
\centering
\small
\begin{tabular}{llcccc}
\toprule
Scenario & Method
& Avg Cov & Tail Cov
& Avg Width
& Width CoV \\
\midrule

\textbf{Easy} & Split CP
& 0.874 $\pm$ 0.011 & 0.800 $\pm$ 0.025
& 1.832 $\pm$ 0.047
& 0.108 $\pm$ 0.035 \\
& TQA-B
& $\mathbf{0.883 \pm 0.009}$ & 0.825 $\pm$ 0.016
& 1.909 $\pm$ 0.055
& 0.172 $\pm$ 0.026 \\
& LPCI
& 0.788 $\pm$ 0.012 & 0.711 $\pm$ 0.021
& 1.594 $\pm$ 0.037
& 0.174 $\pm$ 0.011 \\
& TQA-only
& 0.882 $\pm$ 0.008 & 0.832 $\pm$ 0.011
& 1.875 $\pm$ 0.056
& 0.118 $\pm$ 0.031 \\
& W-only
& 0.936 $\pm$ 0.009 & 0.880 $\pm$ 0.018
& 2.293 $\pm$ 0.114
& 0.214 $\pm$ 0.029 \\
& W-TQA
& 0.928 $\pm$ 0.008 & $\mathbf{0.884 \pm 0.014}$
& 2.229 $\pm$ 0.094
& $\mathbf{0.225 \pm 0.031}$ \\

\midrule

\textbf{Medium} & Split CP
& 0.848 $\pm$ 0.030 & 0.752 $\pm$ 0.053
& 2.403 $\pm$ 0.173
& 0.228 $\pm$ 0.044 \\
& TQA-B
& 0.867 $\pm$ 0.019 & 0.803 $\pm$ 0.026
& 2.539 $\pm$ 0.161
& 0.263 $\pm$ 0.031 \\
& LPCI
& 0.730 $\pm$ 0.021 & 0.648 $\pm$ 0.030
& 1.917 $\pm$ 0.095
& 0.239 $\pm$ 0.021 \\
& TQA-only
& 0.864 $\pm$ 0.021 & 0.802 $\pm$ 0.032
& 2.500 $\pm$ 0.165
& 0.243 $\pm$ 0.038 \\
& W-only
& 0.937 $\pm$ 0.010 & 0.872 $\pm$ 0.026
& 3.100 $\pm$ 0.195
& $\mathbf{0.275 \pm 0.032}$ \\
& W-TQA
& $\mathbf{0.929 \pm 0.008}$ & $\mathbf{0.880 \pm 0.017}$
& 3.011 $\pm$ 0.184
& $\mathbf{0.275 \pm 0.031}$ \\

\midrule

\textbf{Hard} & Split CP
& 0.746 $\pm$ 0.044 & 0.502 $\pm$ 0.105
& 2.703 $\pm$ 0.188
& 0.217 $\pm$ 0.053 \\
& TQA-B
& 0.819 $\pm$ 0.021 & 0.713 $\pm$ 0.048
& 3.142 $\pm$ 0.249
& $\mathbf{0.279 \pm 0.038}$ \\
& LPCI
& 0.709 $\pm$ 0.031 & 0.616 $\pm$ 0.037
& 2.376 $\pm$ 0.145
& 0.269 $\pm$ 0.026 \\
& TQA-only
& 0.810 $\pm$ 0.024 & 0.687 $\pm$ 0.066
& 3.081 $\pm$ 0.240
& 0.262 $\pm$ 0.045 \\
& W-only
& 0.929 $\pm$ 0.021 & 0.781 $\pm$ 0.089
& 4.194 $\pm$ 0.339
& 0.263 $\pm$ 0.047 \\
& W-TQA
& $\mathbf{0.922 \pm 0.015}$ & $\mathbf{0.828 \pm 0.042}$
& 4.018 $\pm$ 0.310
& 0.273 $\pm$ 0.044 \\

\bottomrule
\end{tabular}
\caption{Performance comparison across the Easy, Medium, and Hard synthetic scenarios. Entries are mean $\pm$ sd over 30 random replications. Bold average-coverage and tail-coverage entries are closest to the nominal 0.90 target within each scenario; bold Width CoV entries are the largest within each scenario. Tied reported means are bolded together.}\label{tab:simulation-scenarios-summary}
\end{table}

\FloatBarrier

%% file: simulation_scenario_appendix.tex
\paragraph{Setup.}\label{app:simulation_scenario_details}

We evaluate W-TQA and five competing methods on a synthetic panel with
\(N=500\) units over \(T=100\) time periods. Each observation has a
high-dimensional covariate vector \(X_{i,t}\in\mathbb R^{100}\) and an observed
common factor \(F_t\in\mathbb R^3\), both of which are available to the
prediction and conformal procedures. The units are partitioned into two
clusters with proportions \(88\%\) and \(12\%\): for \(N=500\), this gives 440
majority-cluster units and 60 minority-cluster units. In each repetition, the
calibration panel contains all 440 majority units and 30 minority units, while
the remaining 30 minority units are held out as test targets. The first 40 time
periods are used as burn-in data for fitting the point predictor, and the final
60 periods form the online conformal evaluation window. This split makes tail
coverage a stress test on the difficult minority part of the cross-section.

Let \(c_i\in\{A,B\}\) denote the fixed cluster label of unit \(i\), where
\(A\) is the majority cluster and \(B\) is the minority cluster. The covariate
process is designed to create both cross-sectional dependence and
cluster-specific feature distributions. Specifically, with
\(U_t\in\mathbb R^5\) denoting a global latent factor shared by all units,
\(U_{c,t}\in\mathbb R^4\) denoting a latent factor shared only by units in
cluster \(c\), and \(V_{i,t}\in\mathbb R^8\) denoting unit-level idiosyncratic
latent noise, we generate
\[
X_{i,t}
=
\mu_{c_i}
+
B U_t
+
A_{c_i} U_{c_i,t}
+
L_{c_i} V_{i,t}
+
\xi_{i,t},
\]
where \(U_t\) and \(U_{c,t}\) are Gaussian AR(1) processes,
\(V_{i,t}\sim N(0,I_8)\), and
\(\xi_{i,t}\sim N(0,\operatorname{diag}(\sigma_{c_i}^2))\). Here
\(\mu_c,\sigma_c\in\mathbb R^{100}\) are the cluster-specific mean and
coordinatewise noise scale, \(B\in\mathbb R^{100\times 5}\) loads the global
shock, \(A_c\in\mathbb R^{100\times 4}\) loads the cluster shock, and
\(L_c\in\mathbb R^{100\times 8}\) induces within-vector covariance for
idiosyncratic features. Equivalently, conditional on the global and
cluster-level shocks, a unit in cluster \(c\) has
\[
X_{i,t}\mid U_t,U_{c,t},c_i=c
\sim
N\!\left(
\mu_c+B U_t+A_c U_{c,t},
L_cL_c^\top+\operatorname{diag}(\sigma_c^2)
\right).
\]
The two clusters differ through their templates
\((\mu_c,A_c,L_c,\sigma_c)\), so they have different means, latent loadings,
and covariance structure. For each scenario, these template parameters are
deterministic functions of a fixed structure seed and are therefore identical
across repetitions; repetitions redraw the latent paths, idiosyncratic feature
noise, factor paths, minority frailty terms in the Medium and Hard scenarios,
and outcome errors. The cluster-separation parameter
controls the magnitude of the between-cluster mean shift: Easy and Medium both
use separation \(0.80\), so they share the same covariate templates, whereas
Hard uses separation \(2.0\), making the majority and minority covariate
distributions more separated.

Outcomes follow the nonlinear conditional-factor model
\[
Y_{i,t}
=
g_\alpha(X_{i,t})
+
g_\beta(X_{i,t})^\top F_t
+
\eta_i F_{t,1}
+
\varepsilon_{i,t},
\]
where \(g_\alpha:\mathbb R^{100}\to\mathbb R\) and
\(g_\beta:\mathbb R^{100}\to\mathbb R^3\) are the same nonlinear structural
functions for every unit and time point. Their one-hidden-layer neural-network
parameters are generated once from a fixed structure seed and reused in every
repetition, \(F_{t,1}\) is the first factor coordinate, and \(\eta_i\) is a
hidden frailty term that is nonzero only for the minority cluster in the harder
scenarios. The outcome noise is independent
\(\varepsilon_{i,t}\sim N(0,0.50^2)\). All conformal methods use the same
misspecified ridge-trained linear conditional-factor predictor,
\[
\hat y(X,F)
=
\hat g_\alpha(X) + \hat g_\beta(X)^\top F,
\]
with
\[
\hat g_\alpha(X)=X^\top \hat w_\alpha,
\qquad
\hat g_\beta(X)=\hat W_\beta^\top X,
\]
where \(\hat w_\alpha\in\mathbb R^{100}\) and
\(\hat W_\beta\in\mathbb R^{100\times 3}\) are estimated from the burn-in
periods using only the calibration units. In the implementation, both \(X\) and
\(F\) are scaled by their calibration-burn-in standard deviations before fitting
and prediction, without centering; the display above suppresses this fixed
standardization. This fitted model is misspecified relative to the nonlinear
data generating process above.

The synthetic comparison is run in the full-feedback regime: after each
prediction, the realized target outcome is revealed immediately. We use this
protocol so that all methods are evaluated on equal footing, especially TQA-B
and LPCI, whose internal updates explicitly depend on target-side feedback.
All methods target miscoverage level \(\alpha=0.10\). For W-TQA and its ablations,
Gaussian-kernel distances are computed from coordinatewise standardized running
means and normalized by the feature dimension; the kernel bandwidth is \(h=0.6\)
and the adaptive learning rate is \(\gamma=0.01\).

The three scenarios differ in the temporal factor process and in the
minority-only hidden frailty. On the simulation time grid \(r=0,\ldots,T-1\),
the common factor satisfies \(F_0=0\) and, for \(r\ge1\),
\[
F_r
=
\phi\odot F_{r-1}
+
(1-\phi)\odot m_r
+
0.55\,s_r\,C Z_r,
\qquad
Z_r\sim N(0,I_3),
\]
where \(\phi=(0.45,0.60,0.75)\), \(C C^\top=R\), and
\(R_{jk}=0.45^{|j-k|}\). Let \(d=(1,0,-1)/\sqrt{2}\) and
\(\Lambda_{a,b}(r)=\{1+\exp[-a(r-b)]\}^{-1}\), and define
\(\operatorname{clip}(x,0,1)=\min\{\max\{x,0\},1\}\). The scenarios are:
\emph{Easy:} \(m_r=0\), \(s_r=1\), and \(\eta_i=0\) for all units.
\emph{Medium:}
\[
m_r
=
3.60\{\Lambda_{0.17,0.48T}(r)-\Lambda_{0.17,0.48T}(0)\}d,
\qquad
s_r
=
1+0.30\,\operatorname{clip}\!\left(\frac{r-0.45T}{0.55T},0,1\right),
\]
with \(\eta_i=0\) for majority-cluster units and
\(\eta_i\sim N(0,0.10^2)\) for minority-cluster units. \emph{Hard:}
\[
m_r
=
5.50\{\Lambda_{0.20,0.35T}(r)-\Lambda_{0.20,0.35T}(0)\}d,
\qquad
s_r
=
1+0.80\,\operatorname{clip}\!\left(\frac{r-0.40T}{0.60T},0,1\right),
\]
with \(\eta_i=0\) for majority-cluster units and
\(\eta_i\sim N(0,0.15^2)\) for minority-cluster units.

%% file: real_data_experiment_appendix.tex
\subsection{Real-Data Panels}\label{app:real-data-panels}\label{app:real}

\paragraph{Data and protocol.}
We evaluate the real-data experiments on three panels that differ in sampling
frequency, response scale, and cross-sectional heterogeneity.  The HF panel is a
clean U.S. equity panel from the second half of 2024; the target is next-hour
return, the processed panel runs from 2024-07-02 13:00:00 to
2024-12-31 15:00:00, burn-in ends at 2024-08-07 13:00:00, and each replication
splits the 495 start-aligned tickers into 395 calibration tickers and 100 test
tickers over a $k=500$ hourly horizon.  The M5 panel uses the
\texttt{CA\_1}/\texttt{FOODS\_3} retail-sales slice, with target
$Y_{i,t+1}=\log(1+\mathrm{sales}_{i,t+1})$; after preprocessing it contains 648
items, split into 454 calibration items and 194 test items, with burn-in ending on
2013-12-31 and evaluation horizon $k=600$ days.  The SGSC panel uses residential
electricity consumption from the Smart-Grid Smart-City project, restricted to
plain-load households with no solar generation or controlled load; the target is
$Y_{i,t+1}=\log(1+\mathrm{GENERAL\_SUPPLY\_KWH}_{i,t+1})$, and each replication
splits 400 eligible households into 280 calibration and 120 test households over
$k=600$ half-hour steps after a 4320-half-hour burn-in window ending on
2013-03-17.

\paragraph{Data access and licenses.}
All three real panels are built from publicly downloadable source data.  For the
HF panel, we use the HF Data Library clean U.S. equity data, selecting the
\texttt{Clean} version and \texttt{Hourly} timeframe from the download/API portal
\url{https://hfdatalibrary.com/pages/download}.  Because the HF Data Library may
revise historical files over time, the frozen HF panel used in our experiments is
included in the supplementary material, and exact reproduction should use that
file rather than a fresh API pull.  The HF Data Library states that
the data are licensed under Creative Commons Attribution 4.0 (CC BY 4.0), and we
cite the dataset as \citet{elkassabgi2026hfdata} for attribution.

For M5, the raw files \texttt{sales\_train\_evaluation.csv},
\texttt{calendar.csv}, and \texttt{sell\_prices.csv} are from the public M5
Forecasting--Accuracy competition data page,
\url{https://www.kaggle.com/competitions/m5-forecasting-accuracy/data}.  The
official M5 methods repository documents the competition dataset contents,
including unit sales, calendar, promotion, price, and scoring files,
\url{https://github.com/Mcompetitions/M5-methods}.  Kaggle's competition
metadata lists the M5 data license as \emph{Subject to Competition Rules}; users
should access the data through Kaggle and follow the competition rules at
\url{https://www.kaggle.com/competitions/m5-forecasting-accuracy/rules}.  The
experiment scripts expect the three CSV files above under a local
\texttt{M5/} directory.

For SGSC, we use the Australian Government Smart-Grid Smart-City
Customer Trial Data, \url{https://www.data.gov.au/data/dataset/smart-grid-smart-city-customer-trial-data}.
The specific resources used are \emph{Electricity Use Interval Reading},
\emph{Customer Household Data}, and \emph{Customer Trial Data Dictionary}.
Data.gov.au lists the
dataset and these resources under Creative Commons Attribution 3.0 Australia
(CC BY 3.0 AU).  The experiment scripts expect these three files under a local
\texttt{SmartGridSmartCity/} directory.

All real-data runs use a global standardized ridge predictor with ridge penalty
10, fit only on burn-in rows from calibration units and then held fixed during the
conformal evaluation.  The conformal level is \(\alpha=0.10\), the kernel
bandwidth is fixed at \(h=0.6\) in the main experiments, and the temporal
stepsize is \(\gamma=0.01\) for all three panels.  Gaussian kernel distances are
computed from coordinatewise standardized running means and normalized by the
feature dimension.  For the
missing-feedback sweeps, a shared timestamp-level Bernoulli reveal indicator is
drawn with probability $p$; when it equals one, all held-out test outcomes at that
timestamp are revealed to the adaptive methods after prediction, and otherwise
they are withheld.  Calibration-panel outcomes remain fully observed throughout,
so the spatial calibration pool refreshes at every timestamp.

\paragraph{Real-data LPCI variant.}
The method labeled LPCI in the real-data and MCAR feedback-sweep tables is
a lighter LPCI variant used on the longer panels.  It differs from the
QRF-backed simulation LPCI baseline in two deliberate ways.  First, it replaces
the random-forest residual quantile estimator with independent linear pinball
regressions, which makes repeated runs feasible on horizons of \(500\)--\(600\)
timestamps.  Second, rather than selecting among six \(\beta\)-indexed asymmetric
intervals, it fixes the LPCI asymmetry parameter at \(\beta=\alpha/2\), i.e.,
the lower and upper residual quantiles are \(0.05\) and \(0.95\).

Within each replication, this variant shares the same fitted point predictor as the
conformal methods.  Its residual quantile layer keeps the LPCI residual-history
structure: six EWMA-smoothed lagged signed-residual features, one for each lag
\(k=1,\ldots,6\), constructed by applying shift-\(k\) followed by
\(\operatorname{ewm}(\alpha_{\mathrm{ewm}}=0.2)\), and a lightweight unit fixed
effect obtained by partialing out each unit's rolling mean signed residual and
adding that mean back to the fitted quantiles.  The linear quantile regressions
use 300 iterations with learning rate \(0.05\) and \(\ell_2\) penalty \(10^{-4}\),
with at most 25,000 fitting rows.

We also evaluated the six-point LPCI \(\beta\)-grid with per-sample
minimum-width selection.  On these real panels it was weaker than the fixed-tail
choice because the grid combines independent linear lower- and upper-quantile
fits; the minimum-width rule then tends to exploit quantile noise and occasional
crossing by choosing intervals that are too narrow.  The QRF implementation used
in the synthetic study estimates the quantiles jointly within one forest and is
less exposed to this instability.  The reported real-data LPCI results
therefore use the fixed \(5\%,95\%\) tails.

Calibration residual rows are added at every conformal timestamp, while a target
unit's residual history and unit-mean state are updated only when its feedback is
revealed.  The LPCI residual layer is refit every 10 conformal timestamps on all
three real panels, using the latest 240 hourly timestamps on HF, the latest 280
days on M5, and the latest 500 half-hours on SGSC.  Since this variant requires
target-side feedback to update target states, it is listed as N/A at \(p=0\).

\paragraph{Additional mechanism plots on M5 and SGSC.}
Figure~\ref{fig:m5-sgsc-mechanisms-app} repeats the two mechanism views from
Figure~\ref{fig:hf_mechanisms} on the other two real panels. 

\paragraph{Full missing-feedback sweeps.}
Tables~\ref{tab:hf-hourly-feedback-metrics}, \ref{tab:m5-feedback-metrics}, and
\ref{tab:sgsc-feedback-metrics} give the full repeated sweeps over feedback
probability.  The sweeps show the same pattern as the summary tables. With target
feedback, the TQA update lifts W-TQA above its weighted no-feedback baseline;
when feedback is scarce, the weighted component preserves a non-adaptive
tail-coverage fallback across all three panels. At \(p=0\), W-TQA
coincides with W-only and TQA-only coincides with Split CP, while TQA-B and
LPCI are unavailable because they require target feedback.
In the real-data tables below, bolding is applied separately within each panel,
feedback probability, mechanism, and metric row: average-coverage and
tail-coverage entries closest to \(0.90\) are bolded, while Width CoV entries are
bolded when largest. Tied reported means are bolded together.

\begin{figure}[H]
  \centering
  \textbf{M5 panel}\\[-0.25em]
  \begin{minipage}[t]{0.48\linewidth}
    \centering
    \includegraphics[width=\linewidth]{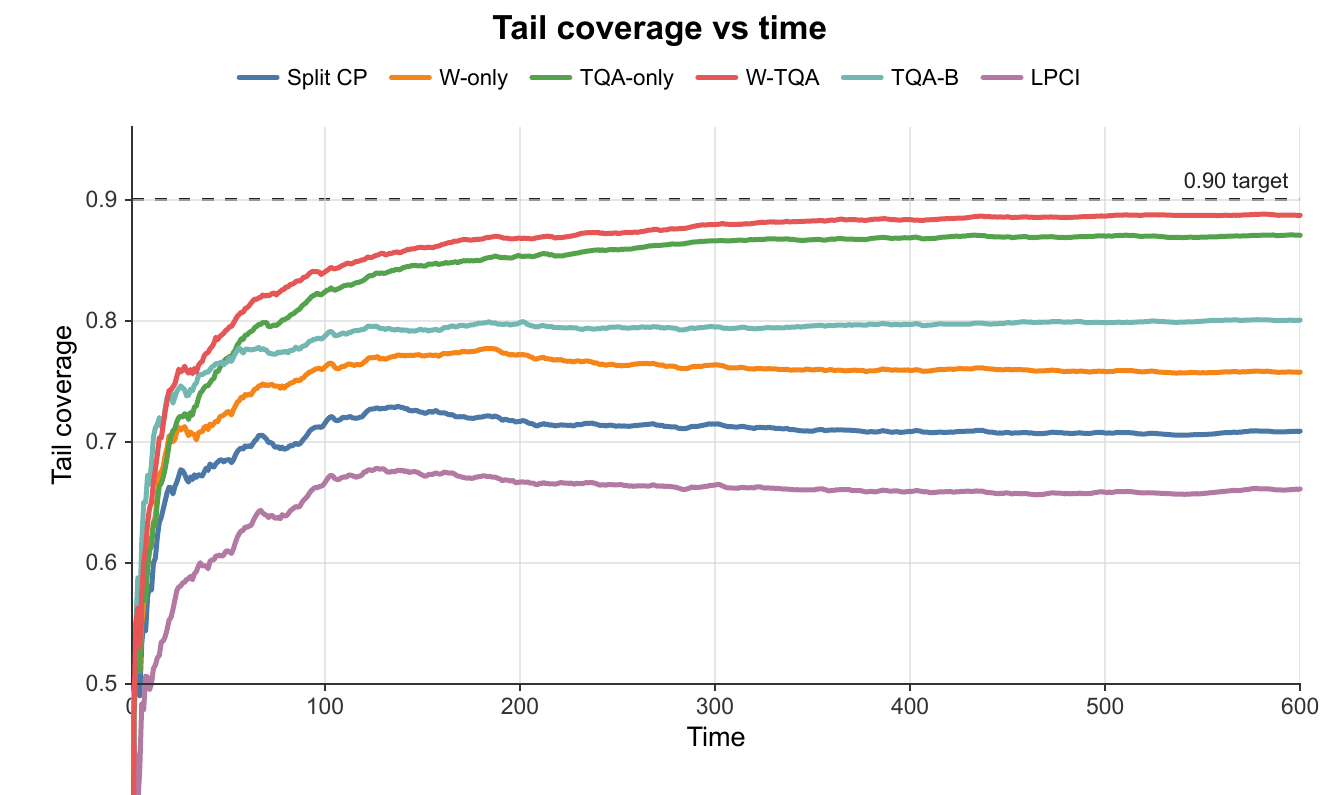}
  \end{minipage}\hfill
  \begin{minipage}[t]{0.48\linewidth}
    \centering
    \includegraphics[width=\linewidth]{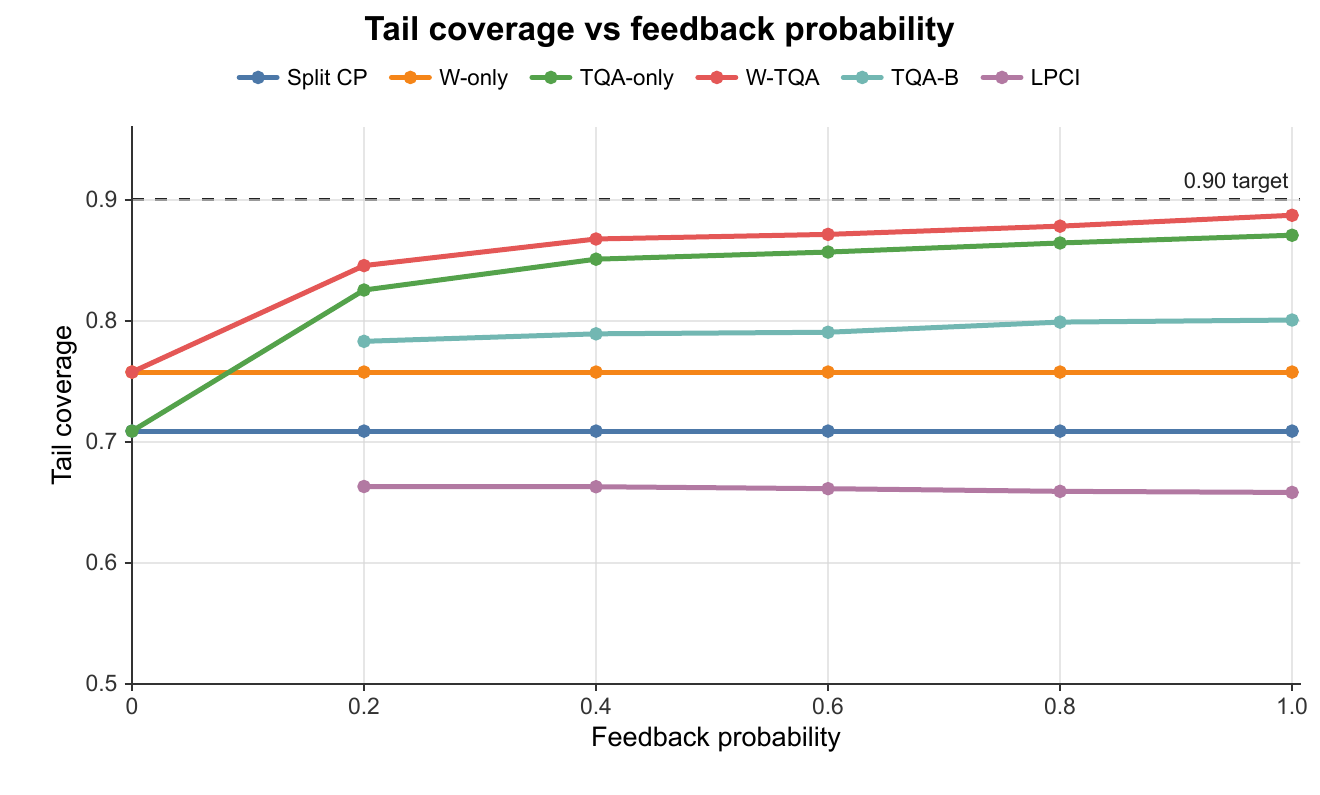}
  \end{minipage}

  \vspace{0.8em}
  \textbf{SGSC panel}\\[-0.25em]
  \begin{minipage}[t]{0.48\linewidth}
    \centering
    \includegraphics[width=\linewidth]{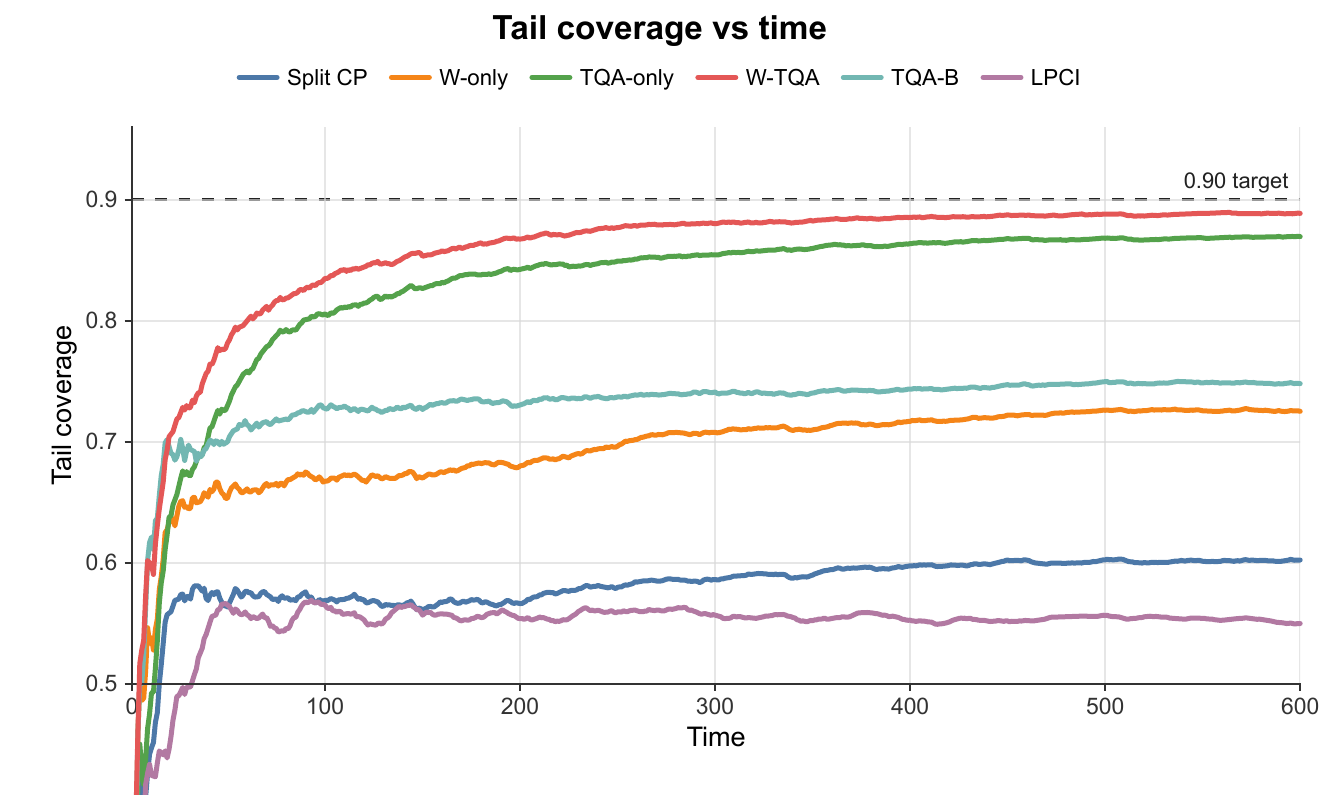}
  \end{minipage}\hfill
  \begin{minipage}[t]{0.48\linewidth}
    \centering
    \includegraphics[width=\linewidth]{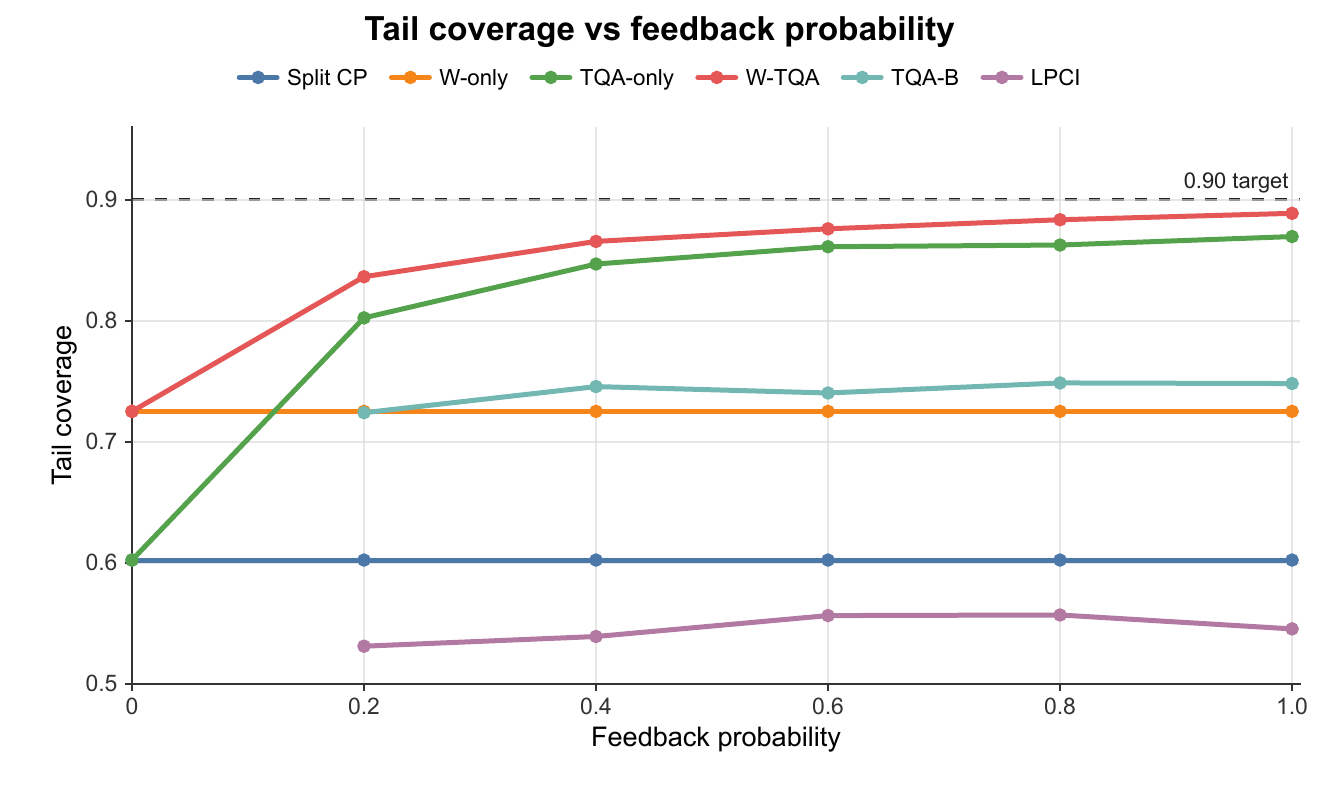}
  \end{minipage}
  \caption{Mechanism plots on M5 and SGSC, matching the layout of the HF hourly
  Figure~\ref{fig:hf_mechanisms}. Left panels show cumulative tail coverage
  under full feedback; right panels show tail coverage versus target-feedback
  probability \(p\).}\label{fig:m5-sgsc-mechanisms-app}
\end{figure}

\begin{table}[H]
  \centering
  \scriptsize
  \setlength{\tabcolsep}{2.5pt}
  \resizebox{\linewidth}{!}{%
  \begin{tabular}{llcccccc}
    \toprule
    Metric & $p$ & Split CP & TQA-B & LPCI & TQA-only & W-only & W-TQA \\
    \midrule
    \multirow{6}{*}{Avg Cov} & 0.0 & $\mathbf{0.905 \pm 0.013}$ & -- & -- & $\mathbf{0.905 \pm 0.013}$ & $0.911 \pm 0.008$ & $0.911 \pm 0.008$ \\
      & 0.2 & $\mathbf{0.905 \pm 0.013}$ & $0.915 \pm 0.010$ & $\mathbf{0.895 \pm 0.015}$ & $0.921 \pm 0.007$ & $0.911 \pm 0.008$ & $0.915 \pm 0.004$ \\
      & 0.4 & $0.905 \pm 0.013$ & $0.915 \pm 0.011$ & $\mathbf{0.899 \pm 0.012}$ & $0.919 \pm 0.005$ & $0.911 \pm 0.008$ & $0.913 \pm 0.003$ \\
      & 0.6 & $0.905 \pm 0.013$ & $0.915 \pm 0.010$ & $\mathbf{0.899 \pm 0.013}$ & $0.915 \pm 0.004$ & $0.911 \pm 0.008$ & $0.911 \pm 0.003$ \\
      & 0.8 & $0.905 \pm 0.013$ & $0.916 \pm 0.010$ & $\mathbf{0.900 \pm 0.013}$ & $0.912 \pm 0.004$ & $0.911 \pm 0.008$ & $0.909 \pm 0.002$ \\
      & 1.0 & $0.905 \pm 0.013$ & $0.916 \pm 0.010$ & $\mathbf{0.900 \pm 0.013}$ & $0.909 \pm 0.003$ & $0.911 \pm 0.008$ & $0.907 \pm 0.002$ \\
    \midrule
    \multirow{6}{*}{Tail Cov} & 0.0 & $0.640 \pm 0.039$ & -- & -- & $0.640 \pm 0.039$ & $\mathbf{0.768 \pm 0.022}$ & $\mathbf{0.768 \pm 0.022}$ \\
      & 0.2 & $0.640 \pm 0.039$ & $0.721 \pm 0.023$ & $0.672 \pm 0.039$ & $0.804 \pm 0.017$ & $0.768 \pm 0.022$ & $\mathbf{0.841 \pm 0.010}$ \\
      & 0.4 & $0.640 \pm 0.039$ & $0.724 \pm 0.020$ & $0.682 \pm 0.034$ & $0.837 \pm 0.013$ & $0.768 \pm 0.022$ & $\mathbf{0.865 \pm 0.006}$ \\
      & 0.6 & $0.640 \pm 0.039$ & $0.726 \pm 0.018$ & $0.684 \pm 0.035$ & $0.851 \pm 0.012$ & $0.768 \pm 0.022$ & $\mathbf{0.877 \pm 0.003}$ \\
      & 0.8 & $0.640 \pm 0.039$ & $0.726 \pm 0.019$ & $0.690 \pm 0.034$ & $0.857 \pm 0.012$ & $0.768 \pm 0.022$ & $\mathbf{0.883 \pm 0.002}$ \\
      & 1.0 & $0.640 \pm 0.039$ & $0.726 \pm 0.018$ & $0.691 \pm 0.033$ & $0.860 \pm 0.012$ & $0.768 \pm 0.022$ & $\mathbf{0.888 \pm 0.002}$ \\
    \midrule
    \multirow{6}{*}{Avg Width} & 0.0 & $0.02228 \pm 0.00030$ & -- & -- & $0.02228 \pm 0.00030$ & $0.02419 \pm 0.00141$ & $0.02419 \pm 0.00141$ \\
     & 0.2 & $0.02228 \pm 0.00030$ & $0.02294 \pm 0.00028$ & $0.02695 \pm 0.00075$ & $0.02272 \pm 0.00037$ & $0.02419 \pm 0.00141$ & $0.02460 \pm 0.00141$ \\
     & 0.4 & $0.02228 \pm 0.00030$ & $0.02288 \pm 0.00019$ & $0.02689 \pm 0.00056$ & $0.02240 \pm 0.00059$ & $0.02419 \pm 0.00141$ & $0.02442 \pm 0.00153$ \\
     & 0.6 & $0.02228 \pm 0.00030$ & $0.02287 \pm 0.00017$ & $0.02685 \pm 0.00068$ & $0.02203 \pm 0.00066$ & $0.02419 \pm 0.00141$ & $0.02418 \pm 0.00154$ \\
     & 0.8 & $0.02228 \pm 0.00030$ & $0.02282 \pm 0.00012$ & $0.02685 \pm 0.00064$ & $0.02180 \pm 0.00070$ & $0.02419 \pm 0.00141$ & $0.02398 \pm 0.00147$ \\
     & 1.0 & $0.02228 \pm 0.00030$ & $0.02278 \pm 0.00011$ & $0.02686 \pm 0.00067$ & $0.02159 \pm 0.00073$ & $0.02419 \pm 0.00141$ & $0.02378 \pm 0.00147$ \\
    \midrule
    \multirow{6}{*}{Width CoV} & 0.0 & $0.599 \pm 0.006$ & -- & -- & $0.599 \pm 0.006$ & $\mathbf{1.263 \pm 0.088}$ & $\mathbf{1.263 \pm 0.088}$ \\
      & 0.2 & $0.599 \pm 0.006$ & $0.695 \pm 0.040$ & $0.296 \pm 0.060$ & $0.745 \pm 0.035$ & $1.263 \pm 0.088$ & $\mathbf{1.312 \pm 0.065}$ \\
      & 0.4 & $0.599 \pm 0.006$ & $0.687 \pm 0.031$ & $0.285 \pm 0.043$ & $0.790 \pm 0.032$ & $1.263 \pm 0.088$ & $\mathbf{1.321 \pm 0.063}$ \\
      & 0.6 & $0.599 \pm 0.006$ & $0.687 \pm 0.028$ & $0.286 \pm 0.051$ & $0.816 \pm 0.027$ & $1.263 \pm 0.088$ & $\mathbf{1.327 \pm 0.062}$ \\
      & 0.8 & $0.599 \pm 0.006$ & $0.678 \pm 0.021$ & $0.282 \pm 0.045$ & $0.828 \pm 0.029$ & $1.263 \pm 0.088$ & $\mathbf{1.324 \pm 0.061}$ \\
      & 1.0 & $0.599 \pm 0.006$ & $0.669 \pm 0.018$ & $0.285 \pm 0.044$ & $0.840 \pm 0.029$ & $1.263 \pm 0.088$ & $\mathbf{1.321 \pm 0.063}$ \\
    \bottomrule
  \end{tabular}%
  }
  \caption{Complete HF hourly missing-feedback sweep across average coverage,
  tail coverage, average width, and Width CoV. Entries are mean $\pm$ sd over
  30 random replications.}\label{tab:hf-hourly-feedback-metrics}
\end{table}

\begin{table}[H]
  \centering
  \scriptsize
  \setlength{\tabcolsep}{2.5pt}
  \resizebox{\linewidth}{!}{%
  \begin{tabular}{llcccccc}
    \toprule
    Metric & $p$ & Split CP & TQA-B & LPCI & TQA-only & W-only & W-TQA \\
    \midrule
    \multirow{6}{*}{Avg Cov} & 0.0 & $\mathbf{0.903 \pm 0.005}$ & -- & -- & $\mathbf{0.903 \pm 0.005}$ & $0.910 \pm 0.004$ & $0.910 \pm 0.004$ \\
      & 0.2 & $\mathbf{0.903 \pm 0.005}$ & $0.908 \pm 0.004$ & $0.869 \pm 0.005$ & $0.910 \pm 0.003$ & $0.910 \pm 0.004$ & $0.913 \pm 0.002$ \\
      & 0.4 & $\mathbf{0.903 \pm 0.005}$ & $0.909 \pm 0.004$ & $0.869 \pm 0.005$ & $0.909 \pm 0.002$ & $0.910 \pm 0.004$ & $0.911 \pm 0.002$ \\
      & 0.6 & $\mathbf{0.903 \pm 0.005}$ & $0.911 \pm 0.004$ & $0.868 \pm 0.005$ & $0.908 \pm 0.002$ & $0.910 \pm 0.004$ & $0.909 \pm 0.001$ \\
      & 0.8 & $\mathbf{0.903 \pm 0.005}$ & $0.912 \pm 0.004$ & $0.868 \pm 0.005$ & $0.907 \pm 0.001$ & $0.910 \pm 0.004$ & $0.908 \pm 0.001$ \\
      & 1.0 & $\mathbf{0.903 \pm 0.005}$ & $0.912 \pm 0.004$ & $0.868 \pm 0.005$ & $0.906 \pm 0.001$ & $0.910 \pm 0.004$ & $0.907 \pm 0.001$ \\
    \midrule
    \multirow{6}{*}{Tail Cov} & 0.0 & $0.754 \pm 0.019$ & -- & -- & $0.754 \pm 0.019$ & $\mathbf{0.792 \pm 0.014}$ & $\mathbf{0.792 \pm 0.014}$ \\
      & 0.2 & $0.754 \pm 0.019$ & $0.802 \pm 0.011$ & $0.706 \pm 0.019$ & $0.838 \pm 0.008$ & $0.792 \pm 0.014$ & $\mathbf{0.852 \pm 0.006}$ \\
      & 0.4 & $0.754 \pm 0.019$ & $0.808 \pm 0.009$ & $0.704 \pm 0.019$ & $0.860 \pm 0.005$ & $0.792 \pm 0.014$ & $\mathbf{0.870 \pm 0.003}$ \\
      & 0.6 & $0.754 \pm 0.019$ & $0.811 \pm 0.010$ & $0.703 \pm 0.018$ & $0.870 \pm 0.005$ & $0.792 \pm 0.014$ & $\mathbf{0.878 \pm 0.002}$ \\
      & 0.8 & $0.754 \pm 0.019$ & $0.814 \pm 0.009$ & $0.701 \pm 0.019$ & $0.876 \pm 0.004$ & $0.792 \pm 0.014$ & $\mathbf{0.884 \pm 0.002}$ \\
      & 1.0 & $0.754 \pm 0.019$ & $0.817 \pm 0.009$ & $0.700 \pm 0.019$ & $0.881 \pm 0.004$ & $0.792 \pm 0.014$ & $\mathbf{0.889 \pm 0.001}$ \\
    \midrule
    \multirow{6}{*}{Avg Width} & 0.0 & $1.866 \pm 0.010$ & -- & -- & $1.866 \pm 0.010$ & $1.957 \pm 0.029$ & $1.957 \pm 0.029$ \\
     & 0.2 & $1.866 \pm 0.010$ & $1.876 \pm 0.006$ & $1.678 \pm 0.006$ & $1.869 \pm 0.007$ & $1.957 \pm 0.029$ & $1.925 \pm 0.028$ \\
     & 0.4 & $1.866 \pm 0.010$ & $1.879 \pm 0.004$ & $1.677 \pm 0.006$ & $1.852 \pm 0.013$ & $1.957 \pm 0.029$ & $1.893 \pm 0.029$ \\
     & 0.6 & $1.866 \pm 0.010$ & $1.882 \pm 0.005$ & $1.677 \pm 0.006$ & $1.831 \pm 0.014$ & $1.957 \pm 0.029$ & $1.864 \pm 0.026$ \\
     & 0.8 & $1.866 \pm 0.010$ & $1.884 \pm 0.005$ & $1.677 \pm 0.006$ & $1.818 \pm 0.017$ & $1.957 \pm 0.029$ & $1.844 \pm 0.027$ \\
     & 1.0 & $1.866 \pm 0.010$ & $1.886 \pm 0.005$ & $1.677 \pm 0.006$ & $1.803 \pm 0.017$ & $1.957 \pm 0.029$ & $1.826 \pm 0.025$ \\
    \midrule
    \multirow{6}{*}{Width CoV} & 0.0 & $0.068 \pm 0.002$ & -- & -- & $0.068 \pm 0.002$ & $\mathbf{0.271 \pm 0.033}$ & $\mathbf{0.271 \pm 0.033}$ \\
      & 0.2 & $0.068 \pm 0.002$ & $0.114 \pm 0.007$ & $0.032 \pm 0.002$ & $0.150 \pm 0.011$ & $0.271 \pm 0.033$ & $\mathbf{0.278 \pm 0.028}$ \\
      & 0.4 & $0.068 \pm 0.002$ & $0.118 \pm 0.008$ & $0.032 \pm 0.002$ & $0.187 \pm 0.009$ & $0.271 \pm 0.033$ & $\mathbf{0.287 \pm 0.024}$ \\
      & 0.6 & $0.068 \pm 0.002$ & $0.122 \pm 0.008$ & $0.032 \pm 0.002$ & $0.208 \pm 0.008$ & $0.271 \pm 0.033$ & $\mathbf{0.291 \pm 0.022}$ \\
      & 0.8 & $0.068 \pm 0.002$ & $0.124 \pm 0.008$ & $0.033 \pm 0.002$ & $0.222 \pm 0.007$ & $0.271 \pm 0.033$ & $\mathbf{0.293 \pm 0.020}$ \\
      & 1.0 & $0.068 \pm 0.002$ & $0.127 \pm 0.008$ & $0.033 \pm 0.002$ & $0.233 \pm 0.008$ & $0.271 \pm 0.033$ & $\mathbf{0.297 \pm 0.018}$ \\
    \bottomrule
  \end{tabular}%
  }
  \caption{Complete M5 missing-feedback sweep across average coverage, tail coverage,
  average width, and Width CoV. Entries are mean $\pm$ sd over 30 random replications.}\label{tab:m5-feedback-metrics}
\end{table}

\begin{table}[H]
  \centering
  \scriptsize
  \setlength{\tabcolsep}{2.5pt}
  \resizebox{\linewidth}{!}{%
  \begin{tabular}{llcccccc}
    \toprule
    Metric & $p$ & Split CP & TQA-B & LPCI & TQA-only & W-only & W-TQA \\
    \midrule
    \multirow{6}{*}{Avg Cov} & 0.0 & $\mathbf{0.900 \pm 0.015}$ & -- & -- & $\mathbf{0.900 \pm 0.015}$ & $0.925 \pm 0.010$ & $0.925 \pm 0.010$ \\
      & 0.2 & $\mathbf{0.900 \pm 0.015}$ & $0.911 \pm 0.011$ & $0.884 \pm 0.017$ & $0.919 \pm 0.007$ & $0.925 \pm 0.010$ & $0.926 \pm 0.005$ \\
      & 0.4 & $\mathbf{0.900 \pm 0.015}$ & $0.913 \pm 0.011$ & $0.888 \pm 0.017$ & $0.919 \pm 0.005$ & $0.925 \pm 0.010$ & $0.922 \pm 0.005$ \\
      & 0.6 & $\mathbf{0.900 \pm 0.015}$ & $0.914 \pm 0.011$ & $0.892 \pm 0.017$ & $0.916 \pm 0.004$ & $0.925 \pm 0.010$ & $0.918 \pm 0.003$ \\
      & 0.8 & $\mathbf{0.900 \pm 0.015}$ & $0.915 \pm 0.011$ & $0.896 \pm 0.017$ & $0.914 \pm 0.004$ & $0.925 \pm 0.010$ & $0.915 \pm 0.003$ \\
      & 1.0 & $\mathbf{0.900 \pm 0.015}$ & $0.916 \pm 0.011$ & $0.899 \pm 0.016$ & $0.912 \pm 0.003$ & $0.925 \pm 0.010$ & $0.913 \pm 0.002$ \\
    \midrule
    \multirow{6}{*}{Tail Cov} & 0.0 & $0.672 \pm 0.038$ & -- & -- & $0.672 \pm 0.038$ & $\mathbf{0.767 \pm 0.022}$ & $\mathbf{0.767 \pm 0.022}$ \\
      & 0.2 & $0.672 \pm 0.038$ & $0.753 \pm 0.016$ & $0.605 \pm 0.039$ & $0.826 \pm 0.011$ & $0.767 \pm 0.022$ & $\mathbf{0.849 \pm 0.009}$ \\
      & 0.4 & $0.672 \pm 0.038$ & $0.761 \pm 0.014$ & $0.616 \pm 0.041$ & $0.857 \pm 0.007$ & $0.767 \pm 0.022$ & $\mathbf{0.872 \pm 0.006}$ \\
      & 0.6 & $0.672 \pm 0.038$ & $0.763 \pm 0.015$ & $0.622 \pm 0.042$ & $0.869 \pm 0.004$ & $0.767 \pm 0.022$ & $\mathbf{0.880 \pm 0.003}$ \\
      & 0.8 & $0.672 \pm 0.038$ & $0.766 \pm 0.014$ & $0.626 \pm 0.044$ & $0.875 \pm 0.005$ & $0.767 \pm 0.022$ & $\mathbf{0.886 \pm 0.003}$ \\
      & 1.0 & $0.672 \pm 0.038$ & $0.768 \pm 0.014$ & $0.625 \pm 0.048$ & $0.880 \pm 0.004$ & $0.767 \pm 0.022$ & $\mathbf{0.891 \pm 0.001}$ \\
    \midrule
    \multirow{6}{*}{Avg Width} & 0.0 & $0.388 \pm 0.013$ & -- & -- & $0.388 \pm 0.013$ & $0.480 \pm 0.020$ & $0.480 \pm 0.020$ \\
     & 0.2 & $0.388 \pm 0.013$ & $0.403 \pm 0.008$ & $0.392 \pm 0.014$ & $0.400 \pm 0.008$ & $0.480 \pm 0.020$ & $0.451 \pm 0.020$ \\
     & 0.4 & $0.388 \pm 0.013$ & $0.403 \pm 0.007$ & $0.392 \pm 0.014$ & $0.394 \pm 0.013$ & $0.480 \pm 0.020$ & $0.434 \pm 0.021$ \\
     & 0.6 & $0.388 \pm 0.013$ & $0.403 \pm 0.007$ & $0.391 \pm 0.014$ & $0.385 \pm 0.016$ & $0.480 \pm 0.020$ & $0.420 \pm 0.021$ \\
     & 0.8 & $0.388 \pm 0.013$ & $0.404 \pm 0.007$ & $0.391 \pm 0.014$ & $0.380 \pm 0.017$ & $0.480 \pm 0.020$ & $0.412 \pm 0.021$ \\
     & 1.0 & $0.388 \pm 0.013$ & $0.405 \pm 0.007$ & $0.391 \pm 0.014$ & $0.376 \pm 0.018$ & $0.480 \pm 0.020$ & $0.405 \pm 0.021$ \\
    \midrule
    \multirow{6}{*}{Width CoV} & 0.0 & $0.365 \pm 0.009$ & -- & -- & $0.365 \pm 0.009$ & $\mathbf{0.668 \pm 0.039}$ & $\mathbf{0.668 \pm 0.039}$ \\
      & 0.2 & $0.365 \pm 0.009$ & $0.408 \pm 0.022$ & $0.263 \pm 0.030$ & $0.525 \pm 0.031$ & $0.668 \pm 0.039$ & $\mathbf{0.725 \pm 0.030}$ \\
      & 0.4 & $0.365 \pm 0.009$ & $0.418 \pm 0.022$ & $0.264 \pm 0.029$ & $0.606 \pm 0.022$ & $0.668 \pm 0.039$ & $\mathbf{0.766 \pm 0.031}$ \\
      & 0.6 & $0.365 \pm 0.009$ & $0.426 \pm 0.022$ & $0.263 \pm 0.028$ & $0.646 \pm 0.022$ & $0.668 \pm 0.039$ & $\mathbf{0.788 \pm 0.034}$ \\
      & 0.8 & $0.365 \pm 0.009$ & $0.436 \pm 0.024$ & $0.262 \pm 0.028$ & $0.670 \pm 0.019$ & $0.668 \pm 0.039$ & $\mathbf{0.801 \pm 0.035}$ \\
      & 1.0 & $0.365 \pm 0.009$ & $0.445 \pm 0.024$ & $0.262 \pm 0.027$ & $0.689 \pm 0.020$ & $0.668 \pm 0.039$ & $\mathbf{0.810 \pm 0.036}$ \\
    \bottomrule
  \end{tabular}%
  }
  \caption{Complete SGSC missing-feedback sweep across average coverage, tail coverage,
  average width, and Width CoV. Entries are mean $\pm$ sd over 30 random replications.}\label{tab:sgsc-feedback-metrics}
\end{table}

\paragraph{Informative non-MCAR feedback stress test.}
\label{app:nonmcar-feedback-exp}
The MCAR feedback sweeps above vary the amount of target feedback while keeping
the reveal process independent of the current target outcome.  To stress-test the
methods when this assumption fails, we also run an informative-feedback experiment
on all three real-data panels.  For each replication and each conformal timestamp
\(t\), we first compute a target-side difficulty score using the fixed burn-in
predictor,
\[
d_t=\frac{1}{|\mathcal I_t|}
\sum_{i\in\mathcal I_t}|Y_{i,t}-\widehat f(X_{i,t})|,
\]
where \(\mathcal I_t\) is the held-out test set observed at timestamp \(t\).  We
convert these difficulties to a centered rank score \(z_t\in[-1,1]\), with larger
values corresponding to more difficult target timestamps.  Let
\(\sigma(u)=\{1+\exp(-u)\}^{-1}\) denote the logistic sigmoid.  We reveal
target feedback with probability
\[
p_t^{\mathrm{hard}}=\sigma(2z_t),
\qquad
p_t^{\mathrm{easy}}=\sigma(-2z_t).
\]
The hard-visible mechanism therefore makes high-residual timestamps more likely
to be revealed, while the easy-visible mechanism makes them more likely to be
hidden.  Because \(p_t\) is a function of the current target outcome through
\(d_t\), both mechanisms deliberately violate Assumption~\ref{ass:mcar}.  The
average reveal probability is \(0.5\) by symmetry of the rank scores, but the
revealed subset is informative: across 30 replications, the average correlation
between \(p_t\) and \(z_t\) is approximately \(+0.9975\) for hard-visible feedback
and \(-0.9975\) for easy-visible feedback on every real-data panel.

Table~\ref{tab:nonmcar-informative-feedback} reports tail coverage under the two
informative feedback mechanisms.  W-TQA remains the strongest tail-coverage method
in all six panel--mechanism combinations.  Its average coverage also stays close
to the nominal target: \(0.911/0.913\) on HF, \(0.907/0.913\) on M5, and
\(0.918/0.920\) on SGSC for easy-visible/hard-visible feedback, respectively.
The corresponding average widths are \(0.0241/0.0244\), \(1.862/1.891\), and
\(0.419/0.432\), on the native scales of the three panels.  Thus the non-MCAR
experiment does not provide an all-round validity theorem beyond the
selection-bias decomposition in Appendix~\ref{app:selection_bias}, but it shows
that the combined spatial--temporal update is empirically stable even when the
feedback process is outcome-informative.

\begin{table}[H]
  \centering
  \scriptsize
  \setlength{\tabcolsep}{3pt}
  \resizebox{\linewidth}{!}{%
  \begin{tabular}{llrrrrrrr}
    \toprule
    Panel & Mechanism & Corr\((p_t,z_t)\) & Split CP & TQA-B & LPCI & TQA-only & W-only & W-TQA \\
    \midrule
    HF hourly & easy-visible & $-0.9975$ & $0.640 \pm 0.039$ & $0.682 \pm 0.026$ & $0.701 \pm 0.035$ & $0.840 \pm 0.013$ & $0.768 \pm 0.022$ & $\mathbf{0.870 \pm 0.005}$ \\
    HF hourly & hard-visible & $0.9975$ & $0.640 \pm 0.039$ & $0.767 \pm 0.014$ & $0.675 \pm 0.035$ & $0.846 \pm 0.012$ & $0.768 \pm 0.022$ & $\mathbf{0.871 \pm 0.005}$ \\
    M5 & easy-visible & $-0.9975$ & $0.754 \pm 0.019$ & $0.802 \pm 0.011$ & $0.703 \pm 0.019$ & $0.857 \pm 0.004$ & $0.792 \pm 0.014$ & $\mathbf{0.868 \pm 0.003}$ \\
    M5 & hard-visible & $0.9975$ & $0.754 \pm 0.019$ & $0.816 \pm 0.009$ & $0.703 \pm 0.019$ & $0.872 \pm 0.005$ & $0.792 \pm 0.014$ & $\mathbf{0.881 \pm 0.003}$ \\
    SGSC & easy-visible & $-0.9975$ & $0.672 \pm 0.038$ & $0.740 \pm 0.019$ & $0.618 \pm 0.040$ & $0.852 \pm 0.007$ & $0.767 \pm 0.022$ & $\mathbf{0.865 \pm 0.006}$ \\
    SGSC & hard-visible & $0.9975$ & $0.672 \pm 0.038$ & $0.784 \pm 0.011$ & $0.618 \pm 0.044$ & $0.861 \pm 0.007$ & $0.767 \pm 0.022$ & $\mathbf{0.871 \pm 0.007}$ \\
    \bottomrule
  \end{tabular}%
  }
  \caption{Tail coverage under informative non-MCAR feedback.  The reveal
  propensity is a logistic function of the rank-normalized target residual
  difficulty \(z_t\), so the feedback process is outcome-informative by design.
  Entries are mean $\pm$ sd over 30 random replications.}\label{tab:nonmcar-informative-feedback}
\end{table}

\FloatBarrier

%% file: parameter_robustness_appendix.tex
\raggedbottom

\subsection{Robustness to Parameter Choices}\label{app:parameter-robustness}

\paragraph{Sensitivity-slice design.}
We test whether the real-data conclusions depend on a finely tuned kernel
bandwidth $h$ or temporal stepsize $\gamma$.  The sweep fixes the default used in
the main experiments at $(h,\gamma)=(0.60,0.010)$ and uses the predeclared grids
$h\in\{0.30,0.45,0.60,0.90,1.20\}$ and
$\gamma\in\{0.005,0.010,0.020,0.040,0.080\}$.
Each grid setting is evaluated over 30 random replications.
We report two representative real panels, the HF hourly stock panel and the
SGSC electricity panel, under both full feedback ($p=1$) and sparse MCAR
target feedback ($p=0.2$).  The $h$ slices fix $\gamma=0.010$ and compare
W-TQA with W-only, isolating the effect of adding temporal adaptation to the
weighted calibration pool.  The $\gamma$ slices fix $h=0.60$ and compare W-TQA
with TQA-only, isolating the effect of adding spatial weighting to temporal
adaptation.

\paragraph{Results.}
Figures~\ref{fig:param_full} and~\ref{fig:param_sparse} plot mean
tail coverage over the 30 random replications. The main conclusions
do not depend on a knife-edge choice of \((h,\gamma)\): near the default, W-TQA
consistently improves over the corresponding one-component ablation, and the
curves vary smoothly rather than exhibiting an isolated spike. The bandwidth
slices also show that W-only is more sensitive to \(h\): at very small bandwidths
it can match or exceed W-TQA, but this reflects aggressive localization rather
than a stable improvement across the grid. W-TQA is more stable across bandwidths,
consistent with the temporal branch buffering the spatial branch. The same
pattern holds under sparse feedback (\(p=0.2\)).

\begin{figure}[H]
  \centering
  \includegraphics[width=0.90\linewidth]{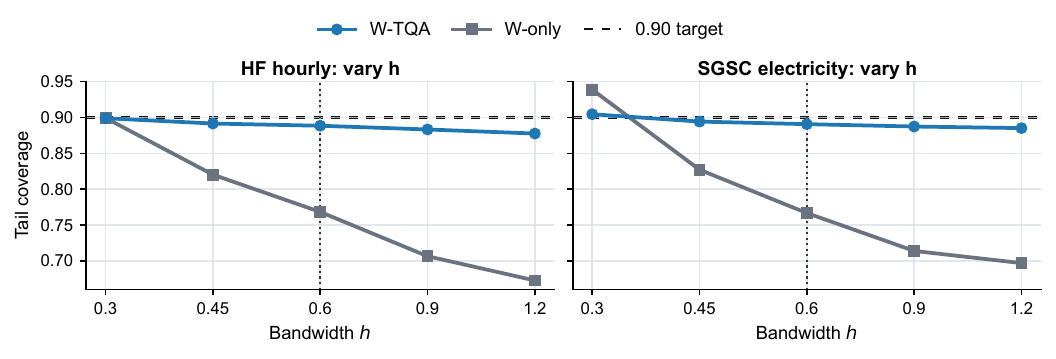}
  \vspace{-0.8em}
  \includegraphics[width=0.90\linewidth]{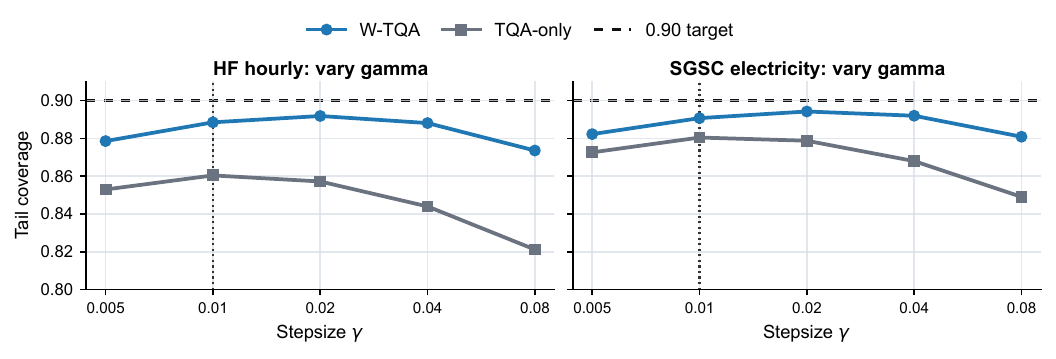}
  \caption{Parameter-sensitivity slices under full target feedback ($p=1$).
  Each panel reports mean tail coverage over 30 random replications.  The dotted
  vertical line marks the default used in the main experiments; the dashed
  horizontal line marks the nominal $0.90$ target.}\label{fig:param_full}
\end{figure}

\begin{figure}[H]
  \centering
  \includegraphics[width=0.90\linewidth]{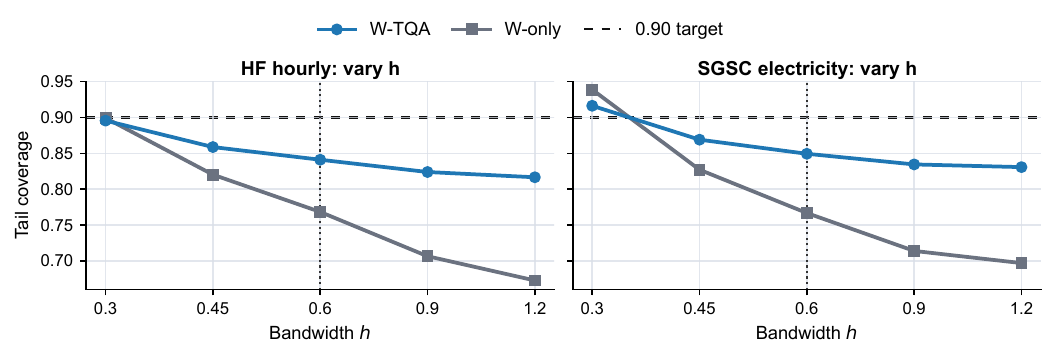}
  \vspace{-0.8em}
  \includegraphics[width=0.90\linewidth]{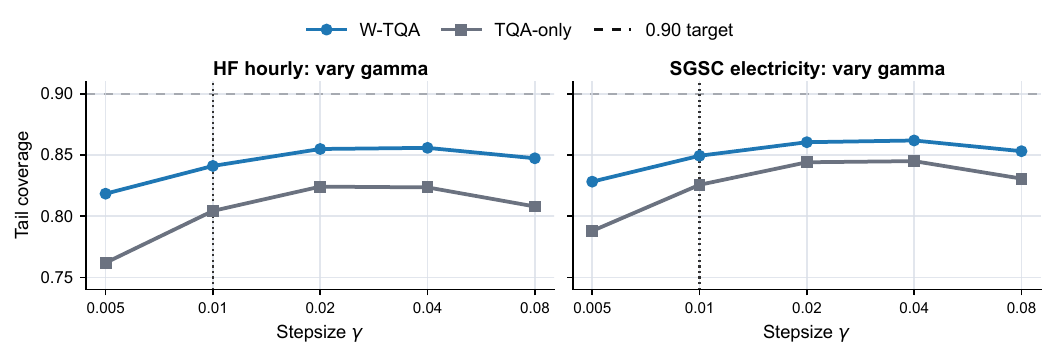}
  \caption{Parameter-sensitivity slices under sparse target feedback ($p=0.2$).
  The design is the same as in Figure~\ref{fig:param_full}, with a
  shared timestamp-level MCAR feedback schedule in each replication.}\label{fig:param_sparse}
\end{figure}

\FloatBarrier

\subsection{Compute Resources}\label{app:compute-resources}

All experiments reported in the paper were run on CPU workers only; no GPU was
used.  The W-TQA, ablation, and feedback-sweep experiments are lightweight: a
complete real-data panel sweep over 30 repeated experiments and the feedback
probability grid runs on the order of minutes to tens of minutes on a local
workstation.  The parameter-sensitivity slices have the same order of memory use
and can be reproduced at workstation scale.  The main heavier component is the
LPCI baseline in the synthetic study (about 6 hours in our runs); extrapolating
this cost to the longer real-data feedback grid suggests that a full real-data
sweep with full LPCI would require on the order of tens of workstation hours.
For the longer real panels, we therefore use the lighter LPCI implementation
described in Appendix~\ref{app:real-data-panels}.

\flushbottom